\def\input@path{{packages-modified/}}
\DeclareMathOperator*{\argmax}{arg\,max}
\DeclareMathOperator*{\argmin}{arg\,min}
\newcommand{\fig}[1]{Figure~\ref{#1}}
\newcommand{\myparagraph}[1]{{\textbf{#1}}\quad}
\DeclarePairedDelimiterX{\infdivx}[2]{(}{)}{%
  #1\;\delimsize|\delimsize|\;#2%
}
\newcommand{\method}{Diffuser\xspace}
\definecolor{cred}{HTML}{D62728}
\definecolor{cblue}{HTML}{1F77B4}
\definecolor{cgreen}{HTML}{79AB76}
\definecolor{cgrey}{rgb}{0.6,0.6,0.6}
\definecolor{gamma_red}{HTML}{D62728}
\definecolor{gamma_blue}{HTML}{1F77B4}
\definecolor{highlight}{rgb}{0,0,0}
\newcommand{\st}{\mathbf{s}_t}
\newcommand{\stp}{\mathbf{s}_{t+1}}
\newcommand{\at}{\mathbf{a}_t}
\newcommand{\rt}{\mathbf{r}_t}
\newcommand{\atp}{\mathbf{a}_{t+1}}
\newcommand{\se}{\mathbf{s}_{e}}
\renewcommand{\bs}{\mathbf{s}}
\newcommand{\ba}{\mathbf{a}}
\newcommand{\btau}[1]{\bm{\tau}^{#1}}
\newcommand{\opt}{\mathcal{O}}
\newcommand{\expect}[2]{\mathbb{E}_{#1} \left[ #2 \right] }
\newcommand{\fdiv}[2]{D_f({#1} \mid\mid {#2})}
\newcommand{\te}[1]{\texttt{#1}}
\def\vs{{\bm{s}}}
\newcommand{\bg}{\bm{g}}
\newcommand{\highlight}[1]{\textbf{#1}}
\newcommand\blfootnote[1]{%
  \begingroup
  \renewcommand\thefootnote{}\footnote{#1}%
  \addtocounter{footnote}{-1}%
  \endgroup
}
\newcommand{\ptd}[1]{p_\text{targ}({#1} \mid \st, \at)}
\newcommand{\disc}{D_\phi}
\newcommand{\gammam}{\gamma}
\newcommand{\gammav}{\tilde{\gamma}}
\newcommand{\occupancy}{\mu}
\newcommand{\boldgamma}{\boldsymbol{\gamma}}
\newcommand{\psingle}{p}
\newcommand{\deltat}{\Delta t}
\newcommand{\model}{\mu_\theta}
\newcommand{\targmodel}{\mu_{\widebar{\theta}}}
\newcommand{\grad}{\nabla}
\newcommand{\indicator}[1]{\mathbbm{1}\left[#1\right]}
\newtheorem{thm}{Theorem}
\newtheorem{prop}{Proposition}
\newtheorem{lem}{Lemma}
\newcommand{\dd}[1]{\mathrm{d}#1}
\newcommand{\widebar}[1]{\bar{#1}}
\let\ltxcup\cup
\newcommand{\chaptertitle}{}
\let\cite\citep
\definecolor{mydarkblue}{rgb}{0,0.08,0.45}
\begin{document}

\pagestyle{headings}


\titleformat{\chapter}[display]
    {\Huge\bfseries\centering}
    {
        \raisebox{0.175em}{\rule{0.45\linewidth}{3pt}}
        \parbox[t]{0.05\linewidth}{\centering\Huge\thechapter}
        \raisebox{0.175em}{\rule{0.46\linewidth}{3pt}}
    }
    {-1.1em}
    {}
    [\vspace{0.65em}{\titlerule[1pt]}\vspace{-.25em}]


\pagestyle{fancy}
\fancyhead{}
\fancyfoot{}



\fancyhead[C]{\small{\textsc{\textls[100]{\chaptertitle}}\hspace{1.97cm}}}
\fancyfoot[C]{\small\thepage\hspace{1.94cm}}

\renewcommand{\headrule}{}


\title{Deep Generative Models for Decision-Making and Control}
\author{Michael Janner}
\degreesemester{Spring}
\degreeyear{2023}
\degree{Doctor of Philosophy}
\chair{Professor Sergey Levine}
\othermembers{
    Professor Anca Dragan \\
    Professor Jacob Steinhardt \\
    Professor Karthik Narasimhan
}
\numberofmembers{4}
\field{Computer Science}

\maketitle

\copyrightpage

\newpage
\begin{abstract}

Deep model-based reinforcement learning methods offer a conceptually simple approach to the decision-making and control problem: use learning for the purpose of estimating an approximate dynamics model, and offload the rest of the work to classical trajectory optimization.
However, this combination has a number of empirical shortcomings, limiting the usefulness of model-based methods in practice.
The dual purpose of this thesis is to study the reasons for these shortcomings and to propose solutions for the uncovered problems.
We begin by generalizing the dynamics model itself, replacing the standard single-step formulation with a model that predicts over probabilistic latent horizons.
The resulting model, trained with a generative reinterpretation of temporal difference learning, leads to infinite-horizon variants of the procedures central to model-based control, including the model rollout and model-based value estimation.

Next, we show that poor predictive accuracy of commonly-used deep dynamics models is a major bottleneck to effective planning, and describe how to use high-capacity sequence models to overcome this limitation.
Framing reinforcement learning as sequence modeling simplifies a range of design decisions, allowing us to dispense with many of the components normally integral to reinforcement learning algorithms.
However, despite their predictive accuracy, such sequence models are limited by the search algorithms in which they are embedded.
As such, we demonstrate how to fold the entire trajectory optimization pipeline into the generative model itself, such that sampling from the model and planning with it become nearly identical.
The culmination of this endeavor is a method that improves its planning capabilities, and not just its predictive accuracy, with more data and experience.
Along the way, we highlight how inference techniques from the contemporary generative modeling toolbox, including beam search, classifier-guided sampling, and image inpainting, can be reinterpreted as viable planning strategies for reinforcement learning problems.

\end{abstract}

\newpage

\begin{frontmatter}



\tableofcontents
\clearpage
\listoffigures
\clearpage
\listoftables

\begin{acknowledgements}
    I am luckier than I deserve in having so many people to thank.

    First and foremost, I thank my advisor, Sergey Levine, for sharpening my thinking and helping me develop a research taste during the past five years.
    Sergey has a remarkable ability to always find time to chat about research regardless of how busy he is, and my work benefitted immensely from those many conversations.
    I am also grateful to my committee, Anca Dragan, Jacob Steinhardt, and Karthik Narasimhan, for their feedback and perspectives that have improved this dissertation.

    I decided to pursue a PhD in artificial intelligence due to an overwhelmingly positive research experience as an undergrad at MIT.
    Working with Josh Tenenbaum, Regina Barzilay, and Bill Freeman as well as their (at the time) students and postdocs Tejas Kulkarni, Karthik Narasimhan, Jiajun Wu, Ilker Yildirim, Pedro Tsividis, and Max Kleiman-Weiner was the best introduction to the field imaginable.

    Before starting at MIT, I spent a few formative years in Yadong Yin's materials science group at UC Riverside.
    Here I met my first research mentors: Qiao Zhang, Le He, and Mingsheng Wang.
    Looking back, I only grow more impressed at their patience and generosity.
    I am sure they had much more pressing things to attend to than showing a high school student the research ropes, but they did so anyway.
    It is because of Yadong and his students that I am a scientist today.

    Over the course of my PhD, I have been fortunate to work with many collaborators.
    Igor Mordatch, Colin Li, Yilun Du, Kevin Black, Chelsea Finn, Justin Fu, JD Co-Reyes, Rishi Veerapaneni, Katie Kang, Ilya Kostrikov, Philippe Hansen-Estruch, Zhengyao Jiang, Tianjun Zhang, Yueying Li, and Yuandong Tian have all taught me so much and expanded my research horizons.

    The Berkeley AI Research lab was a wonderful place to study.
    Aviral Kumar, Young Geng, Dibya Ghosh, Laura Smith, Philip Ball, Manan Tomar, Oleh Rybkin, Marwa Abdulhai, Kuba Grudzien, Dhruv Shah, Charlie Snell, Homer Walke, Simon Zhai, Coline Devin, Abhishek Gupta, Anusha Nagabandi, Natasha Jaques, Dinesh Jarayaman, Rowan McAllister, Vitchyr Pong, Kelvin Xu, Amy Zhang, Glen Berseth, Aurick Zhou, Avi Singh, Ashvin Nair, Allan Jabri, Sasha Sax, Vickie Ye, and the rest of the BAIR community made me look forward to coming into the lab every day.
    I am especially grateful to Michael Chang, for being my sounding board on nearly everything, and Marvin Zhang, for keeping me sane during a pandemic. 

    During the summer before my final year at Berkeley, I took a research detour and worked on reinforcement learning for program synthesis applications at Google.
    I would like to thank my host Alex Polozov, for a gracious introduction to a new subfield, as well as my residency collaborators Rishabh Singh, Charles Sutton, Abhishek Rao, Jacob Austin, David Bieber, Kensen Shi, Aitor Lewkowycz, and Vedant Misra. I also owe much to Michele Catasta and Kefan Xiao for their holistic mentorship.

    My research was generously supported by Open Philanthropy, a group of some of the most thoughtful people I have met.
    Daniel Dewey and Catherine Olsson, the early program managers of the AI Fellowship, were instrumental in pushing me to think about the larger impacts of my work.

    Finally, I would like to thank my parents, for their unwavering support from the beginning, and Emma, for more than I know how to express.




\end{acknowledgements}

\end{frontmatter}

\chapter{Introduction}
\renewcommand{\chaptertitle}{Introduction}
\label{ch:introduction}

This thesis examines one of the simplest conceivable strategies for data-driven decision-making and robotic control problems.
The abstract procedure consists of two interleaved steps:
\begin{enumerate}
\item Use data to fit a parametric model used to predict the future given the past.
\item Use the model to predict the outcomes of a candidate set of action sequences, selecting that which produces the most desirable result.
\end{enumerate}
\vspace{-.5em}
This high-level description outlines a type of model-predictive control algorithm that uses ``planning in the now" \cite{kaelbling2011hierarchical,hasselt2019parametric}, meaning that the model is used to predict into the future while making decisions as opposed to other ways of using model-generated data.
It leaves much to be specified:
how does one choose the candidate actions for evaluation?
How should the model be structured?
What constitutes useful data?

Regardless, this specification is already sufficient to suggest why it might be a good approach.
Step 1 amounts to supervised learning, which now often works reliably given enough data and high-capacity function approximators like neural networks \cite{krizhevsky2012imagenet,zhang2017understanding,kaplan2020scaling}.
In control contexts, step 2 can (in principle) be offloaded to trajectory optimization algorithms, which have been the subject of much study and are similarly well-understood in their original context when the ground-truth dynamics are known \cite{diehl2009efficient,tassa2012synthesis,matthew2017trajectory}.
It would appear that this approach combines two fairly reliable puzzle pieces.

Moreover, the separation between model-learning and decision-making has a number of appealing properties.
Most obviously, it allows for reuse of the learned model, allowing it to be deployed for a variety of tasks in the same setting.
This level of reuse is not as straightforward with model-free approaches because the reward function cannot be separated from the implicit dynamics knowledge encoded in a learned policy or value function.
This property also allows for a model to be trained from data that is not explicitly labeled with rewards, which can be useful in situations where rewards are difficult to define but experience is plentiful.
Empirically, dynamics models are found to be easier to train than value functions, allowing for better sample efficiency and generalization of learned models \cite{janner2019mbpo}; this can be viewed as a consequence of either differences between the types of algorithms used to train value functions versus dynamics models \cite{kumar2022dr3} or the simplicity of the dynamics itself relative to the optimal value function \cite{dong2020expressivity}.
Finally, this separation provides a convenient way to interpret the learned model: for any decision a planning routine produces, one can inspect the model-expected outcome that caused that decision to be selected.

Unfortunately, employing this strategy is not as straightforward as it might seem, nor do these purported benefits always translate to practice.
While there have been successful demonstrations of the combination \cite{chua2018pets,argenson2020model}, it is surprisingly difficult to extract a set of design principles from these successes that allow for the approach to be effectively applied to new problems without extensive problem-specific tuning.
As a result, the contemporary frontier of deep model-based reinforcement learning consists largely of algorithms that pull extensively from the model-\emph{free} reinforcement learning toolbox.
By contrast, conventional planning in the now with deep neural networks is rare.

This state of affairs should be surprising.
The dual purpose of the following chapters is to explain why this is the case and to suggest a way forward.
After a brief description of the problem setting and review of technical background in Chapter~\ref{ch:preliminaries}, we proceed to three primary ideas:
\begin{itemize}
    \item In Chapter~\ref{ch:gamma}, we reconsider the role of state-space prediction in reinforcement learning.
        The result is a model that amortizes the work of prediction during training time, much like a value function, as opposed to relying on model-based rollouts.
        As a result, the model can predict over infinite probabilistic horizons without sequential rollouts, blurring the line between model-based and model-free mechanisms.
        This investigation underscores a particular drawback: representing high-dimensional joint distributions over future trajectories is a difficult generative modeling problem.
    \item In Chapter~\ref{ch:transformer}, we ask whether the quality of the predictive model is the bottleneck.
        We appeal to recent successes in generative modeling and replace the conventional single-step dynamics model with a long-horizon Transformer.
        In the process, we show how to reinterpret algorithms from the sequence modeling toolbox as viable planning algorithms.
    \item Transformers largely address the predictive quality bottleneck, but are still limited by the quality of the planning routine in which they are embedded.
        In Chapter~\ref{ch:diffuser}, we discuss a way of incorporating both the prediction and the planning into a generative model, such that the line between sampling from the model and planning with it becomes blurred.
        The end result is a method that improves its planning capabilities, and not just its predictive accuracy, with more data and experience.
\end{itemize}
We conclude by discussing the lessons learned from these investigations and their implications for future model-based reinforcement learning algorithms in Chapter~\ref{ch:conclusion}.





\chapter{Preliminaries}
\renewcommand{\chaptertitle}{Preliminaries}
\label{ch:preliminaries}

\newcommand{\delt}{\Delta t}

This brief chapter introduces the problem setting studied by this thesis and defines notation.

\paragraph{The reinforcement learning problem.}
We consider an infinite-horizon Markov decision process (MDP) defined by the tuple $(\mathcal{S}, \mathcal{A}, p, r, \gamma, \rho_0)$, with state space $\mathcal{S}$ and action space $\mathcal{A}$.
The transition distribution and reward function are given by $p: \mathcal{S} \times \mathcal{A} \times \mathcal{S} \to \mathbb{R}^{+}$ and $r : \mathcal{S} \to \mathbb{R}$, respectively.
The discount is denoted by $\gamma \in [0,1)$ and the initial state distribution by $\rho_0 : \mathcal{S} \to \mathbb{R}^{+}$.
A policy $\pi : \mathcal{S} \times \mathcal{A} \to \mathbb{R}^{+}$ describes the distribution over actions taken at a particular state.
The goal of the reinforcement learning problem is to find the optimal policy $\pi^*$ that maximizes the expected sum of discounted rewards:

\begin{equation}
\label{eq:rl_objective}
\pi^*
        = \argmax_{\pi}\
        \,\mathbb{E}_{\pi}\left[\sum_{t=0}^\infty \gamma^{t} r(\st, \at)\right]
        \,.
\end{equation} 

\paragraph{The discounted occupancy.}
A policy $\pi$ induces a conditional occupancy $\occupancy^\pi(\bs \mid \st, \at)$ over future states:

\begin{equation}
\label{eq:occupancy}
    \occupancy^\pi(\bs \mid \st, \at) = (1-\gamma) \sum_{\delt=1}^{\infty} \gammam^{\delt-1} p(\mathbf{s}_{t+\delt} = \bs \mid \st, \at, \pi).
\end{equation}

The discounted occupancy is a distribution over states encountered by the policy when using a geometric weighting over future timesteps, analogous to the geometric weighting of the reinforcement learning objective in Equation~\ref{eq:rl_objective}.
Unlike the single-step transition distribution $p$, the discounted occupancy is policy-conditioned because it marginalizes over future action distributions.
When $\gamma=0$, the discounted occupancy $\mu^\pi$ becomes policy-agnostic and identical to the single-step transition distribution $p$.
For brevity, we omit the policy superscript $\pi$ in the discounted occupancy $\mu^\pi$ when it is otherwise clear from context.

The optimization objective Equation~\ref{eq:rl_objective} can be reformulated as the expected reward over the policy-conditioned discounted occupancy:
\begin{equation}
    \mathbb{E}_{\pi}\left[\sum_{t=0}^\infty \gamma^{t} r(\st, \at)\right] =
        \mathbb{E}_{
            \substack{
                \bs_0 \sim \rho_0(\cdot) \\
                \st \sim \occupancy(\cdot \mid \bs_0, \ba_0) \\
                \at \sim \pi(\cdot \mid \st)
            }
        } \left[ r(\st, \at) \right]
\end{equation}

\paragraph{Function approximation.}
In the reinforcement learning problem, we assume only the ability to interact in the environment, which provides data streams in the form of trajectories.
We do not assume query access to the functional form of, for example, the transition distribution $p$ or reward function $r$.
Instead, if these are used by an algorithm, they must be approximated from data.
We denote parametric approximations of $p$ (or $\occupancy$) as $p_\theta$ (or $\model$), in which the subscripts denote model parameters.

\paragraph{Trajectory optimization.}
Trajectory optimization \citep{witkin1988spacetime, tassa2012synthesis} refers to finding a sequence of actions $\ba_{0:T}^*$ that maximizes (or minimizes) an objective $\mathcal{J}$ factorized over per-timestep rewards (or costs) $r(\st, \at)$.

\begin{equation}
    \label{eq:trajopt_objective}
    \ba_{0:T}^* = \argmax_{\ba_{0:T}} \mathcal{J}(\bs_0, \ba_{0:T}) = \argmax_{\ba_{0:T}} \sum_{t=0}^{T} r(\st, \at)
\end{equation}

where $T$ is the planning horizon.
We use the abbreviation $\btau{} = (\bs_0, \ba_0, \bs_1, \ba_1, \ldots, \bs_T, \ba_T)$ to refer to a trajectory of interleaved states and actions and $\mathcal{J}(\btau{})$ to denote the objective value of that trajectory.

This problem statement is similar to that in Equation~\ref{eq:rl_objective}, and it is commonplace to use trajectory optimization algorithms to address problems formulated in the language of reinforcement learning.
However, there are two differences worth discussing.
The most apparent distinction is that the trajectory optimization objective in Equation~\ref{eq:trajopt_objective} considers a finite-horizon decision-making problem, though in practice the use of terminal value functions trained via reinforcement learning can allow for the consideration of infinite horizons as well.
The second distinction is that the optimization variables are of a different data type: instead of functions that output actions, they are now primitive actions themselves.
Optimizing over actions directly allows for the representation of non-Markovian policies.
For example, the solution to Equation~\ref{eq:trajopt_objective} could prescribe a different action to be taken every time a particular state is encountered; the solution to Equation~\ref{eq:rl_objective} could not without changing the definition of a policy.

\chapter{Infinite-Horizon Prediction}
\renewcommand{\chaptertitle}{Infinite-Horizon Prediction}
\label{ch:gamma}

\section{Introduction}
\label{sec:introduction}

The common ingredient in all of model-based reinforcement learning is the dynamics model: a function used for predicting future states.
The choice of the model's prediction horizon constitutes a delicate trade-off.
Shorter horizons make the prediction problem easier, as the near-term future increasingly begins to look like the present, but may not provide sufficient information for decision-making.
Longer horizons carry more information, but present a more difficult prediction problem, as errors accumulate rapidly when a model is applied to its own previous outputs
\citep{talvitie2014hallcuinated}.

Can we avoid choosing a prediction horizon altogether?
Value functions already do so by modeling the cumulative return over a discounted long-term future instead of an immediate reward, circumventing the need to commit to any single finite horizon.
However, value prediction folds two problems into one by entangling environment dynamics with reward structure, making value functions less readily adaptable to new tasks in known settings than their model-based counterparts.

In this chapter,
we propose a model that predicts over an infinite horizon with a geometrically-distributed timestep weighting (Figure~\ref{fig:teaser}).
This $\gamma$-model, named for the dependence of its probabilistic horizon on a discount factor $\gamma$, is trained with a generative analogue of temporal difference learning suitable for continuous spaces.
The $\gamma$-model bridges the gap between canonical model-based and model-free mechanisms.
Like a value function, it is policy-conditioned and contains information about the distant future; like a conventional dynamics model, it is independent of reward and may be reused for new tasks within the same environment.
The $\gamma$-model may be instantiated as both a generative adversarial network \citep{goodfellow2014gan} and a normalizing flow \citep{rezende15variational}.

The shift from standard single-step models to infinite-horizon $\gamma$-models carries several advantages:

\textbf{Constant-time prediction~~~~} Single-step models must perform an $\mathcal{O}(n)$ operation to predict $n$ steps into the future; $\gamma$-models amortize the work of predicting over extended horizons during training such that long-horizon prediction occurs with a single feedforward pass of the model.

\textbf{Generalized rollouts and value estimation~~~~}
Probabilistic prediction horizons lead to generalizations of the core procedures of model-based reinforcement learning.
For example, generalized rollouts allow for fine-grained interpolation between training-time and testing-time compounding error.
Similarly, terminal value functions appended to truncated $\gamma$-model rollouts allow for a gradual transition between model-based and model-free value estimation.

\textbf{Omission of unnecessary information~~~~}
The predictions of a $\gamma$-model do not come paired with an associated timestep. While on the surface a limitation, we show why knowing precisely {\emph{when}} a state will be encountered is not necessary for decision-making.
Infinite-horizon $\gamma$-model prediction selectively discards the unnecessary information from a standard model-based rollout.

\begin{figure}
\label{fig:teaser}
    \vspace{0.5cm}
    \begin{flushleft}
        $~\textbf{single-step model}\!: \Delta t = 1
            \hspace{3.8cm}
        \boldsymbol{\gamma}\textbf{-model}\!: \Delta t \sim \text{Geom}(1-\gamma)$
    \end{flushleft}
    \includegraphics[width=\linewidth]{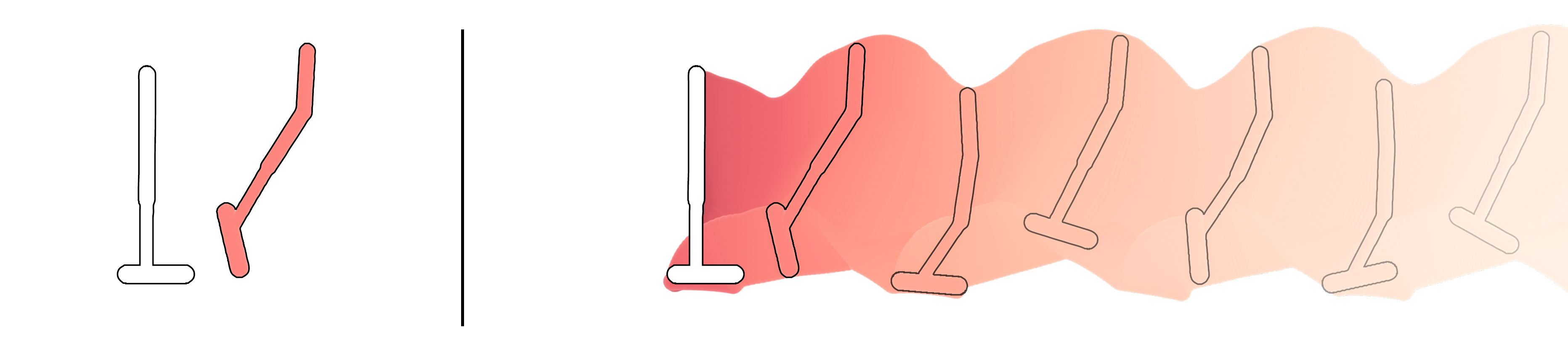}
    \vspace{-1cm}
    \begin{flushleft}
        current state \hspace{.1cm} \!\!{\color{gamma_red}prediction}
    \end{flushleft}
    \caption{
        \textbf{(Prediction with probabilistic horizons)}
        Conventional predictive models trained via maximum likelihood have a horizon of one. By interpreting temporal difference learning as a training algorithm for generative models, it is possible to predict with a probabilistic horizon governed by a geometric distribution.
        In the spirit of infinite-horizon control in model-free reinforcement learning, we refer to this formulation as infinite-horizon prediction.
    }
    \label{fig:teaser}
\end{figure}

\section{Related Work}
\label{sec:related}

The complementary strengths and weaknesses of model-based and model-free reinforcement learning have led to a number of works that attempt to combine these approaches.
Common strategies include initializing a model-free algorithm with the solution found by a model-based planner \citep{levine2013gps,farshidian2014learning,nagabandi2018mbmf},
feeding model-generated data into an otherwise model-free optimizer \citep{sutton1990dyna,silver2008dyna2,lampe2014modelnfq,kalweit2017uncertainty,luo2018algorithmic}, using model predictions to improve the quality of target values for temporal difference learning \citep{buckman2018steve,feinberg2018mve},
leveraging model gradients for backpropagation \citep{nguyen1990neural,jordan1992forward,heess2015svg}, and incorporating model-based planning without explicitly predicting future observations \citep{tamar2016vin,silver2017predictron,oh2017vpn,kahn2018gcg,amos2018differentiable,schrittwieser2019muzero}.
In contrast to combining independent model-free and model-based components, we describe a framework for training a new class of predictive model with a generative, model-based reinterpretation of model-free tools.

Temporal difference models (TDMs) \cite{pong2018tdms} provide an alternative method of training models with what are normally considered to be model-free algorithms.
TDMs interpret models as a special case of goal-conditioned value functions \citep{kaelbling1993learning,foster2002structure,schaul2015uvfa,andrychowicz2017hindsight}, though the TDM is constrained to predict at a fixed horizon and is limited to tasks for which the reward depends only on the last state.
In contrast, the $\gamma$-model predicts over a discounted infinite-horizon future and accommodates arbitrary rewards.

The most closely related prior work to $\gamma$-models is the successor representation \citep{dayan1993successor}, a formulation of long-horizon prediction that has been influential in both cognitive science \citep{momennejad2017successor,gershman2018successor} and machine learning \citep{kulkarni2016deep,ma2018universal}.
In its original form, the successor representation is tractable only in tabular domains.
Prior continuous variants have focused on policy evaluation based on expected state featurizations \citep{barreto2017successor,barreto2018transfer,hansen2020fast}, forgoing an interpretation as a probabilistic model suitable for state prediction.
Converting the tabular successor representation into a continuous generative model is non-trivial because the successor representation implicitly assumes the ability to normalize over a finite state space for interpretation as a predictive model.

Because of the discounted state occupancy's central role in reinforcement learning, its approximation by Bellman equations has been the focus of multiple lines of work.
\sloppy
Generalizations include $\beta$-models \citep{sutton1995beta}, allowing for arbitrary mixture distributions over time, and option models \citep{sutton1999semimdps}, allowing for state-dependent termination conditions.
While our focus is on generative models featuring the state-independent geometric timestep weighting of the successor representation, we are hopeful that the tools developed in this paper could also be applicable in the design of continuous analogues of these generalizations.

\section{Generative Temporal Difference Learning}

Our goal is to make long-horizon predictions without the need to repeatedly apply a single-step model.
Instead of modeling states at a particular instant in time by approximating the environment transition distribution $\psingle(\stp \mid \st, \at)$, we aim to predict a weighted distribution over all possible future states according to $\occupancy(\bs \mid \st, \at)$.
In principle, this can be posed as a conventional maximum likelihood problem:

\begin{equation*}
\max_{\theta}~ \expect{
    \st,\at,\bs \sim \mu(\cdot \mid \st, \at)
}{
    \log \model(\bs \mid \st, \at)
}.
\end{equation*}

However, doing so would require collecting samples from the occupancy $\occupancy$ independently for each policy of interest.
Forgoing the ability to re-use data from multiple policies when training dynamics models would sacrifice the sample efficiency that often makes model usage compelling in the first place, so we instead aim to design an off-policy algorithm for training $\model$.
We accomplish this by reinterpreting temporal difference learning as a method for training generative models.

Instead of collecting only on-policy samples from $\occupancy(\bs \mid \st, \at)$, we observe that $\occupancy$ admits a convenient recursive form.
Consider a modified MDP in which there is a $1-\gamma$ probability of terminating at each timestep.
The distribution over the state at termination, denoted as the exit state $\se$, corresponds to first sampling from a termination timestep $\deltat \sim \text{Geom}(1-\gamma)$ and then sampling from the per-timestep distribution $p(\bs_{t+\deltat} \mid \st, \at, \pi)$.
The distribution over $\se$ corresponds exactly to that in the definition of the occupancy $\occupancy$ in Equation~\ref{eq:occupancy}, but also lends itself to an interpretation as a mixture over only two components: the distribution at the immediate next timestep, in the event of termination, and that over all subsequent timesteps, in the event of non-termination.
This mixture yields the following target distribution:
\begin{equation}
\label{eq:gamma_targp}
    \ptd{\se} = \underset{\text{single-step distribution}}{\underbrace{(1-\gamma) p(\se \mid \st, \at)}} + \underset{\text{model bootstrap}}{\underbrace{\gamma \expect{\stp \sim p(\cdot \mid \st, \at)}{\model(\se \mid \stp)}}}.
\end{equation}
We use the shorthand $\model(\se \mid \stp) = \expect{\atp \sim \pi(\cdot \mid \stp)}{\model(\se \mid \stp, \atp)}$.
The target distribution $p_\text{targ}$ is reminiscent of a temporal difference target value: the state-action conditioned occupancy ${\model(\se \mid \st, \at)}$ acts as a $Q$-function, the state-conditioned occupancy $\model(\se \mid \stp)$ acts as a value function, and the single-step distribution $\psingle(\stp \mid \st, \at)$ acts as a reward function.
However, instead of representing a scalar target value, $p_\text{targ}$ is a distribution from which we may sample future states $\se$.
We can use this target distribution in place of samples from the true discounted occupancy $\occupancy$:
\begin{equation*}
\max_{\theta}~ \expect{
    \st, \at, \se \sim (1-\gamma)p(\cdot \mid \st, \at) + \gamma \expect{}{\model(\cdot \mid \stp)}
}{
    \log \model(\se \mid \st, \at)
}.
\end{equation*}

This formulation differs from a standard maximum likelihood learning problem in that the target distribution depends on the current model.
By bootstrapping the target distribution in this manner, we are able to use only empirical $(\st, \at, \stp)$ transitions from one policy in order to train an infinite-horizon predictive model $\model$ for any other policy.
Because the horizon is governed by the discount $\gamma$, we refer to such a model as a $\gamma$-model.

This bootstrapped model training may be incorporated into a number of different generative modeling frameworks.
We discuss two cases here.
(1) When the model $\model$ permits only sampling, we may train $\model$ by minimizing an $f$-divergence from samples:
\begin{equation}
\label{eq:objective_fdiv}
\mathcal{L}_1(\st, \at, \stp) = \fdiv{\model(\cdot \mid \st, \at)}{(1-\gamma)p(\cdot \mid \st, \at) + \gamma \model(\cdot \mid \stp)}.
\end{equation}
This objective leads naturally to an adversarially-trained $\gamma$-model.
(2)~When the model $\model$ permits density evaluation, we may minimize an error defined on log-densities directly:
\begin{equation}
\label{eq:objective_logp}
    \mathcal{L}_2(\st, \at, \stp) = \mathbb{E}_{\se} \Big[
    \big\lVert \log \model(\se \mid \st, \at) - \log\big( 
    (1-\gamma)p(\se \mid \st, \at) + \gamma \model(\se \mid \stp)
    \big)
    \big\lVert_2^2 \Big].
\end{equation}
This objective is suitable for $\gamma$-models instantiated as normalizing flows.
Due to the approximation of a target log-density $\log\left((1-\gamma) p(\cdot \mid \st, \at) + \gamma \expect{\stp}{\model(\cdot \mid \stp)}\right)$ using a single next state $\stp$, $\mathcal{L}_2$ is unbiased for deterministic dynamics and a bound in the case of stochastic dynamics.
We provide complete algorithmic descriptions of both variants and highlight practical training considerations in Section~\ref{sec:practical}.

\section{Analysis and Applications of \texorpdfstring{$\boldsymbol{\gammam}$}{Gamma}-Models}

Using the $\gamma$-model for prediction and control requires us to generalize procedures common in model-based reinforcement learning. In this section, we derive the $\gamma$-model rollout and show how it can be incorporated into a reinforcement learning procedure that hybridizes model-based and model-free value estimation. First, however, we show that the $\gamma$-model is a continuous, generative counterpart to another type of long-horizon model: the successor representation.

\subsection{\texorpdfstring{$\boldsymbol{\gammam}$}{Gamma}-Models as a Continuous Successor Representation}

The successor representation $M$ is a prediction of expected visitation counts \citep{dayan1993successor}. It has a recurrence relation making it amenable to tabular temporal difference algorithms: 
\begin{equation}
\label{eq:sr}
    M(\se \mid \st, \at) = \expect{\stp \sim p(\cdot \mid \st, \at)}{\indicator{\se = \stp} + \gamma M(\se \mid \stp)}.
\end{equation}
Adapting the successor representation to continuous state spaces in a way that retains an interpretation as a probabilistic model has proven challenging.
However, variants that forego the ability to sample in favor of estimating expected state features have been developed \citep{barreto2017successor}.

The form of the successor recurrence relation bears a striking resemblance to that of the target distribution in Equation~\ref{eq:gamma_targp}, suggesting a connection between the generative, continuous $\gamma$-model and the discriminative, tabular successor representation.
We now make this connection precise.

\begin{prop}
\label{thm:equivalence}
The global minimum of both $\mathcal{L}_1$ and $\mathcal{L}_2$ is achieved if and only if the resulting $\gamma$-model produces samples according to the normalized successor representation:
\begin{equation*}
    \model(\se \mid \st, \at) = (1-\gamma) M(\se \mid \st, \at).
\end{equation*}
\end{prop}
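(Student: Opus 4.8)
The plan is to reduce the statement to a standard fixed-point uniqueness argument for a Bellman-style operator. Define $\mathcal{T}$ acting on conditional densities by
$(\mathcal{T} q)(\se \mid \st, \at) = (1-\gammam)\,\psingle(\se \mid \st, \at) + \gammam\, \expect{\stp \sim \psingle(\cdot \mid \st, \at)}{q(\se \mid \stp)}$,
i.e.\ the right-hand side of Equation~\ref{eq:gamma_targp} with $\model$ replaced by an arbitrary $q$ (and $q(\se \mid \stp)$ understood, as in the paper's shorthand, as the $\atp\sim\pi$-marginalized density), so that $\ptd{\se} = (\mathcal{T}\model)(\se \mid \st, \at)$. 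First I would observe that both $\mathcal{L}_1$ and $\mathcal{L}_2$ are nonnegative and that, per sample, $\mathcal{L}_1(\st,\at,\stp)=0$ forces $\model(\cdot \mid \st, \at) = (\mathcal{T}\model)(\cdot \mid \st, \at)$ by the defining property of an $f$-divergence (zero iff the two distributions coincide), while $\mathcal{L}_2(\st,\at,\stp)=0$ forces the same equality under deterministic dynamics, where the single-sample target equals the true expectation over $\stp$ (the remark following Equation~\ref{eq:objective_logp}). Taking expectations over the data distribution, the expected loss attains the value $0$ precisely on models that are fixed points of $\mathcal{T}$ almost everywhere with respect to that distribution; since the losses are nonnegative, exhibiting such a fixed point (next) confirms $0$ is the global minimum.

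Next I would identify the fixed point. Multiplying the successor recurrence (Equation~\ref{eq:sr}) by $(1-\gammam)$ and pushing the constant inside the expectation gives
$(1-\gammam) M(\se \mid \st, \at) = (1-\gammam)\,\psingle(\se \mid \st, \at) + \gammam\, \expect{\stp}{(1-\gammam) M(\se \mid \stp)}$,
so $(1-\gammam)M$ is a fixed point of $\mathcal{T}$. Unrolling the recurrence for $M$ also shows $M(\se \mid \st, \at) = \sum_{\deltat=1}^{\infty} \gammam^{\deltat-1}\, p(\bs_{t+\deltat}=\se \mid \st, \at, \pi)$, so $(1-\gammam)M$ is exactly the discounted occupancy $\occupancy$ of Equation~\ref{eq:occupancy} and in particular a bona fide normalized conditional density — which is what allows it to be the sampling distribution of a $\gamma$-model in the first place.

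Finally I would prove uniqueness of the fixed point, which closes both directions simultaneously. The operator $\mathcal{T}$ is a $\gammam$-contraction in the norm $\lVert q \rVert = \sup_{\st,\at}\int |q(\se \mid \st, \at)|\,\dd{\se}$, since
$\lVert \mathcal{T} q_1 - \mathcal{T} q_2 \rVert = \gammam\, \sup_{\st,\at}\int \big| \expect{\stp}{q_1(\se \mid \stp) - q_2(\se \mid \stp)} \big|\,\dd{\se} \le \gammam \lVert q_1 - q_2 \rVert$,
moving the absolute value inside the expectations over $\stp$ and $\atp$ by the triangle/Jensen inequality. By the Banach fixed-point theorem $\mathcal{T}$ has a unique fixed point, namely $(1-\gammam)M$. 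Hence any minimizer of $\mathcal{L}_1$ or $\mathcal{L}_2$ equals $(1-\gammam)M$, and conversely if $\model = (1-\gammam)M$ then the target distribution coincides with $\model$ itself, so both losses vanish.

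The main obstacle I anticipate is the $\mathcal{L}_2$ case under stochastic dynamics, where the single-next-state target only bounds the true bootstrapped log-density (as the text flags), so the global minimum of the expected $\mathcal{L}_2$ need not be $0$; one must then either read the proposition for deterministic dynamics or argue that $(1-\gammam)M$ still uniquely attains the infimum of the expected $\mathcal{L}_2$ through a Jensen-type argument. The $\mathcal{L}_1$ case is clean by comparison. A secondary technical point is that every equality above holds only on the support of the training distribution over $(\st,\at,\stp)$, so the conclusion is to be understood modulo states the behavior policy never visits, and the contraction argument should be run with $\mathcal{T}$ and the norm restricted to that support.
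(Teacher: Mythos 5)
Your proof is correct and follows the same core route as the paper's: both identify the global minimum of $\mathcal{L}_1$ and $\mathcal{L}_2$ with the condition $\model(\se \mid \st, \at) = (1-\gammam)p(\se \mid \st, \at) + \gammam\, \expect{\stp \sim p(\cdot \mid \st, \at)}{\model(\se \mid \stp)}$ for all $\st, \at$, and then recognize this as the successor recurrence of Equation~\ref{eq:sr} scaled by $(1-\gammam)$. The paper stops there, implicitly taking for granted that the recurrence pins down a unique solution; you supply that missing step by showing the associated operator $\mathcal{T}$ is a $\gammam$-contraction in the norm $\sup_{\st,\at}\int |q(\se \mid \st,\at)|\,\dd{\se}$, so that $(1-\gammam)M = \occupancy$ is its unique fixed point and the ``only if'' direction genuinely follows. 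You also surface two caveats the paper's proof glosses over: that the single-sample target in $\mathcal{L}_2$ forces the fixed-point equation exactly only under deterministic dynamics (consistent with the paper's own remark that $\mathcal{L}_2$ is merely a bound for stochastic dynamics), and that every equality holds only on the support of the training distribution. Your version is strictly more careful; the paper's buys brevity by treating uniqueness of the Bellman-style fixed point as standard.
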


\begin{proof}
In the case of either objective, the global minimum is achieved only when
\begin{equation*}
    \model(\se \mid \st, \at) = (1-\gammam) p(\se \mid \st, \at) + \gammam \expect{\stp \sim p(\cdot \mid \st,\at)}{\model(\se \mid \stp)}
\end{equation*}
for all $\st, \at$.
We recognize this optimality condition exactly as the recurrence defining the successor representation $M$ (Equation~\ref{eq:sr}), scaled by $(1-\gamma)$ such that $\model$ integrates to $1$ over $\se$.
\end{proof}

\subsection{\texorpdfstring{$\boldsymbol{\gammam}$}{Gamma}-Model Rollouts}
\label{sec:gamma_rollouts}

Standard single-step models, which correspond to $\gamma$-models with $\gamma=0$, can predict multiple steps into the future by making iterated autoregressive predictions, conditioning each step on their own output from the previous step.
These sequential rollouts form the foundation of most model-based reinforcement learning algorithms.
We now generalize these rollouts to $\gamma$-models for $\gamma > 0$, allowing us to decouple the discount used during model training from the desired horizon in control.
When working with multiple discount factors, we explicitly condition an occupancy on its discount as $\occupancy(\se \mid \st; \gamma)$.
In the results below, we omit the model parameterization $\theta$ whenever a statement applies to both a discounted occupancy $\occupancy$ and a parametric $\gamma$-model $\model$.

\begin{thm}
\label{thm:weights}
Let $\occupancy_{n}(\se \mid \st; \gammam)$ denote the distribution over states at the $n^{\text{th}}$ sequential step of a $\gammam$-model rollout beginning from state $\st$. For any desired discount $\gammav \in [\gammam, 1)$, we may reweight the samples from these model rollouts according to the weights

\begin{equation*}
    \alpha_n = \frac{(1-\gammav)(\gammav - \gammam)^{n-1}}{(1-\gammam)^n}
\end{equation*}

to obtain the state distribution drawn from $\occupancy_{1}(\se \mid \st; \gammav) = \occupancy(\se \mid \st; \gammav)$.
That is, we may reweight the steps of a $\gammam$-model rollout so as to match the distribution of a $\gammav$-model with larger discount:

\begin{equation*}
    \occupancy(\se \mid \st; \gammav) = \sum_{n=1}^{\infty} \alpha_n \occupancy_{n}(\se \mid \st; \gamma). \\
\end{equation*}
\end{thm}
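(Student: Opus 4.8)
The plan is to identify the $\gammam$-model occupancy with a resolvent (Neumann series) of the policy-averaged single-step transition kernel and then recognize the claimed reweighting as a geometric series of operators that collapses by cancellation. Write $P$ for the single-step Markov kernel induced by $p$ and $\pi$, i.e.\ $P(\stp \mid \st) = \expect{\at \sim \pi(\cdot \mid \st)}{p(\stp \mid \st, \at)}$, and let $P^k$ denote its $k$-fold composition. Expanding the definition in Equation~\ref{eq:occupancy} gives $\occupancy(\cdot \mid \st; \gammam) = (1-\gammam)\sum_{k \geq 1}\gammam^{k-1}P^k$, which as an operator is $(1-\gammam)\,P\,(I - \gammam P)^{-1}$; the Neumann series converges since $P$ is a stochastic kernel and $\gammam < 1$. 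I would first establish this operator identity carefully, and deduce that composing the occupancy kernel with itself $n$ times — which is exactly the $n$-step $\gammam$-model rollout, by definition of iterated sampling from $\occupancy(\cdot \mid \cdot; \gammam)$ — yields
\[
    \occupancy_n(\cdot \mid \st; \gammam) = (1-\gammam)^n\,P^n\,(I - \gammam P)^{-n}.
\]

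Next I would substitute the weights. Since $\alpha_n (1-\gammam)^n = (1-\gammav)(\gammav - \gammam)^{n-1}$, the reweighted rollout becomes
\[
    \sum_{n \geq 1} \alpha_n\, \occupancy_n(\cdot\mid\st;\gammam)
    = (1-\gammav)\,P\,(I-\gammam P)^{-1}\sum_{n\geq 1}\big[(\gammav-\gammam)\,P\,(I-\gammam P)^{-1}\big]^{n-1},
\]
using that $P$ commutes with $(I-\gammam P)^{-1}$. The inner series is again a Neumann series — valid because $\gammav \in [\gammam,1)$ forces $(\gammav-\gammam)/(1-\gammam) < 1$, which incidentally also shows $\sum_n \alpha_n = 1$ so the weights are a genuine distribution — and sums to $\big[I - (\gammav-\gammam)P(I-\gammam P)^{-1}\big]^{-1}$. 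The key algebraic simplification is the identity $(I-\gammam P) - (\gammav-\gammam)P = I - \gammav P$, whence $I - (\gammav-\gammam)P(I-\gammam P)^{-1} = (I-\gammav P)(I-\gammam P)^{-1}$ and its inverse is $(I-\gammam P)(I-\gammav P)^{-1}$. Plugging back in, the factors $(I-\gammam P)^{-1}$ and $(I-\gammam P)$ cancel, leaving $(1-\gammav)\,P\,(I-\gammav P)^{-1} = \occupancy(\cdot\mid\st;\gammav) = \occupancy_1(\cdot\mid\st;\gammav)$, which is the claim.

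The main obstacle is not the algebra — that is a short calculation — but making the operator manipulations rigorous: one must fix a space on which these kernels act (e.g.\ as bounded operators on signed measures under total variation, or on densities in $L^1$), verify that all Neumann series converge in that norm, and justify interchanging the sums over $n$ and over the horizon $k$. An alternative that sidesteps the operator-analytic bookkeeping is to verify the identity horizon-by-horizon: write $\occupancy_n(\cdot\mid\st;\gammam) = \sum_{k\geq n}\Pr[\deltat_1+\dots+\deltat_n = k]\,P^k$ with $\deltat_i \stackrel{\text{iid}}{\sim}\text{Geom}(1-\gammam)$ (a negative-binomial sum), substitute into $\sum_n \alpha_n \occupancy_n$, and show that the coefficient of each $P^k$ collapses to $(1-\gammav)\gammav^{k-1}$; this reduces the theorem to a single generating-function identity for the negative binomial distribution and is the route I would take if a fully elementary argument is preferred. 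The degenerate case $\gammam=0$ (single-step model, $\alpha_n = (1-\gammav)\gammav^{n-1}$, $\occupancy_n = P^n$) is a convenient sanity check along the way.
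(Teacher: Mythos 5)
Your proof is correct, but it takes a genuinely different route from the paper's. The paper works entirely with the scalar distribution over elapsed timesteps: it notes that the time after $n$ rollout steps is a negative binomial $\text{NB}(n, 1-\gammam)$ with pmf $p_n(t) = \binom{t-1}{t-n}\gammam^{t-n}(1-\gammam)^n$, sets the mixture $q(t) = \sum_{n=1}^{t}\alpha_n p_n(t)$ equal to $\text{Geom}(1-\gammav)$, and \emph{solves} for the weights $\alpha_n$ by induction on $n$ (using that $p_n$ has no support below $t=n$, so each new $q(n)$ determines exactly one new weight). Your primary route instead \emph{verifies} the stated weights by writing $\occupancy(\cdot \mid \st; \gammam)$ as the resolvent $(1-\gammam)P(I-\gammam P)^{-1}$, composing it $n$ times, and collapsing the resulting Neumann series via the identity $(I-\gammam P)-(\gammav-\gammam)P = I-\gammav P$; your fallback route (matching the coefficient of each $P^k$) is essentially the paper's argument restated as a generating-function identity. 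Each approach buys something: the paper's induction is elementary and shows how the weights are \emph{derived} rather than merely checked, while your operator calculation is shorter, makes explicit the step the paper leaves implicit --- namely that the state distribution at rollout step $n$ depends on the sampled timesteps only through their sum, i.e.\ $\occupancy_n(\cdot\mid\st;\gammam)=\sum_{t\ge n}\Pr[\Delta t_1+\cdots+\Delta t_n=t]\,P^t$, which is what licenses reducing the claim to a statement about timestep distributions at all --- and cleanly isolates where $\gammav \in [\gammam, 1)$ is used (convergence of the inner Neumann series and $\sum_n \alpha_n = 1$). The functional-analytic bookkeeping you flag (fixing the space of measures, justifying the interchange of sums) is a real obligation of your route that the paper's scalar argument avoids, but it is routine given that $P$ is a stochastic kernel with $\lVert (I-\gammam P)^{-1}\rVert \le (1-\gammam)^{-1}$.
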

\begin{proof}
    Each step of the $\gamma$-model samples a time according to $\Delta t \sim \text{Geom}(1-\gamma)$, so the time after $n$ $\gamma$-model steps is distributed according to the sum of $n$ independent geometric random variables with identical parameters.
    This sum corresponds to a negative binomial random variable, $\text{NB}(n, 1-\gamma)$, with the following pmf:

    \begin{equation}
    \label{eq:nb_pmf}
    p_n(t) = {t-1 \choose t-n} \gammam^{(t-n)} (1-\gammam)^n.
    \end{equation}

    Equation~\ref{eq:nb_pmf} is mildly different from the textbook pmf because we want a distribution over the total number of trials (in our case, cumulative timesteps $t$) instead of the number of successes before the $n^{\text{th}}$ failure.
    The latter is more commonly used because it gives the random variable the same support, $t \ge 0$, for all $n$. The form in Equation~\ref{eq:nb_pmf} only has support for $t \ge n$, which substantially simplifies the following analysis.

    The distributions $q(t)$ expressible as a mixture over the per-timestep negative binomial distributions $p_n$ are given by:
    \begin{equation*}
        q(t) = \sum_{n=1}^{t} \alpha_n p_n(t),
    \end{equation*}
    in which $\alpha_n$ are the mixture weights. Because $p_n$ only has support for $t \ge n$, it suffices to only consider the first $t$ $\gamma$-model steps when solving for $q(t)$.

    We are interested in the scenario in which $q(t)$ is also a geometric random variable with smaller parameter, corresponding to a larger discount $\gammav$.
    We proceed by setting $q(t)=\text{Geom}(1-\gammav)$ and solving for the mixture weights $\alpha_n$ by induction.

    \paragraph{Base case.}
    Let $n=1$. Because $p_1$ is the only mixture component with support at $t=1$, $\alpha_1$ is determined by $q(1)$:
    \begin{align*}
        1-\gammav &= \alpha_1 {t-1 \choose t-1} \gammam^{t-1} (1-\gammam)^t \\
        &= \alpha_1 (1-\gammam).
    \end{align*}
    Solving for $\alpha_1$ gives:
    \begin{align*}
        \alpha_1 = \frac{1-\gammav}{1-\gammam}.
    \end{align*}

    \paragraph{Induction step.}
    We now assume the form of $\alpha_k$ for $k = 1, \ldots, n-1$ and solve for $\alpha_n$ using $q(n)$.
    \begin{align*}
        (1-\gammav) \gammav^{n-1} &= \sum_{k=1}^{n} \alpha_k {n-1 \choose n-k} \gammam^{n-k} (1-\gammam)^{k} \\
        &= \left\{ \sum_{k=1}^{n-1} \frac{(1-\gammav)(\gammav-\gammam)^{k-1}}{(1-\gammam)^k} {n-1 \choose n-k} \gammam^{n-k} (1-\gamma)^k \right\}
        + \alpha_n (1-\gammam)^n \\
        &= (1-\gammav) \left\{\sum_{k=1}^{n-1}  {n-1 \choose n-k} (\gammav-\gammam)^{k-1} \gammam^{n-k} \right\}
        + \alpha_n (1-\gammam)^n \\
        &= (1-\gammav) \left\{\sum_{k=1}^{n}  {n-1 \choose n-k} (\gammav-\gammam)^{k-1} \gammam^{n-k} \right\} - (1-\gammav) (\gammav - \gammam)^{n-1}
        + \alpha_n (1-\gammam)^n \\
        &= (1-\gammav) \gammav^{n-1} - (1-\gammav) (\gammav - \gammam)^{n-1}
        + \alpha_n (1-\gammam)^n \\
    \end{align*}
    Solving for $\alpha_n$ gives
    \begin{align*}
        \alpha_n &= \frac{(1-\gammav)(\gammav-\gammam)^{n-1}}{(1-\gammam)^n}
    \end{align*}
    as desired.
\end{proof}


This reweighting scheme has two special cases of interest.
A standard single-step model, with $\gammam=0$, yields $\alpha_n = (1-\gammav) \gammav^{n-1}$. These weights are familiar from the definition of the discounted state occupancy in terms of a per-timestep mixture (Equation~\ref{eq:occupancy}).
Setting $\gammam=\gammav$ yields $\alpha_n = 0^{n-1}$, or a weight of 1 on the first step and $0$ on all subsequent steps.\footnote{We define $0^0$ as $\lim_{x \to 0} x^x = 1$.} This result is also expected: when the model discount matches the target discount, only a single forward pass of the model is required.

\begin{figure}
\centering
\includegraphics[width=0.495\linewidth]{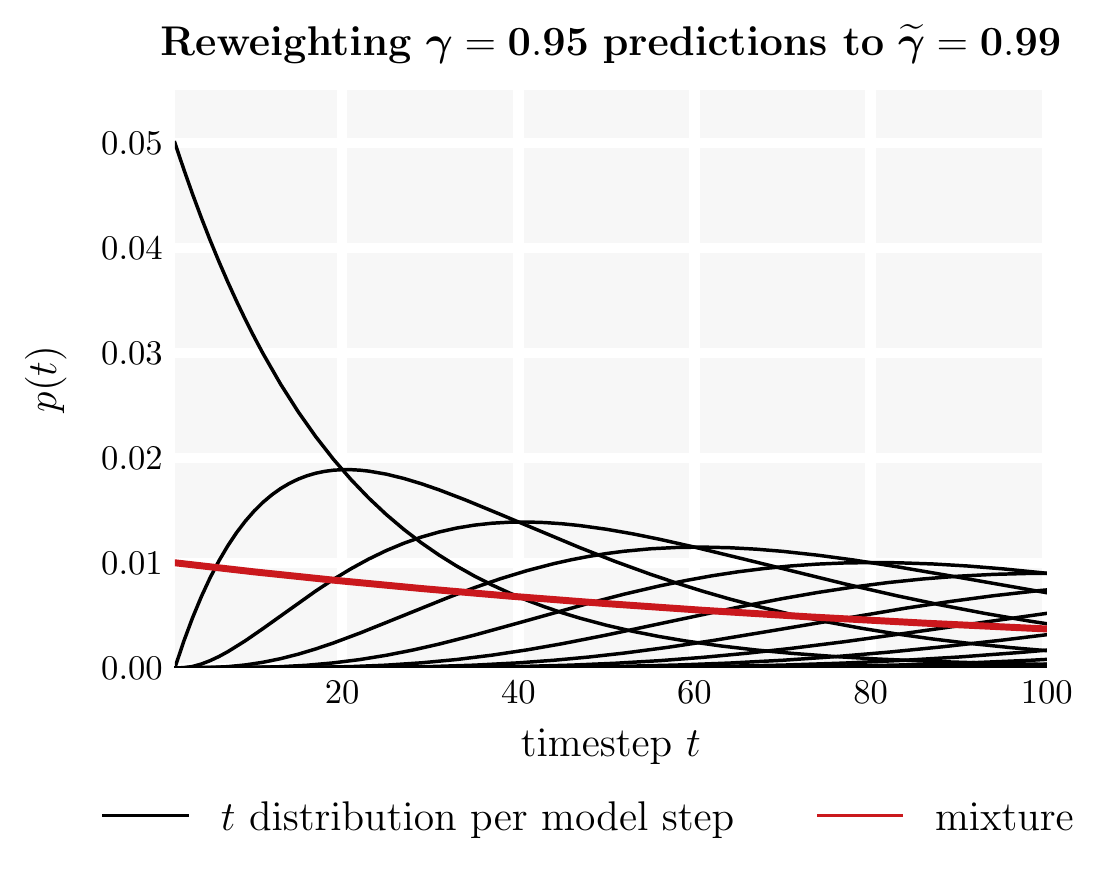}
\includegraphics[width=0.495\linewidth]{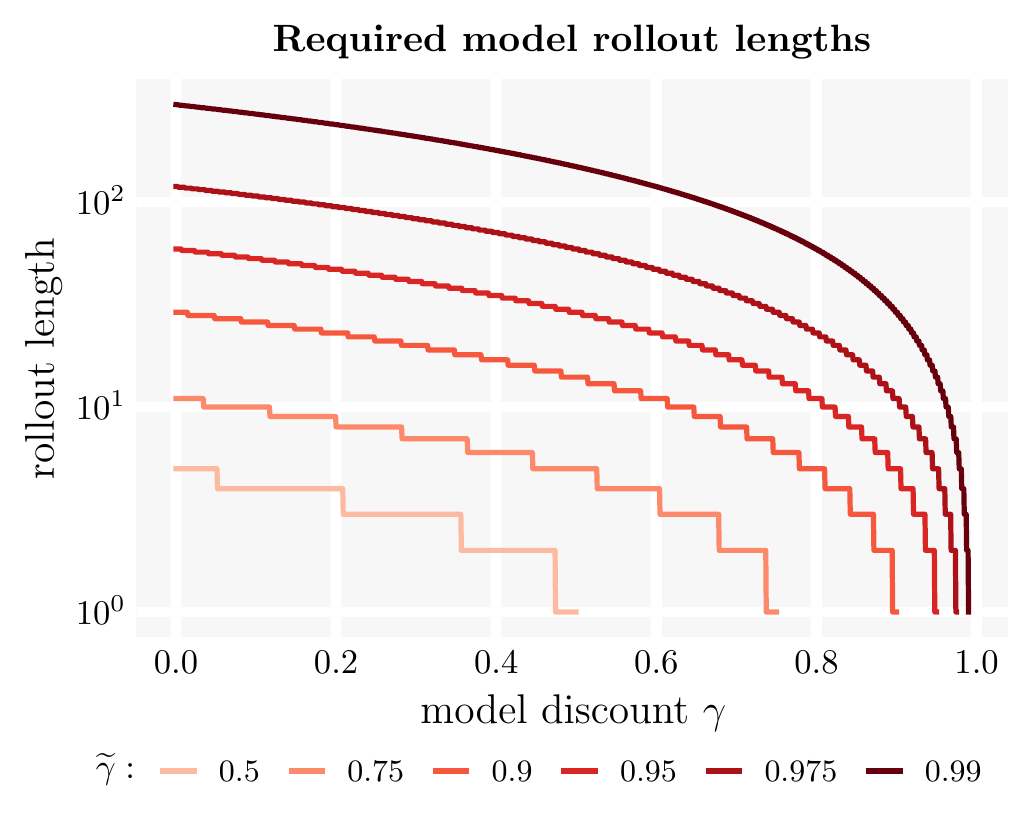} \\
\vspace{-6.46cm}
\footnotesize
\hspace{-0.1cm} \textbf{(a)}
\hspace{7.5cm} \textbf{(b)}
\hspace{7.5cm}~ \\
\normalsize
\vspace{5.8cm}
\caption{
    \textbf{(Rollouts with probabilistic horizons)}
    \textbf{(a)} The first step from a $\gamma$-model samples states at timesteps distributed according to a geometric distribution with parameter $1-\gamma$; all subsequent steps have a negative binomial timestep distribution stemming from the sum of independent geometric random variables.
    When these steps are reweighted according to Theorem~\ref{thm:weights}, the resulting distribution follows a geometric distribution with smaller parameter (corresponding to a larger discount value $\gammav$).
    \textbf{(b)} The number of steps needed to recover $95\%$ of the probability mass from distributions induced by various target discounts $\gammav$ for all valid model discounts $\gammam$.
    When using a standard single-step model, corresponding to the case of $\gammam=0$, a $299$-step model rollout is required to reweight to a discount of $\gammav=0.99$. 
}
\label{fig:sr_composition}
\end{figure}

Figure~\ref{fig:sr_composition} visually depicts the reweighting scheme and the number of steps required for truncated model rollouts to approximate the distribution induced by a larger discount.
There is a natural tradeoff with $\gamma$-models: the higher $\gammam$ is, the fewer model steps are needed to make long-horizon predictions, reducing model-based compounding prediction errors \citep{asadi2019combating,janner2019mbpo}.
However, increasing $\gammam$ transforms what would normally be a standard maximum likelihood problem (in the case of single-step models) into one resembling approximate dynamic programming (with a model bootstrap), leading to model-free bootstrap error accumulation \citep{kumar2019stabilizing}.
The primary distinction is whether this accumulation occurs during training, when the work of sampling from the occupancy $\occupancy$
is being amortized, or during ``testing'', when the model is being used for rollouts.
While this horizon-based error compounding cannot be eliminated entirely, $\gamma$-models allow for a continuous interpolation between the two extremes.

\vspace{0.2cm}
\subsection{\texorpdfstring{$\boldsymbol{\gammam}$}{Gamma}-Model-Based Value Expansion}
\label{sec:gamma_mve}
\vspace{0.5cm}

We now turn our attention from prediction with $\gamma$-models to value estimation for control.
In tabular domains, the state-action value function can be decomposed as the inner product between the successor representation $M$
and the vector of per-state rewards \citep{gershman2018successor}.
Taking care to account for the normalization from the equivalence in Proposition~\ref{thm:equivalence}, we can similarly estimate the $Q$ function as the expectation of reward under states sampled from the $\gamma$-model:






\begin{align}
    Q(\st, \at; \gamma) &= \sum_{\deltat = 1}^{\infty} \gamma^{\deltat - 1} \int_{\mathcal{S}} r(\se) p(\bs_{t+\deltat} = \se \mid \st, \at, \pi) \dd{\se} \nonumber \\
    &= \int_\mathcal{S} r(\se) \sum_{\deltat = 1}^{\infty} \gamma^{\deltat - 1} p(\bs_{t+\deltat} = \se \mid \st, \at, \pi) \dd{\se} \nonumber \\
    &= \frac{1}{1-\gamma} \expect{\se \sim \occupancy(\cdot \mid \st, \at; \gamma)}{r(\se)}
\label{eq:q_estimation}
\end{align}
 
This relation suggests a model-based reinforcement learning algorithm in which $Q$-values are estimated by a $\gamma$-model without the need for sequential model-based rollouts.
However, in some cases it may be practically difficult to train a generative $\gamma$-model with discount as large as that of a discriminative $Q$-function.
While one option is to chain together $\gamma$-model steps as in Section~\ref{sec:gamma_rollouts}, an alternative solution often effective with single-step models is to combine short-term value estimates from a truncated model rollout with a terminal model-free value prediction:

\begin{equation*}
\label{eq:mve}
    V_\text{MVE}(\st; \gammav) = \sum_{n=1}^{H} \gammav^{n - 1} r(\bs_{t+n}) + \gammav^{H} V(\bs_{t+H}; \gammav).
\end{equation*}

This hybrid estimator is referred to as a model-based value expansion (MVE; \citealt{feinberg2018mve}).
There is a hard transition between the model-based and model-free value estimation in MVE, occuring at the model horizon $H$.
We may replace the single-step model with a $\gamma$-model for a similar estimator in which there is a probabilistic prediction horizon, and as a result a gradual transition.

\vspace{.2cm}
{
\newcommand*\circled[1]{\tikz[baseline=(char.base)]{
            \node[shape=circle,draw,inner sep=2pt] (char) {#1};}}
\begin{thm}
\label{thm:gamma_mve}
For $\gammav > \gammam$, $V(\st; \gammav)$ may be decomposed as a weighted average of $H$ $\gammam$-model steps and a terminal value estimation. We denote this the $\boldgamma$\textbf{-MVE estimator}:


\begin{equation*}
    \hat{V}_{\gamma\text{\normalfont-MVE}}(\st; \gammav) = \frac{1}{1-\gammav} \sum_{n=1}^{H} \alpha_n \expect{\se \sim \occupancy_{n}(\cdot \mid \st; \gamma)}{r(\se)}
    + \left(\frac{\gammav - \gammam}{1-\gammam}\right)^{\!H} \expect{\se \sim \occupancy_{H}(\cdot \mid \st; \gamma)}{V(\se; \gammav)}.    
\end{equation*}
\end{thm}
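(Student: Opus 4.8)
The plan is to derive the decomposition directly from two facts already in hand: the reweighting identity of Theorem~\ref{thm:weights}, which writes $\occupancy(\se \mid \st; \gammav) = \sum_{n=1}^{\infty}\alpha_n\,\occupancy_{n}(\se \mid \st; \gamma)$, and the value-as-expected-reward identity of Equation~\ref{eq:q_estimation}. First I would record the state-conditioned form of the latter, $V(\st;\gammav) = \frac{1}{1-\gammav}\expect{\se\sim\occupancy(\cdot\mid\st;\gammav)}{r(\se)}$, which follows by averaging the $Q$-identity over $\at\sim\pi(\cdot\mid\st)$ and using $\occupancy(\se\mid\st;\gammav) = \expect{\at\sim\pi}{\occupancy(\se\mid\st,\at;\gammav)}$.

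Substituting the Theorem~\ref{thm:weights} mixture into this identity and exchanging summation with expectation (legitimate because the $\alpha_n$ form a probability distribution and $r$ is bounded) gives $V(\st;\gammav) = \frac{1}{1-\gammav}\sum_{n=1}^{\infty}\alpha_n\expect{\se\sim\occupancy_{n}(\cdot\mid\st;\gamma)}{r(\se)}$. I would then split this series at $n=H$. The terms $n=1,\dots,H$ are exactly the first sum appearing in $\hat V_{\gamma\text{-MVE}}$, so the entire content of the theorem reduces to showing that the tail $\frac{1}{1-\gammav}\sum_{n=H+1}^{\infty}\alpha_n\expect{\se\sim\occupancy_{n}(\cdot\mid\st;\gamma)}{r(\se)}$ equals the terminal term $\bigl(\tfrac{\gammav-\gammam}{1-\gammam}\bigr)^{H}\expect{\se\sim\occupancy_{H}(\cdot\mid\st;\gamma)}{V(\se;\gammav)}$.

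For the tail I would reindex $n=H+m$ with $m\ge1$ and use the elementary factorization $\alpha_{H+m} = \bigl(\tfrac{\gammav-\gammam}{1-\gammam}\bigr)^{H}\alpha_m$, which drops straight out of the closed form for $\alpha_n$. Pulling the constant prefactor out front leaves $\bigl(\tfrac{\gammav-\gammam}{1-\gammam}\bigr)^{H}\cdot\frac{1}{1-\gammav}\sum_{m\ge1}\alpha_m\expect{\se\sim\occupancy_{H+m}(\cdot\mid\st;\gamma)}{r(\se)}$. Next I would invoke the composition (semigroup) property of sequential $\gammam$-model rollouts, $\occupancy_{H+m}(\se\mid\st;\gamma) = \expect{\bs'\sim\occupancy_{H}(\cdot\mid\st;\gamma)}{\occupancy_{m}(\se\mid\bs';\gamma)}$ — rolling out $H$ steps and then $m$ more is the same as rolling out $H+m$ steps — to pull the $\occupancy_H$ expectation outside the series. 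The remaining inner quantity $\frac{1}{1-\gammav}\sum_{m\ge1}\alpha_m\expect{\se\sim\occupancy_{m}(\cdot\mid\bs';\gamma)}{r(\se)}$ is, by the same two identities applied at $\bs'$, precisely $V(\bs';\gammav)$; hence the tail collapses to the claimed terminal term and $V(\st;\gammav)=\hat V_{\gamma\text{-MVE}}(\st;\gammav)$.

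The main obstacle is bookkeeping around the infinite tail rather than any deep step: one must justify the interchange of summation and expectation (immediate given $\sum_n\alpha_n=1$ and bounded reward) and, more importantly, state the rollout composition property $\occupancy_{H+m}=\occupancy_H\circ\occupancy_m$ cleanly, since this is the one place where the argument relies on $\occupancy_n$ being a genuine $n$-fold rollout (equivalently, the $n$-step kernel of the Markov chain with transition kernel $\occupancy(\cdot\mid\bs;\gamma)$) rather than an abstract indexed family of distributions. The algebraic identity for $\alpha_{H+m}$ and the splitting of the series are routine.
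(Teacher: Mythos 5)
Your proposal is correct and follows essentially the same route as the paper's proof: apply the value-as-expected-reward identity, expand via the Theorem~\ref{thm:weights} mixture, split the series at $H$, and collapse the tail using the factorization $\alpha_{H+m} = \bigl(\tfrac{\gammav-\gammam}{1-\gammam}\bigr)^{H}\alpha_m$ together with the rollout composition property. The only difference is presentational — you state explicitly the semigroup property and the interchange of summation and expectation that the paper invokes implicitly as ``time-invariance of $G^{(n)}$ with respect to its conditioning state.''
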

\begin{proof}
\begin{align}
V(\st; \gammav) &= \frac{1}{1-\gammav} \expect{\se \sim \occupancy(\cdot \mid \st; \gammav)}{r(\se)} \nonumber \\
&= \frac{1}{1-\gammav} \sum_{n=1}^{\infty} \alpha_n \expect{\se \sim \occupancy_{n}(\cdot \mid \st; \gamma)}{r(\se)} \nonumber \\
&= \frac{1}{1-\gammav}
\underset{\circled{1}}{\underbrace{\sum_{n=1}^{H} \alpha_n \expect{\se \sim \occupancy_{n}(\cdot \mid \st; \gamma)}{r(\se)}}} 
+ \frac{1}{1-\gammav}
\underset{\circled{2}}{\underbrace{\sum_{n=H+1}^{\infty} \alpha_n \expect{\se \sim \occupancy_{n}(\cdot \mid \st; \gamma)}{r(\se)}}}
. \label{eq:gamma_mve_decomp}
\end{align}
The second equality rewrites an expectation over a $\gammav$-model as an expectation over a rollout of a $\gamma$-model using step weights $\alpha_n$ from Theorem~\ref{thm:weights}.
We recognize $\circled{1}$ as the model-based component of the value estimation in $\gamma$-MVE.
All that remains is to write $\circled{2}$ using a terminal value function.
\begin{align}
\sum_{n=H+1}^{\infty} \alpha_n \expect{\se \sim \occupancy_{n}(\cdot \mid \st; \gamma)}{r(\se)}
&= \sum_{n=1}^{\infty} \alpha_{H+n} \expect{\se \sim \occupancy_{H+n}(\cdot \mid \st; \gammam)}{r(\se)} \nonumber \\
&= \left( \frac{\gammav-\gammam}{1-\gammam} \right)^{\!H} \expect{\bs_{H} \sim \occupancy_{H}(\cdot \mid \st; \gammam)}{\sum_{n=1}^{\infty} \alpha_{n} \expect{\se \sim \occupancy_{n}(\cdot \mid \bs_{H}; \gammam)}{r(\se)}} \nonumber \\
&= \left( \frac{\gammav-\gammam}{1-\gammam} \right)^{\!H} \expect{\bs_{H} \sim \occupancy_{H}(\cdot \mid \st; \gammam)}{\expect{\se \sim \occupancy(\cdot \mid \bs_{H}; \gammav)}{r(\se)}} \nonumber \\
&= (1-\gammav) \left( \frac{\gammav-\gammam}{1-\gammam} \right)^{\!H} \expect{\se \sim \occupancy_{H}(\cdot \mid \st; \gammam)}{V(\se; \gammav)} \label{eq:gamma_mve_value}
\end{align}
The second equality uses $\alpha_{H+n} = \left( \frac{\gammav-\gammam}{1-\gammam} \right)^{\!H} \alpha_n$ and the time-invariance of $G^{(n)}$ with respect to its conditioning state. Plugging Equation~\ref{eq:gamma_mve_value} into Equation~\ref{eq:gamma_mve_decomp} gives:
\begin{align*}
V(\st; \gammav) &=
\frac{1}{1-\gammav} \sum_{n=1}^{H} \alpha_n \expect{\se \sim \occupancy_{n}(\cdot \mid \st; \gamma)}{r(\se)}
+ \left( \frac{\gammav-\gammam}{1-\gammam} \right)^{\!H} \expect{\se \sim \occupancy_{H}(\cdot \mid \st; \gammam)}{V(\se; \gammav)}.
\end{align*}
\end{proof}

\paragraph{Remark 1.}
Using Lemma~\ref{lem:gamma_mve_weights} from Appendix~\ref{app:geometric_lemma} to substitute $1 - \sum_{n=1}^{H}\alpha_n$ in place of $\left( \frac{\gammav-\gammam}{1-\gammam} \right)^{\!H}$ clarifies the interpretation of $V(\st; \gammav)$ as a weighted average over $H$ $\gamma$-model steps and a terminal value function.
Because the mixture weights must sum to 1, it is unsurprising that the weight on the terminal value function turned out to be $\left( \frac{\gammav-\gammam}{1-\gammam} \right)^{\!H} = 1-\sum_{n=1}^{H} \alpha_n$.

\paragraph{Remark 2.}
Setting $\gamma=0$ recovers standard MVE with a single-step model, as the weights on the model steps simplify to $\alpha_n = (1-\gammav)(\gammav-\gammam)^{n-1}$ and the weight on the terminal value function simplifies to $\gammav^H$.
}

{
\setlength{\textfloatsep}{10pt}
\newcommand*{\medcup}{\mathbin{\scalebox{1.5}{\ensuremath{\cup}}}}%

\begin{algorithm}[t]
\caption{~~\textbf{$\boldsymbol{\gamma}$-model based value expansion}}
\label{alg:gamma_mve}
\begin{algorithmic}[1]
\STATE \textbf{Input} $\gammam$: model discount, $\gammav$: value discount, $\lambda:$ step size
\STATE \textbf{Initialize} $\model:$ $\gamma$-model generator
\STATE \textbf{Initialize} $Q_\omega:$ $Q$-function, $V_\xi:$ value function, $\pi_\psi:$ policy, $\mathcal{D}:$ replay buffer
\FOR{each iteration}
\FOR{each environment step}
    \STATE $\at \sim \pi_\psi(\cdot \mid \st)$
    \STATE $\stp \sim p(\cdot \mid \st, \at)$
    \STATE $\rt = r(\st, \at)$
    \STATE $\mathcal{D} \gets \mathcal{D} \ltxcup \left\{ \st, \at, \rt, \stp \right\}$
\ENDFOR
\FOR{each gradient step}
    \STATE Sample transitions $(\st, \at, \rt, \stp)$ from $\mathcal{D}$
    \STATE Update $\model$ to Algorithm~\ref{alg:practical_samples}~or~\ref{alg:practical_logp} 
    \STATE Compute $V_{\gamma-\text{MVE}}(\stp)$ according to Theorem~\ref{thm:gamma_mve}
    \STATE Update $Q$-function parameters: \\
    \hspace{1cm} $\omega \gets \omega - \lambda \grad_\omega \frac{1}{2} \left(Q_\omega(\st, \at) - \left(\rt + \gammav V_{\gamma-\text{MVE}}(\stp) \right)\right)^2$
    \STATE Update value function parameters: \\
    \hspace{1cm} $\xi \gets \xi - \lambda \grad_\xi \frac{1}{2} \left( V_\xi(\st) - \expect{\ba \sim \pi_\psi(\cdot \mid \st)}{Q_\omega(\st, \ba) - \log \pi_\psi(\ba \mid \st)} \right)^2$
    \STATE Update policy parameters: \\
    \hspace{1cm} $\psi \gets \psi - \lambda \grad_\psi \expect{\ba \sim \pi_\psi(\cdot \mid \st)}{\log \pi_\psi(\ba \mid \st) - Q_\omega(\st, \ba)}$
\ENDFOR
\ENDFOR
\end{algorithmic}
\end{algorithm}
}







The $\gamma$-MVE estimator allows us to perform $\gamma$-model-based rollouts with horizon $H$, reweight the samples from this rollout by solving for weights $\alpha_n$ given a desired discount $\gammav > \gammam$, and correct for the truncation error stemming from the finite rollout length using a terminal value function with discount $\gammav$.
As expected, MVE is a special case of $\gamma$-MVE, as can be verified by considering the weights corresponding to $\gamma=0$ described in Section~\ref{sec:gamma_rollouts}.
This estimator, along with the simpler value estimation in Equation~\ref{eq:q_estimation}, highlights the fact that it is not necessary to have timesteps associated with states in order to use predictions for decision-making.
Pseudocode for an actor-critic algorithm using the $\gamma$-MVE estimator is provided Algorithm~\ref{alg:gamma_mve}.

\section{Practical Training of \texorpdfstring{$\boldsymbol{\gammam}$}{Gamma}-Models}
\label{sec:practical}

\sloppy
Because $\gamma$-model training differs from standard dynamics modeling primarily in the bootstrapped target distribution and not in the model parameterization, $\gamma$-models are in principle compatible with any generative modeling framework.
We focus on two representative scenarios, differing in whether the generative model class used to instantiate the $\gamma$-model allows for tractable density evaluation.

\paragraph{{\color{gamma_red}Training without density evaluation.}}
When the $\gamma$-model parameterization does not allow for tractable density evaluation,
we minimize a bootstrapped $f$-divergence according to $\mathcal{L}_1$ (Equation~\ref{eq:objective_fdiv}) using only samples from the model.
The generative adversarial framework provides a convenient way to train a parametric generator by minimizing an $f$-divergence of choice given only samples from a target distribution $p_\text{targ}$ and the ability to sample from the generator \citep{goodfellow2014gan,nowozin2016fgan}.
In the case of bootstrapped maximum likelihood problems, our target distribution is induced by the model itself (alongside a single-step transition distribution), meaning that we only need sample access to our $\gamma$-model in order to train $\model$ as a generative adversarial network (GAN).

Introducing an auxiliary discriminator $\disc$ and selecting the Jensen-Shannon divergence as our $f$-divergence, we can reformulate minimization of the original objective $\mathcal{L}_1$ as a saddle-point \nobreak{optimization} over the following objective:

\begin{equation*}
    \hat{\mathcal{L}}_1(\st,\at)
    = \expect{\se^{+} \sim p_\text{targ}(\cdot \mid \st, \at)}{\log \disc(\se^{+} \mid \st, \at)}
    + \expect{\se^{-} \sim \model(\cdot \mid \st, \at)}{\log(1-\disc(\se^{-} \mid \st, \at))},
\end{equation*}
which is minimized over $\model$ and maximized over $\disc$.
As in $\mathcal{L}_1$, $p_\text{targ}$ refers to the bootstrapped target distribution in Equation~\ref{eq:gamma_targp}.
In this formulation, $\model$ produces samples by virtue of a deterministic mapping of a random input vector $\mathbf{z} \sim \mathcal{N}(0,I)$ and conditioning information $(\st, \at)$.
Other choices of $f$-divergence may be instantiated by different choices of activation function \citep{nowozin2016fgan}.

\paragraph{{\color{gamma_blue}Training with density evaluation.}}
When the $\gamma$-model permits density evaluation, we may bypass saddle point approximations to an $f$-divergence and directly regress to target density values, as in objective $\mathcal{L}_2$ (Equation~\ref{eq:objective_logp}).
This is a natural choice when the $\gamma$-model is instantiated as a conditional normalizing flow \citep{rezende15variational}.
Evaluating target values of the form
\begin{equation*}
    T(\st, \at, \stp, \se) = \log\big( 
    (1-\gamma)p(\se \mid \st, \at) + \gamma \model(\se \mid \stp)
    \big)
\end{equation*}
requires density evaluation of not only our $\gamma$-model, but also the single-step transition distribution.
There are two options for estimating the single-step densities: (1) a single-step model $p_\theta$ may be trained alongside the $\gamma$-model $\model$ for the purposes of constructing targets $T(\st, \at, \stp, \se)$, or (2) a simple approximate model may be constructed on the fly from $(\st, \at, \stp)$ transitions.
We found $p_\theta = \mathcal{N}(\stp, \sigma^2)$, with $\sigma^2$ a constant hyperparameter, to be sufficient.

{


\begin{figure}[t]
\begin{minipage}{\linewidth}
\begin{algorithm}[H]
\caption{~~$\gammam$-model training {\color{gamma_red}without density evaluation}}
\label{alg:practical_samples}
\begin{algorithmic}[1]
\STATE \textbf{Input} $\mathcal{D}:$ dataset of transitions, $\pi:$ policy, $\lambda:$ step size, $\tau:$ delay parameter
\STATE Initialize parameter vectors $\theta, \widebar{\theta}, {\color{gamma_red}\phi}$
\WHILE{not converged}
    \STATE Sample transitions $(\st, \at, \stp)$ from $\mathcal{D}$ and actions $\atp \sim \pi(\cdot \mid \stp)$
    {\color{gamma_red}
    \STATE Sample from bootstrapped target~$\se^{+} \sim (1-\gamma) \delta_{\stp} + \gamma \targmodel(\cdot \mid \stp, \atp)$
    \STATE Sample from current model~$\se^{-} \sim \model(\cdot \mid \st, \at)$
    \STATE Evaluate objective~$\mathcal{L} = \log \disc(\se^{+} \mid \st, \at) + \log(1 - \disc(\se^{-} \mid \st, \at))$
    }
    \STATE Update model parameters~$\theta \gets \theta - \lambda \nabla_{\theta} \mathcal{L}$;~
    {\color{gamma_red}$\phi \gets \phi + \lambda \nabla_{\phi} \mathcal{L}$}
    \STATE Update target parameters $\widebar{\theta} \gets \tau \theta + (1-\tau) \widebar{\theta}$
\ENDWHILE
\end{algorithmic}
\end{algorithm}
\end{minipage}


\begin{minipage}{\linewidth}
\begin{algorithm}[H]
\caption{~~$\gammam$-model training {\color{gamma_blue}with density evaluation}}
\label{alg:practical_logp}
\begin{algorithmic}[1]
\STATE \textbf{Input} $\mathcal{D}:$ dataset of transitions, $\pi:$ policy, $\lambda:$ step size, $\tau:$ delay parameter, {\color{gamma_blue}$\sigma^2:$ variance}
\STATE Initialize parameter vectors $\theta, \widebar{\theta}$; {\color{gamma_blue}let $f$ denote the Gaussian pdf}
\WHILE{not converged}
    \STATE Sample transitions $(\st, \at, \stp)$ from $\mathcal{D}$ and actions $\atp \sim \pi(\cdot \mid \stp)$
    {\color{gamma_blue}
    \STATE Sample from bootstrapped target~$\se \sim (1-\gamma) \mathcal{N}(\stp, \sigma^2) + \gamma \targmodel(\cdot \mid \stp, \atp)$
    \STATE Construct target values $T = \log\big( 
    (1-\gamma)f(\se \mid \stp, \sigma^2) + \gamma \targmodel(\se \mid \stp, \atp)
    \big)$
    \STATE Evaluate objective~$\mathcal{L} =
    \lVert \log \model(\se \mid \st, \at) - T\lVert_2^2$
    }
    \STATE Update model parameters $\theta \gets \theta - \lambda \nabla_{\theta} \mathcal{L}$
    \STATE Update target parameters $\widebar{\theta} \gets \tau \theta + (1-\tau) \widebar{\theta}$
\ENDWHILE
\end{algorithmic}
\end{algorithm}
\end{minipage}
\end{figure}

}

\paragraph{Stability considerations.}
To alleviate the instability caused by bootstrapping, we appeal to the standard solution employed in model-free reinforcement learning: decoupling the regression targets from the current model by way of a ``delayed" target network \citep{mnih2015humanlevel}.
In particular, we use a delayed $\gamma$-model $\targmodel$ in the bootstrapped target distribution $p_\text{targ}$,
with the parameters $\widebar{\theta}$ given by an exponentially-moving average of previous parameters $\theta$.

We summarize the above scenarios in
Algorithms~\ref{alg:practical_samples}~and~\ref{alg:practical_logp}.
We isolate model training from data collection and focus on a setting in which a static dataset is provided, but this algorithm may also be used in a data-collection loop for policy improvement.
Further implementation details, including all hyperparameter settings and network architectures, are included in Appendix~\ref{app:implementation}.

\begin{figure}[!b]
    \centering
    \includegraphics[width=1.0\linewidth]{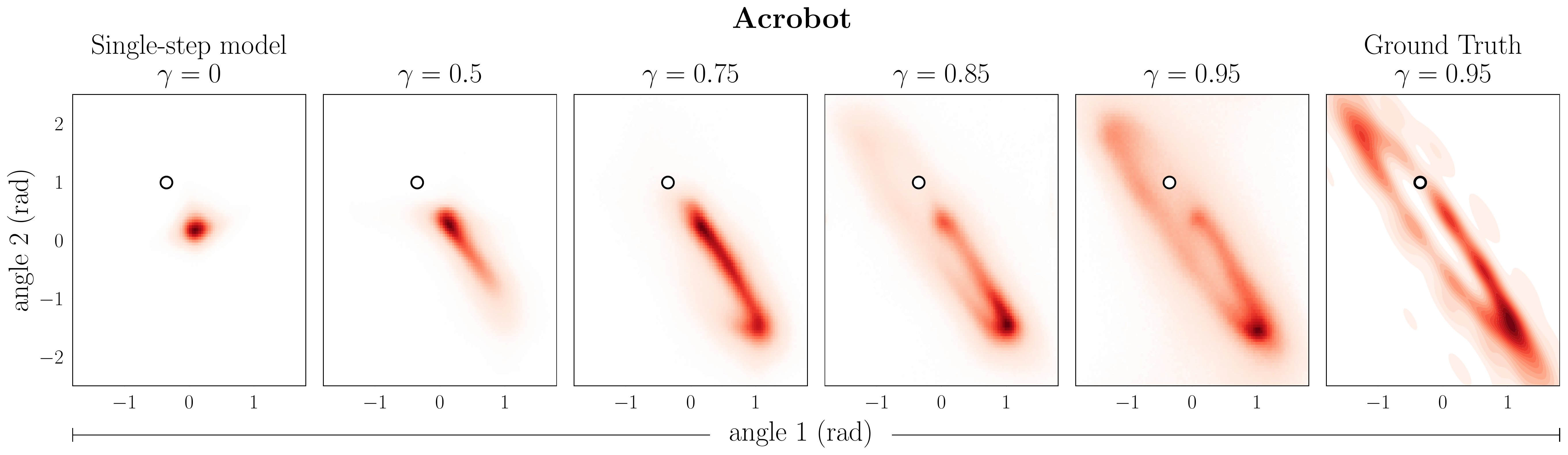} \\
    \vspace{0.2cm}
    \includegraphics[width=1.0\linewidth]{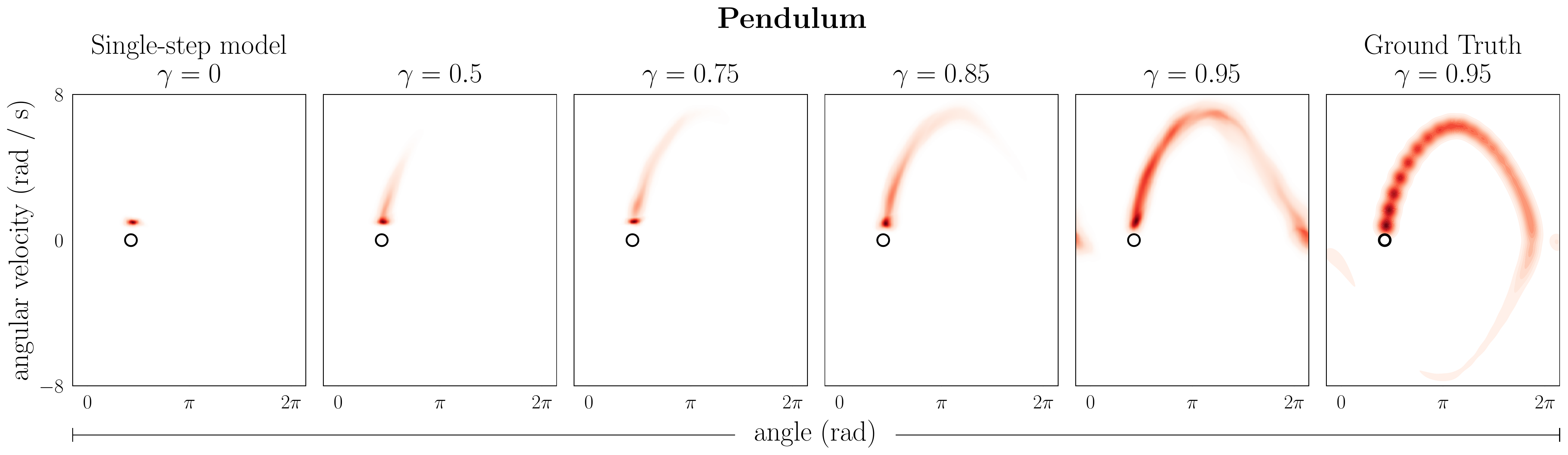}
    \caption{
    \textbf{($\boldgamma$-model predictions)}
    Visualization of the predicted distribution from a \textbf{single} feedforward pass of normalizing flow $\gamma$-models trained with varying discounts $\gamma$.
    The conditioning state $\st$ is denoted by
    \protect\includegraphics[height=1.5ex]{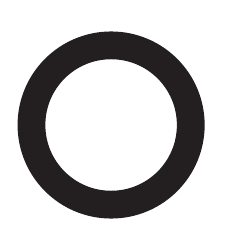}.
    The leftmost plots, with $\gamma=0$, correspond to a single-step model.
    For comparison, the rightmost plots show a Monte Carlo estimation of the discounted occupancy from $100$ environment trajectories.
    }
    \label{fig:density}
\end{figure}

\section{Experimental Evaluation}
\label{sec:experiments}

Our experimental evaluation is designed to study the viability of $\gamma$-models as a replacement of conventional single-step models for long-horizon state prediction and model-based control. 

\subsection{Prediction}

We investigate $\gamma$-model predictions as a function of discount in continuous-action versions of two benchmark environments suitable for visualization: acrobot \citep{sutton1996generalization} and pendulum.
The training data come from a mixture distribution over all intermediate policies of 200 epochs of optimization with soft-actor critic (SAC; \citealt{haarnoja18sac}).
The final converged policy is used for $\gamma$-model training.
We refer to Appendix~\ref{app:implementation} for implementation and experiment details.

Figure~\ref{fig:density} shows the predictions of a $\gamma$-model trained as a normalizing flow according to Algorithm~\ref{alg:practical_logp} for five different discounts, ranging from $\gamma=0$ (a single-step model) to $\gamma=0.95$.
The rightmost column shows the ground truth discounted occupancy corresponding to $\gamma=0.95$, estimated with Monte Carlo rollouts of the policy.
Increasing the discount $\gamma$ during training has the expected effect of qualitatively increasing the predictive lookahead of a single feedforward pass of the $\gamma$-model.
We found flow-based $\gamma$-models to be more reliable than GAN parameterizations, especially at higher discounts.
Corresponding GAN $\gamma$-model visualizations can be found in Appendix~\ref{app:gan_predictions} for comparison.

\begin{figure}[!b]
    $$\hspace{1.1cm}
    \textbf{Rewards}
        \hspace{1.4cm}
    \boldsymbol{\gamma}\textbf{-model predictions}
        \hspace{1.2cm}
    \textbf{Value estimates}
        \hspace{0.9cm}
    \textbf{Ground truth}$$
    \vspace{-0.8cm}
    \centering
    \includegraphics[width=1.0\linewidth]{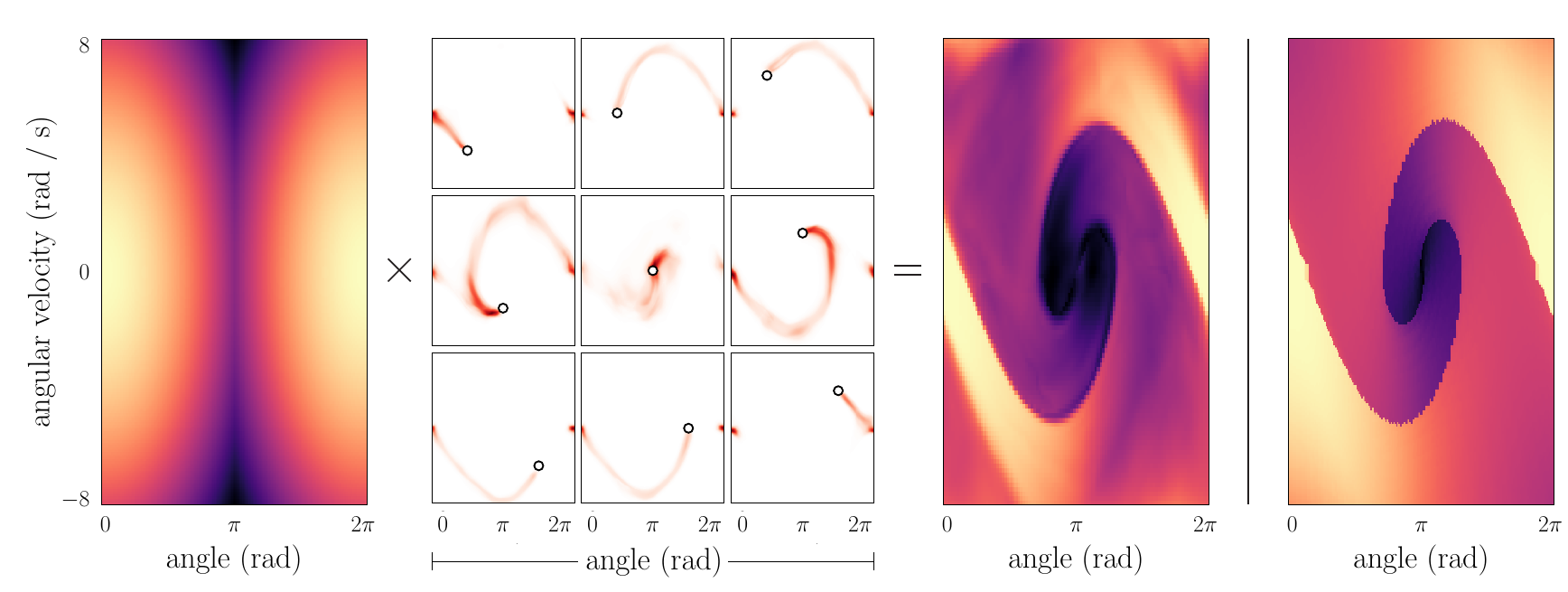} \\
    \caption{
    \textbf{($\boldgamma$-model value estimation)}
    Values are expectations of reward over a single feedforward pass of a $\gamma$-model (Equation~\ref{eq:q_estimation}).
    We visualize $\gamma$-model predictions ($\gamma=0.99$) from nine starting states, denoted by
    \protect\includegraphics[height=1.5ex]{gamma/images/circle.pdf},
    in the pendulum benchmark environment.
    Taking the expectation of reward over each of these predicted distributions yields a value estimate for the corresponding conditioning state.
    The rightmost plot depicts the value map produced by value iteration on a discretization of the same environment for reference.
    }
    \label{fig:value_estimation}
\end{figure}

Equation~\ref{eq:q_estimation} expresses values as an expectation over a single feedforward pass of a $\gamma$-model.
We visualize this relation in Figure~\ref{fig:value_estimation}, which depicts $\gamma$-model predictions on the pendulum environment for a discount of $\gamma=0.99$ and the resulting value map estimated by taking expectations over these predicted state distributions.
In comparison, value estimation for the same discount using a single-step model would require 299-step rollouts in order to recover $95\%$ of the probability mass (see Figure~\ref{fig:sr_composition}).

\subsection{Control}

To study the utility of the $\gamma$-model for model-based reinforcement learning, we use the $\gamma$-MVE estimator from Section~\ref{sec:gamma_mve} as a drop-in replacement for value estimation in SAC.
We compare this approach to the state-of-the-art in model-based and model-free methods, with representative algorithms consisting of SAC, PPO \citep{schulman2017ppo}, MBPO \citep{janner2019mbpo}, and MVE~\citep{feinberg2018mve}.
In $\gamma$-MVE, we use a model discount of $\gamma=0.8$, a value discount of $\gammav=0.99$ and a single model step ($n=1$).
We use a model rollout length of $5$ in MVE such that it has an effective horizon identical to that of $\gamma$-MVE.
Other hyperparameter settings can once again be found in Appendix~\ref{app:implementation}; details regarding the evaluation environments can be found in Appendix~\ref{app:environments}.
Figure~\ref{fig:gamma_mve} shows learning curves for all methods.
We find that $\gamma$-MVE converges faster than prior algorithms, twice as quickly as SAC, while retaining their asymptotic performance.

\begin{figure}[t!]
    \centering
    \includegraphics[width=1.0\linewidth]{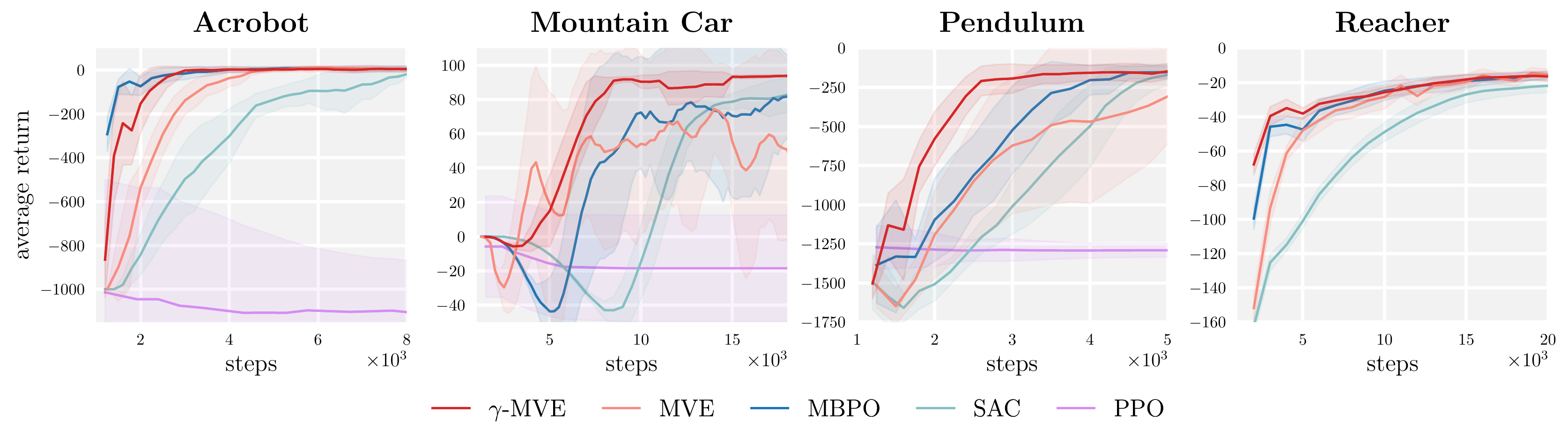}
    \vspace{-.8cm}
    \caption{
    \textbf{($\boldgamma$-MVE control performance)}
    Comparative performance of $\gamma$-MVE and four prior reinforcement learning algorithms on continuous control benchmark tasks. $\gamma$-MVE retains the asymptotic performance of SAC with sample-efficiency matching that of MBPO.
    Shaded regions depict standard deviation among $5$ seeds.
    }
    \label{fig:gamma_mve}
\end{figure}

\section{Discussion}
\label{sec:discussion}

We have introduced a new class of predictive model, a $\gamma$-model, that is a hybrid between standard model-free and model-based mechanisms. It is policy-conditioned and infinite-horizon, like a value function, but independent of reward, like a standard single-step model.
This new formulation of infinite-horizon prediction allows us to generalize the procedures integral to model-based control, yielding new variants of model rollouts and model-based value estimation.
Our experimental evaluation shows that, on tasks with low to moderate dimensionality, our method learns accurate long-horizon predictive distributions without sequential rollouts and can be incorporated into standard model-based reinforcement learning methods to produce results that are competitive with state-of-the-art algorithms.

However, scaling up this framework to more complex tasks, including high-dimensional continuous control problems and tasks with image observations, presents a number of additional challenges.
These challenges are largely those of generative modeling;
whereas temporal differences algorithms are conventionally used to estimate expectations of scalar random variables, we are here employing them to estimate high-dimensional joint distributions.
Unsurprisingly, this approach can fall short above a threshold dimensionality (in our experience, above $10$) or discount factor (above $0.99$), as these both increase the complexity of the distribution in question.
In the next chapter, we investigate whether the language modeling toolbox can be used to address this limitation directly and provide effective generative modeling solutions for the reinforcement learning problem setting.

\chapter[Reinforcement Learning as Sequence Modeling]{\fontsize{22}{14}\selectfont Reinforcement Learning as Sequence Modeling}
\renewcommand{\chaptertitle}{Reinforcement Learning as Sequence Modeling}
\label{ch:transformer}

\section{Introduction}
\label{sec:intro}

The standard treatment of reinforcement learning relies on decomposing a long-horizon problem into smaller, more local subproblems.
In model-free algorithms, this takes the form of the principle of optimality \citep{bellman1957dynamic}, a recursion that leads naturally to the class of dynamic programming methods like $Q$-learning.
In model-based algorithms, this decomposition takes the form of single-step predictive models, which reduce the problem of predicting high-dimensional, policy-dependent state trajectories to that of estimating a comparatively simpler, policy-agnostic transition distribution.
As seen in Chapter~\ref{ch:gamma}, these two approaches constitute the endpoints of a spectrum, and it is possible to design methods that interpolate between them.

However, we can also view reinforcement learning as analogous to a sequence generation problem, with the goal being to produce a sequence of actions that, when enacted in an environment, will yield a sequence of high rewards.
In this chapter, we consider the logical extreme of this analogy: does the toolbox of contemporary sequence modeling itself provide a viable reinforcement learning algorithm?
We investigate this question by treating trajectories as unstructured sequences of states, actions, and rewards.
We model the distribution of these trajectories using a Transformer architecture \citep{vaswani2017attention}, the current tool of choice for capturing long-horizon dependencies.
In place of the trajectory optimizers common in model-based control, we use beam search \citep{reddy1997beam}, a heuristic decoding scheme ubiquitous in natural language processing, as a planning algorithm.

Posing reinforcement learning, and more broadly data-driven control, as a sequence modeling problem handles many of the considerations that typically require distinct solutions: actor-critic algorithms require separate actors and critics, model-based algorithms require predictive dynamics models, and offline reinforcement learning methods often require estimation of the behavior policy \citep{fujimoto2019off}.
These components estimate different densities or distributions, such as that over actions in the case of actors and behavior policies, or that over states in the case of dynamics models.
Even value functions can be viewed as performing inference in a graphical model with auxiliary optimality variables, amounting to estimation of the distribution over future rewards~\citep{levine2018reinforcement}.
All of these problems can be unified under a single sequence model, which treats states, actions, and rewards as simply a stream of data.
The advantage of this perspective is that high-capacity sequence model architectures can be brought to bear on the problem, resulting in a more streamlined approach that could benefit from the same scalability underlying large-scale unsupervised learning results \citep{brown2020gpt3}.

We refer to our model as the Trajectory Transformer (\autoref{fig:architecture}) and evaluate it in the offline regime so as to be able to make use of large amounts of prior interaction data.
The Trajectory Transformer is a substantially more reliable long-horizon predictor than conventional dynamics models, even in Markovian environments for which the standard model parameterization is in principle sufficient.
When decoded with a modified beam search procedure that biases trajectory samples according to their cumulative reward, the Trajectory Transformer attains results on offline reinforcement learning benchmarks that are competitive with the best prior methods designed specifically for that setting.
Additionally, we describe how variations of the same decoding procedure yield a model-based imitation learning method, a goal-reaching method, and, when combined with dynamic programming, a state-of-the-art planner for sparse-reward, long-horizon tasks.
Our results suggest that the algorithms and architectural motifs that have been widely applicable in unsupervised learning carry similar benefits in reinforcement learning.

\begin{figure}
    \centering
    \includegraphics[width=1.0\linewidth]{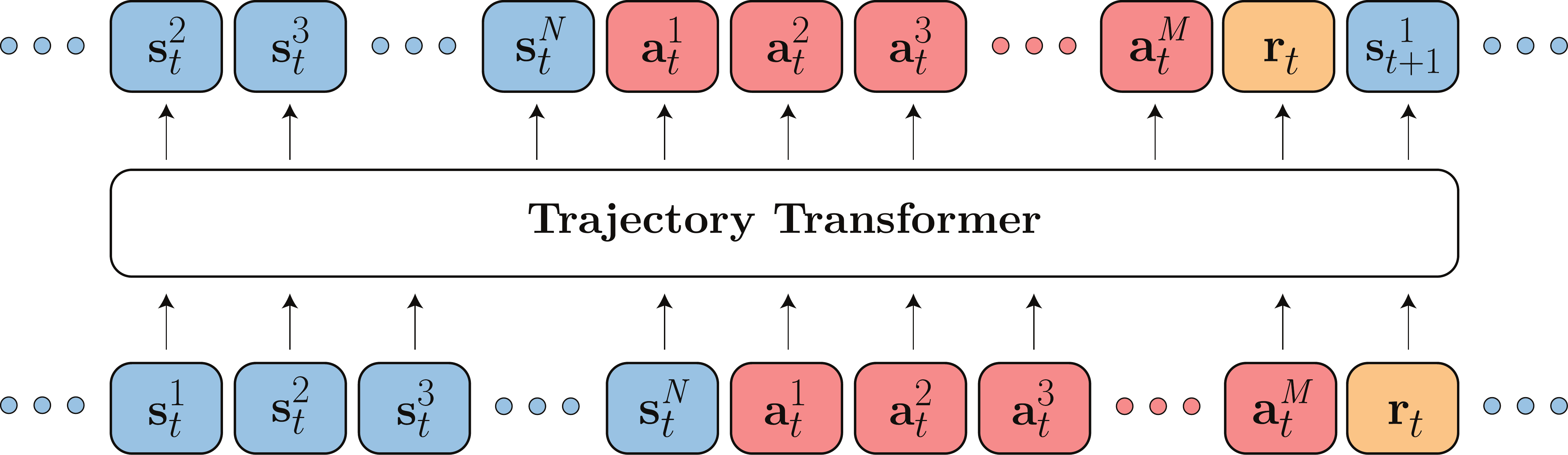}
    \vspace{.25cm}
    \caption{
    \textbf{(Trajectory Transformer architecture)}
    The Trajectory Transformer trains on sequences of (autoregressively discretized) states, actions, and rewards.
    Planning with the Trajectory Transformer mirrors the sampling procedure used to generate sequences from a language model.
    }
    \label{fig:architecture}
\end{figure}

\section{Related Work}
\label{sec:related}

Recent advances in sequence modeling with deep networks have led to rapid improvement in the effectiveness of such models, from LSTMs and sequence-to-sequence models~\citep{hochreiter1997long, NIPS2014_a14ac55a} to Transformer architectures with self-attention \citep{vaswani2017attention}.
In light of this, it is tempting to consider how such sequence models can lead to improved performance in reinforcement learning, which is also concerned with sequential processes~\citep{sutton1988learning}. Indeed, a number of prior works have studied applying sequence models of various types to represent components in standard reinforcement learning algorithms, such as policies, value functions, and models~\citep{bakker2002reinforcement, Heess2015MemorybasedCW, chiappa2017recurrent,parisotto2020stabilizing,parisotto2021efficient,kumar2020adaptive}. While such works demonstrate the importance of such models for representing memory~\citep{oh2016control}, they still rely on standard reinforcement learning algorithmic advances to improve performance. The goal in our work is different: we aim to replace as much of the reinforcement learning pipeline as possible with sequence modeling, so as to produce a simpler method whose effectiveness is determined by the representational capacity of the sequence model rather than algorithmic sophistication.

Estimation of probability distributions and densities arises in many places in learning-based control. This is most obvious in model-based reinforcement learning, where it is used to train predictive models that can then be used for planning or policy learning~\citep{sutton1990dyna,silver2008dyna2,fairbank2008reinforcement,deisenroth2011pilco,lampe2014modelnfq,heess2015svg,janner2020gamma,amos2020model}. However, it also figures heavily in offline reinforcement learning, where it is used to estimate conditional distributions over actions that serve to constrain the learned policy to avoid out-of-distribution behavior that is not supported under the dataset~\citep{fujimoto2019off,kumar2019stabilizing,ghasemipour2020emaq}; imitation learning, where it is used to fit an expert's actions to obtain a policy~\citep{ross2010efficient,ross2011reduction}; and other areas such as hierarchical reinforcement learning~\citep{peng2017deeploco,co2018self,jiang2019language}.
In our method, we train a single high-capacity sequence model to represent the joint distribution over sequences of states, actions, and rewards.
This serves as \emph{both} a predictive model \emph{and} a behavior policy (for imitation) or behavior constraint (for offline reinforcement learning).

Our approach to reinforcement learning is most closely related to prior model-based methods that plan with a learned model~\citep{chua2018pets,wang2019exploring}.
However, while these prior methods typically require additional machinery to work well, such as ensembles in the online setting~\citep{kurutach2018model,buckman2018sample,malik2019calibrated} or conservatism mechanisms in the offline setting~\citep{yu2020mopo,kidambi2020morel,argenson2020model}, our method does not require explicit handling of these components.
Modeling the states and actions jointly already provides a bias toward generating in-distribution actions, which avoids the need for explicit pessimism~\citep{fujimoto2019off,kumar2019stabilizing,ghasemipour2020emaq,nair2020accelerating,jin2020pessimism,yin2021near,dadashi2021offline}.
Our method also differs from most prior model-based algorithms in the dynamics model architecture used, with fully-connected networks parameterizing diagonal-covariance Gaussian distributions being a common choice \citep{chua2018pets}, though recent work has highlighted the effectiveness of autoregressive state prediction \citep{zhang2021autoregressive} like that used by the Trajectory Transformer.
In the context of recently proposed offline reinforcement learning algorithms, our method can be interpreted as a combination of model-based reinforcement learning and policy constraints~\citep{kumar2019stabilizing,wu2019behavior}, though our approach does not require introducing such constraints explicitly.
In the context of model-free reinforcement learning, our method also resembles recently proposed work on goal relabeling \citep{andrychowicz2017hindsight,rauber2017hindsight,ghosh2019gcsl,paster2021planning} and reward conditioning \citep{schmidhuber2019reinforcement,srivastava2019training,kumar2019reward} to reinterpret all past experience as useful demonstrations with proper contextualization.

Concurrently with our work, \citet{chen2021decision} also proposed a reinforcement learning approach centered around sequence prediction, focusing on reward conditioning as opposed to the beam-search-based planning used by the Trajectory Transformer.
Their work further supports the possibility that a high-capacity sequence model can be applied to reinforcement learning problems without the need for the components usually associated with reinforcement learning algorithms.

\section{Reinforcement Learning as Sequence Modeling}
\label{sec:method}

In this section, we describe the training procedure for our sequence model and discuss how it can be used for control.
We refer to the model as the Trajectory Transformer for brevity, but emphasize that at the implementation level, both our model and search strategy are nearly identical to those common in natural language processing.
As a result, modeling considerations are concerned less with architecture design and more with how to represent trajectory data -- potentially consisting of continuous states and actions -- for processing by a discrete-token architecture \citep{radford2018improving}.

\subsection{Trajectory Transformer}

At the core of our approach is the treatment of trajectory data as an unstructured sequence for modeling by a Transformer architecture.
A trajectory $\bm{\tau}$ consists of $T$ states, actions, and scalar rewards:
\[
    \bm{\tau} = \big( \bs_1, \ba_1, r_1, \bs_2, \ba_2, r_2, \ldots, \bs_T, \ba_T, r_T \big).
\]
In the event of continuous states and actions, we discretize each dimension independently. Assuming $N$-dimensional states and $M$-dimensional actions, this turns $\tau$ into sequence of length $T(N+M+1)$:
\[
\bm{\tau} = \big(\ldots, s_t^1, s_t^2, \ldots, s_t^{N}, a_t^1, a_t^2, \ldots, a_t^{M}, r_t, \ldots \big)~~~~~~~~ t=1, \ldots, T.
\]
Subscripts on all tokens denote timestep and superscripts on states and actions denote dimension (\emph{i.e.}, $s_t^i$ is the $i^\text{th}$ dimension of the state at time $t$).
While this choice may seem inefficient, it allows us to model the distribution over trajectories with more expressivity without simplifying assumptions such as Gaussian transitions. 

We investigate two simple discretization approaches:

\begin{enumerate}
    \item \textbf{Uniform:} All tokens for a given dimension correspond to a fixed width of the original continuous space. Assuming a per-dimension vocabulary size of $V$, the tokens for state dimension $i$ cover uniformly-spaced intervals of width $(\max \bs^i - \min \bs^i)/ V$.
    \item \textbf{Quantile:} All tokens for a given dimension account for an equal amount of probability mass under the empirical data distribution; each token accounts for 1 out of every $V$ data points in the training set.
\end{enumerate}

Uniform discretization has the advantage that it retains information about Euclidean distance in the original continuous space, which may be more reflective of the structure of a problem than the training data distribution.
However, outliers in the data may have outsize effects on the discretization size, leaving many tokens corresponding to zero training points.
The quantile discretization scheme ensures that all tokens are represented in the data.
We compare the two empirically in Section~\ref{sec:exp_rl}.

Our model is a Transformer decoder mirroring the GPT architecture \citep{radford2018improving}.
We use a smaller architecture than those typically used in large-scale language modeling, consisting of four layers and four self-attention heads.
(A full architectural description is provided in Appendix~\ref{sec:details}.)
Training is performed with the standard teacher-forcing procedure \citep{williams1989learning} used to train sequence models.
Denoting the parameters of the Trajectory Transformer as $\theta$ and induced conditional probabilities as $P_\theta$, the objective maximized during training is:
\[
\mathcal{L}(\tau) = 
    \sum_{t=1}^{T} \Big(
    \sum_{i=1}^{N} \log P_\theta\big(s_{t}^i \mid         
        \bs_{~t}^{<i},
        \bm{\tau}_{<t}
        \big) +
    \sum_{j=1}^{M} \log P_\theta\big(a_{t}^j \mid
        \ba_{t}^{<j},
        \bs_{t},
        \bm{\tau}_{<t}
        \big) +
    \log P_\theta\big(r_t \mid
        \ba_t,
        \bs_t,
        \bm{\tau}_{<t}
    \big)
    \Big),
\]
in which we use $\bm{\tau}_{<t}$ to denote a trajectory from timesteps $0$ through $t-1$, $\st^{<i}$ to denote dimensions $0$ through $i-1$ of the state at timestep $t$, and similarly for $\at^{<j}$.
We use the Adam optimizer \citep{ba2015adam} with a learning rate of $\num{2.5e-4}$ to train parameters $\theta$.

\def\NoNumber#1{{\def\alglinenumber##1{}\STATE #1}\addtocounter{ALG@line}{-1}}

{\centering
  \begin{algorithm}[t]
    \caption{Beam search}
    \label{alg:beam}
    \begin{algorithmic}[1]
    \STATE \textbf{Require} Input sequence $\mathbf{x}$, vocabulary $\mathcal{V}$, sequence length $T$, beam width $B$ \\
    \STATE \textbf{Initialize} $Y_0 = \{~(~)~\}$ \\
    \FOR{$t = 1, \ldots, T$}
        \STATE $\mathcal{C}_t \gets \{
            \mathbf{y}_{t-1} \circ y \mid 
            \mathbf{y}_{t-1} \in Y_{t-1} \text{~and~} y \in \mathcal{V}
        \}$ \hspace{0.75cm}\small{\color{gray}\te{// candidate single-token extensions}}
        \STATE $Y_{t} \gets \argmax{
            Y \subseteq \mathcal{C}_t, ~|Y| = B
        }{
            \log P_\theta(Y \mid \mathbf{x})
        }$ \hspace{1.5cm}\small\hspace{-.0cm}{\color{gray}\te{// $B$ most likely sequences from candidates}} \\
    \ENDFOR \\
    \STATE \textbf{Return} $\argmax{\mathbf{y} \in Y_T}{\log P_\theta(\mathbf{y} \mid \mathbf{x})}$
    \end{algorithmic}
  \end{algorithm}
}

\subsection{Planning with Beam Search}
\label{sec:tto}

We now describe how sequence generation with the Trajectory Transformer can be repurposed for control, focusing on three settings: imitation learning, goal-conditioned reinforcement learning, and offline reinforcement learning.
These settings are listed in increasing amount of required modification on top of the sequence model decoding approach routinely used in natural language processing.

The core algorithm providing the foundation of our planning techniques, beam search, is described in Algorithm~\ref{alg:beam} for generic sequences.
Following the presentation in \cite{meister2020beam}, we have overloaded $\log P_\theta(\cdot \mid \mathbf{x})$ to define the likelihood of a set of sequences in addition to that of a single sequence: $\log P_\theta(Y \mid x) = \sum_{\mathbf{y} \in Y} \log P_\theta (\mathbf{y} \mid \mathbf{x})$.
We use $(~)$ to denote the empty sequence and $\circ$ to represent concatenation.

\paragraph{Imitation learning.}
When the goal is to reproduce the distribution of trajectories in the training data, we can optimize directly for the probability of a trajectory $\bm{\tau}$.
This situation matches the goal of sequence modeling exactly and as such we may use Algorithm~\ref{alg:beam} without modification by setting the conditioning input $\mathbf{x}$ to the current state $\st$ (and optionally previous history $\bm{\tau}_{<t}$).

The result of this procedure is a tokenized trajectory $\bm{\tau}$, beginning from a current state $\st$, that has high probability under the data distribution.
If the first action $\at$ in the sequence is enacted and beam search is repeated, we have a receding horizon-controller.
This approach resembles a long-horizon model-based variant of behavior cloning, in which entire trajectories are optimized to match those of a reference behavior instead of only immediate state-conditioned actions.
If we set the predicted sequence length to be the action dimension, our approach corresponds exactly to the simplest form of behavior cloning with an autoregressive policy.

\paragraph{Goal-conditioned reinforcement learning.}
Transformer architectures feature a ``causal'' attention mask to ensure that predictions only depend on previous tokens in a sequence.
In the context of natural language, this design corresponds to generating sentences in the linear order in which they are spoken as opposed to an ordering reflecting their hierarchical syntactic structure
(see, however, \citealt{gu2019insertion} for a discussion of non-left-to-right sentence generation with autoregressive models).
In the context of trajectory prediction, this choice instead reflects physical causality, disallowing future events to affect the past.
However, the conditional probabilities of the past given the future are still well-defined, allowing us to condition samples not only on the preceding states, actions, and rewards that have already been observed, but also any future context that we wish to occur.
If the future context is a state at the end of a trajectory, we decode trajectories with probabilities of the form:
\[
    P_\theta(s_{t}^i \mid \bs_t^{<i}, \bm{\tau}_{<t}, {\color{cblue}\bs_T})
\]
We can use this directly as a goal-reaching method by conditioning on a desired final state $\color{cblue}\bs_T$.
If we always condition sequences on a final goal state, we may leave the lower-diagonal attention mask intact and simply permute the input trajectory to $\{ {\color{cblue}\bs_{T}}, \bs_1, \bs_2, \ldots, \bs_{T-1} \}$.
By prepending the goal state to the beginning of a sequence, we ensure that all other predictions may attend to it without modifying the standard attention implementation.
This procedure for conditioning resembles prior methods that use supervised learning to train goal-conditioned policies \citep{ghosh2019gcsl} and is also related to relabeling techniques in model-free reinforcement learning \citep{andrychowicz2017hindsight}.
In our framework, it is identical to the standard subroutine in sequence modeling: inferring the most likely sequence given available evidence.

\paragraph{Offline reinforcement learning.}

The beam search method described in Algorithm~\ref{alg:beam} optimizes sequences for their probability under the data distribution.
By replacing the log-probabilities of transitions with the predicted reward signal, we can use the same Trajectory Transformer and search strategy for reward-maximizing behavior.
Appealing to the control as inference graphical model~\citep{levine2018reinforcement}, we are in effect replacing a transition's log-probability in beam search with its log-probability of optimality.

Using beam-search as a reward-maximizing procedure has the risk of leading to myopic behavior.
To address this issue, we augment each transition in the training trajectories with reward-to-go:
$R_t = \sum_{t'=t}^{T} \gamma^{t'-t} r_{t'}$
and include it as an additional quantity, discretized identically to the others, to be predicted after immediate rewards $r_t$.
During planning, we then have access to value estimates from our model to add to cumulative rewards.
While acting greedily with respect to such Monte Carlo value estimates is known to suffer from poor sample complexity and convergence to suboptimal behavior when online data collection is not allowed, we only use this reward-to-go estimate as a heuristic to guide beam search, and hence our method does not require the estimated values to be as accurate as in methods that rely solely on the value estimates to select actions.

In offline RL, reward-to-go estimates are functions of the \emph{behavior} policy that collected the training data and do not, in general, correspond to the values achieved by the Trajectory Transformer-derived policy.
Of course, it is much simpler to learn the value function of the behavior policy than that of the optimal policy, since we can simply use Monte Carlo estimates without relying on Bellman updates.
A value function for an improved policy would provide a better search heuristic, though requires invoking the tools of dynamic programming.
In Section~\ref{sec:exp_rl} we show that the simple reward-to-go estimates are sufficient for planning with the Trajectory Transformer in many environments, but that improved value functions are useful in the most challenging settings, such as sparse-reward tasks.

Because the Trajectory Transformer predicts reward and reward-to-go only every $N+M+1$ tokens, we sample all intermediate tokens according to model log-probabilities, as in the imitation learning and goal-reaching settings.
More specifically, we sample full transitions $(\bs_t, \ba_t, r_t, R_t)$ using likelihood-maximizing beam search, treat these transitions as our vocabulary, and filter sequences of transitions by those with the highest cumulative reward plus reward-to-go estimate.

We have taken a sequence-modeling route to what could be described as a fairly simple-looking model-based planning algorithm, in that we sample candidate action sequences, evaluate their effects using a predictive model, and select the reward-maximizing trajectory.
This conclusion is in part due to the close relation between sequence modeling and trajectory optimization.
There is one dissimilarity, however, that is worth highlighting: by modeling actions jointly with states and sampling them using the same procedure, we can prevent the model from being queried on out-of-distribution actions.
The alternative, of treating action sequences as unconstrained optimization variables that do not depend on state \citep{nagabandi2018mbmf}, can more readily lead to model exploitation, as the problem of maximizing reward under a learned model closely resembles that of finding adversarial examples for a classifier \citep{goodfellow2014explaining}.

\begin{figure}[tb]
  \centering
  \begin{subfigure}[t]{0.02\textwidth}
    \raisebox{-1.4cm}{\rotatebox[origin=t]{90}{\textbf{Reference}}}
  \end{subfigure}
  \begin{subfigure}[t]{0.96\textwidth}
    \includegraphics[width=\linewidth,valign=t]{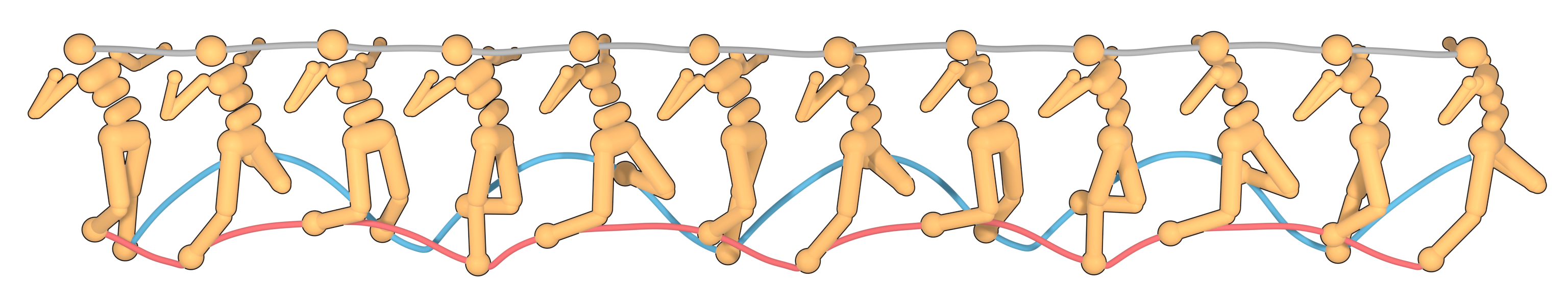} \\
  \end{subfigure}\hfill
  \vspace{-.5cm}
  \begin{subfigure}[t]{0.02\textwidth}
    \raisebox{-1.4cm}{\rotatebox[origin=t]{90}{\textbf{Transformer}}}
  \end{subfigure}
  \begin{subfigure}[t]{0.96\textwidth}
    \includegraphics[width=\linewidth,valign=t]{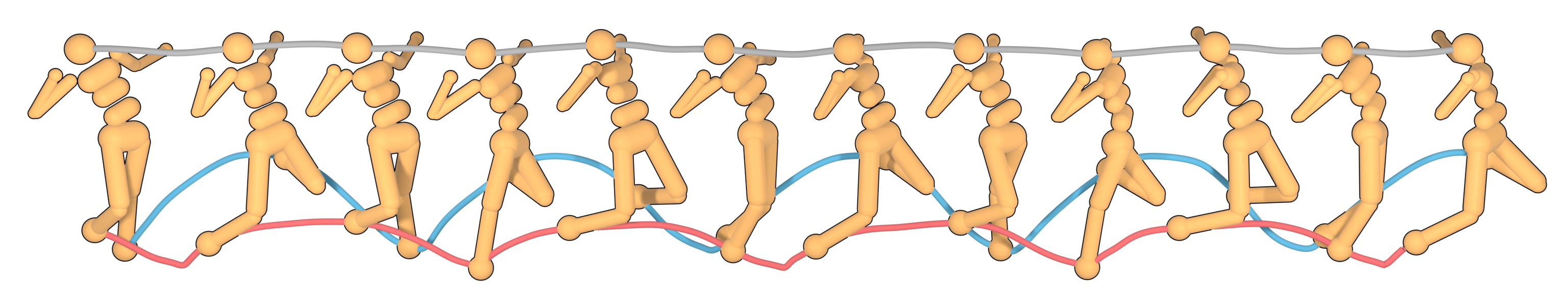} \\
  \end{subfigure}\hfill
  \vspace{-.5cm}
  \begin{subfigure}[t]{0.02\textwidth}
    \hspace{-.255cm}
    \raisebox{-1.6cm}{\rotatebox[origin=t]{90}{\textbf{Feedforward}}}
  \end{subfigure}
  \begin{subfigure}[t]{0.96\textwidth}
    \includegraphics[width=\linewidth,valign=t]{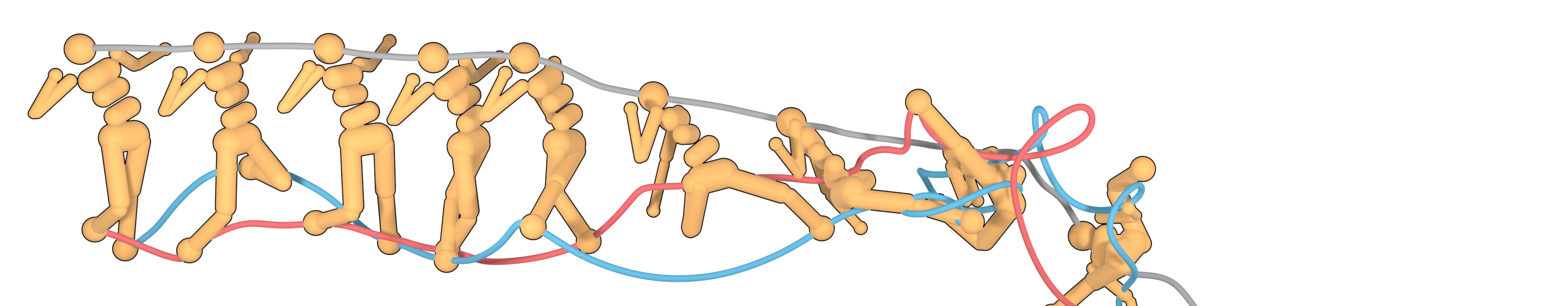} \\
  \end{subfigure}\hfill
    \caption{
    \textbf{(Transformer prediction visualization)}
    A qualitative comparison of length-100 trajectories generated by the Trajectory Transformer and a feedforward Gaussian dynamics model from PETS, a state-of-the-art planning algorithm \cite{chua2018pets}.
    Both models were trained on trajectories collected by a single policy, for which a true trajectory is shown for reference.
    Compounding errors in the single-step model lead to physically implausible predictions, whereas the Transformer-generated trajectory is visually indistinguishable from those produced by the policy acting in the actual environment.
    The paths of the feet and head are traced through space for depiction of the movement between rendered frames.
    }
    \label{fig:humanoid}
\end{figure}

\section{Experimental Evaluation}
\label{sec:experiments}

Our experimental evaluation focuses on (1) the accuracy of the Trajectory Transformer as a long-horizon predictor compared to standard dynamics model parameterizations and (2) the utility of sequence modeling tools -- namely beam search -- as a control algorithm in the context of offline reinforcement learning, imitation learning, and goal-reaching.

\subsection{Model Analysis}

We begin by evaluating the Trajectory Transformer
as a long-horizon policy-conditioned predictive model.
The usual strategy for predicting trajectories given a policy is to rollout with a single-step model, with actions supplied by the policy.
Our protocol differs from the standard approach not only in that the model is not Markovian, but also in that it does not require access to a policy to make predictions -- the outputs of the policy are modeled alongside the states encountered by that policy.
Here, we focus only on the quality of the model's predictions; we use actions predicted by the model for an imitation learning method in the next subsection.

\begin{figure}
    \centering
    \includegraphics[width=1.0\linewidth]{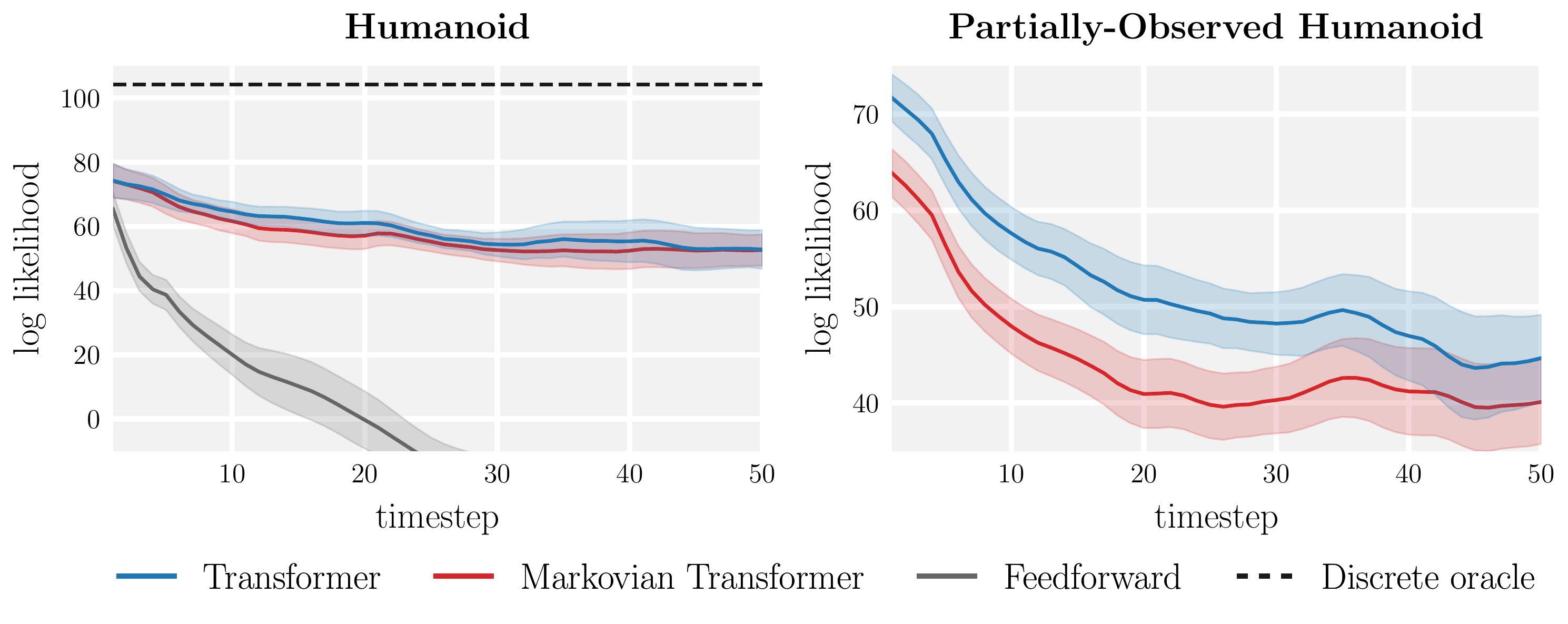}
    \caption{
    \textbf{(Compounding model errors)}
    We compare the accuracy of the Trajectory Transformer (with uniform discretization) to that of the probabilistic feedforward model ensemble \citep{chua2018pets} over the course of a planning horizon in the humanoid environment, corresponding to the trajectories visualized in Figure~\ref{fig:humanoid}.
    The Trajectory Transformer has substantially better error compounding with respect to prediction horizon than the feedforward model.
    The discrete oracle is the maximum log likelihood attainable given the discretization size; see Appendix~\ref{app:oracle} for a discussion.
    }
    \label{fig:model_error}
\end{figure}

\paragraph{Trajectory predictions.}
Figure~\ref{fig:humanoid} depicts a visualization of predicted 100-timestep trajectories from our model after having trained on a dataset collected by a trained humanoid policy.
Though model-based methods have been applied to the humanoid task, prior works tend to keep the horizon intentionally short to prevent the accumulation of model errors \citep{janner2019mbpo,amos2020model}.
The reference model is the probabilistic ensemble implementation of PETS \citep{chua2018pets}; we tuned the number of models within the ensemble, the number of layers, and layer sizes, but were unable to produce a model that predicted accurate sequences for more than a few dozen steps.
In contrast, we see that the Trajectory Transformer's
long-horizon predictions are substantially more accurate, remaining visually indistinguishable from the ground-truth trajectories even after 100 predicted steps.
To our knowledge, no prior model-based reinforcement learning algorithm has demonstrated predicted rollouts of such accuracy and length on tasks of comparable dimensionality.

\paragraph{Error accumulation.}
A quantitative account of the same finding is provided in Figure~\ref{fig:model_error}, in which we evaluate the model's accumulated error versus prediction horizon.
Standard predictive models tend to have excellent single-step errors but poor long-horizon accuracy, so instead of evaluating a test-set single-step likelihood, we sample 1000 trajectories from a fixed starting point to estimate the per-timestep state marginal predicted by each model.
We then report the likelihood of the states visited by the reference policy on a held-out set of trajectories under these predicted marginals.
To evaluate the likelihood under our discretized model, we treat each bin as a uniform distribution over its specified range; by construction, the model assigns zero probability outside of this range.

To better isolate the source of the Transformer's improved accuracy over standard single-step models, we also evaluate a Markovian variant of our same architecture.
This ablation has a truncated context window that prevents it from attending to more than one timestep in the past.
This model performs similarly to the trajectory Transformer on fully-observed environments, suggesting that architecture differences and increased expressivity from the autoregressive state discretization play a large role in the trajectory Transformer's long-horizon accuracy.
We construct a partially-observed version of the same humanoid environment, in which each dimension of every state is masked out with $50\%$ probability (Figure~\ref{fig:model_error} right), and find that, as expected, the long-horizon conditioning plays a larger role in the model's accuracy in this setting.

\begin{figure}
    \centering
    \includegraphics[width=0.35\linewidth]{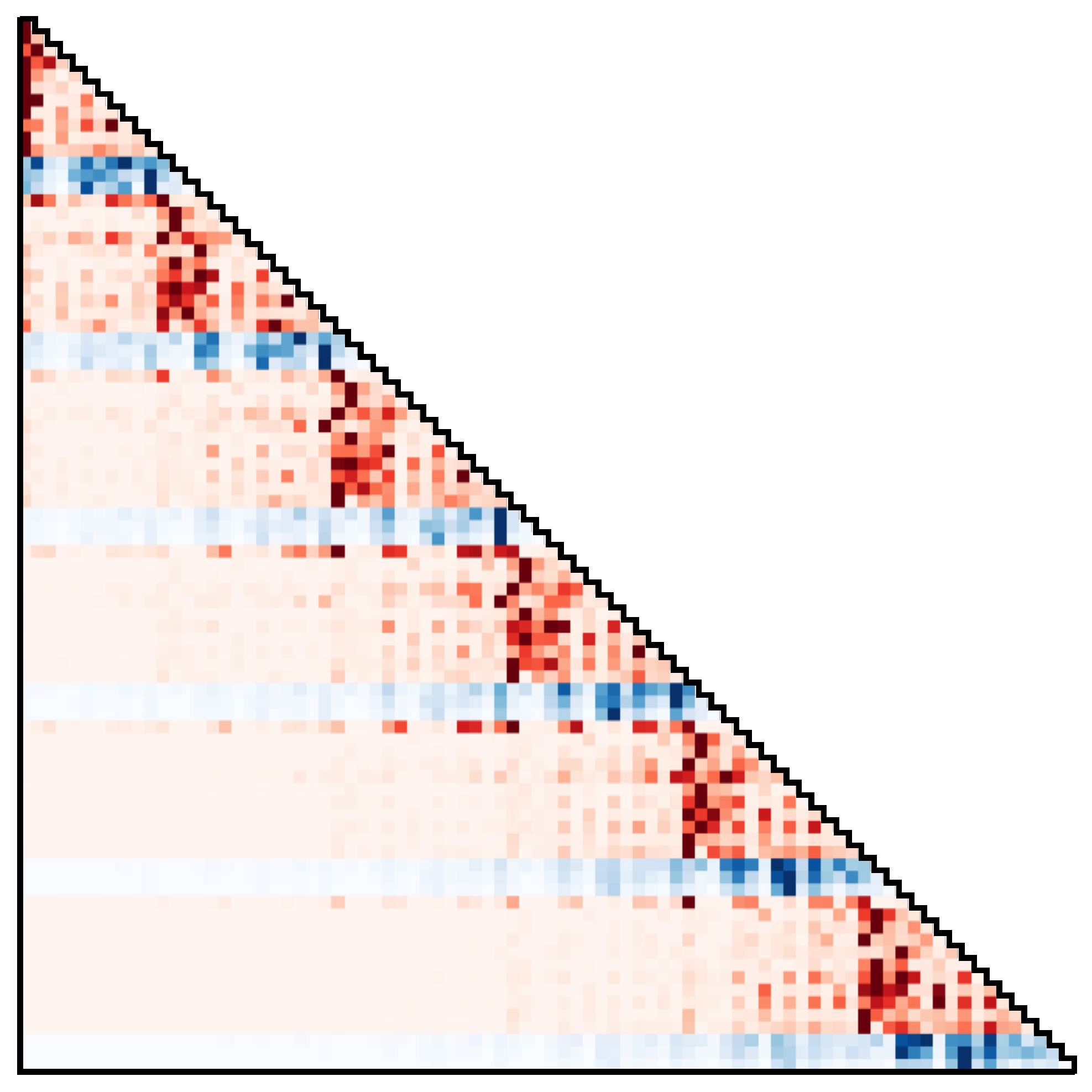}
    \hspace{0.15\linewidth}
    \includegraphics[width=0.35\linewidth]{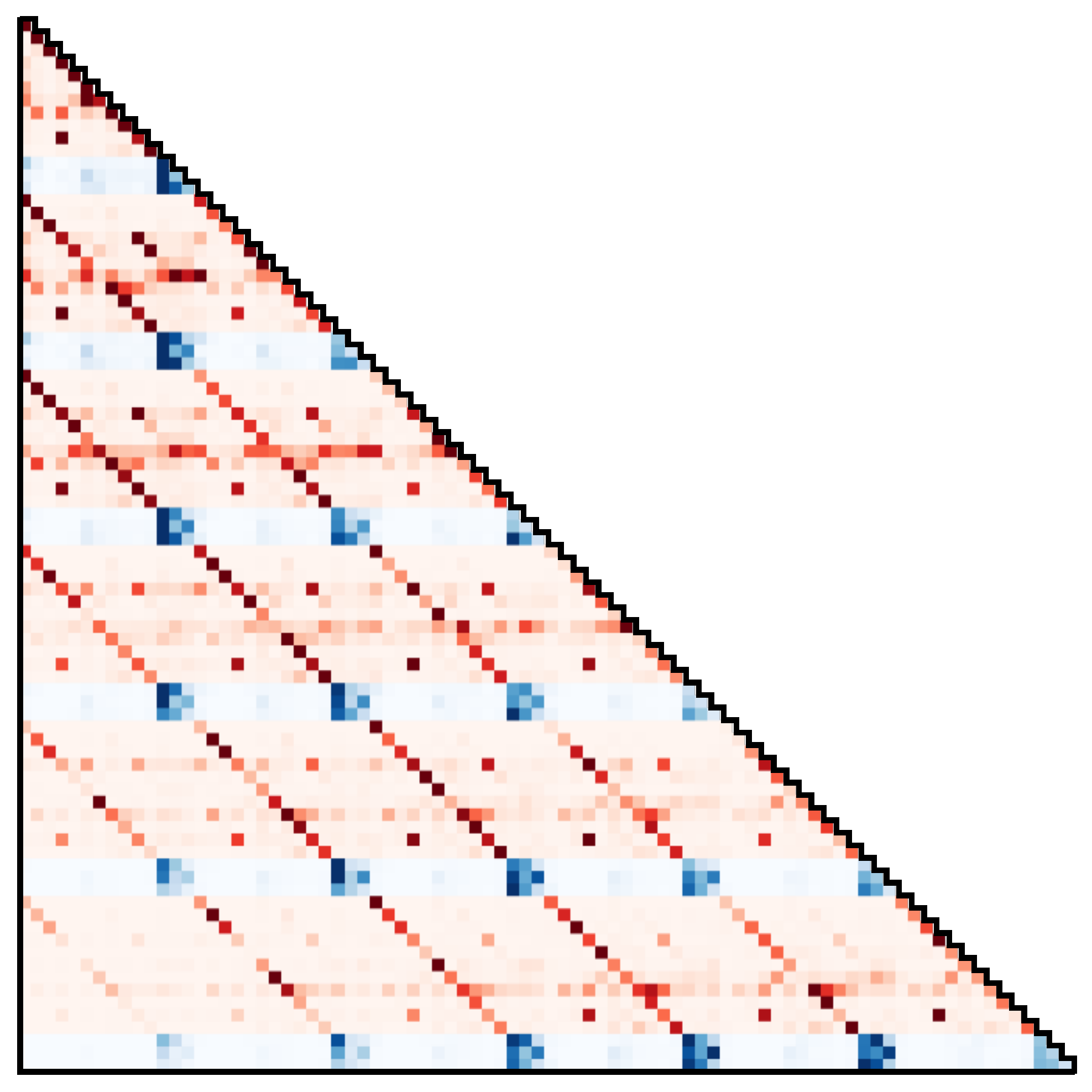}
    \begin{flushleft}
    \vspace{-6.2cm}
    ${\color{cred}\st}$ \hspace{7.8cm} ${\color{cred}\st}$ \\
    \vspace{.04cm}
    ${\color{cblue}\at}$ \hspace{7.8cm} ${\color{cblue}\at}$ \\
    \vspace{.95cm}
    {\color{cred}\Large\vdots} \hspace{8.2cm} \\
        \vspace{-0.95cm}\hspace{8.2cm}
        {\color{cred}\Large\vdots} \\
    \vspace{.1cm}
    {\color{cblue}\Large\vdots}  \hspace{8.2cm} \\
        \vspace{-0.95cm}\hspace{8.2cm}
        {\color{cblue}\Large\vdots} \\
    \vspace{.77cm}
    ${\color{cred}\bs_{t+5}}$  \hspace{7.45cm} ${\color{cred}\bs_{t+5}}$ \\
    \vspace{.02cm}
    ${\color{cblue}\ba_{t+5}}$ \hspace{7.45cm} ${\color{cblue}\ba_{t+5}}$ \\
    \vspace{.05cm}
    \vspace{.2cm}
    \end{flushleft}
    \vspace{.05cm}
    \caption{
    \textbf{(Attention patterns)}
    We observe two distinct types of attention masks during trajectory prediction.
    In the first, both {\color{cred}states} and {\color{cblue}actions} are dependent primarily on the immediately preceding transition, corresponding to a model that has learned the Markov property.
    The second strategy has a striated appearance, with state dimensions depending most strongly on the same dimension of multiple previous timesteps.
    Surprisingly, actions depend more on past actions than they do on past states, reminiscent of the action smoothing used in some trajectory optimization algorithms \citep{nagabandi2019pddm}.
    The above masks are produced by a first- and third-layer attention head during sequence prediction on the hopper benchmark; reward dimensions are omitted for this visualization.$^1$
    }
    \label{fig:attention}
\end{figure}

\paragraph{Attention patterns.}
We visualize the attention maps during model predictions in Figure~\ref{fig:attention}.
We find two primary attention patterns.
The first is a discovered Markovian strategy, in which a state prediction attends overwhelmingly to the previous transition.
The second is qualitatively striated, with the model attending to specific dimensions in multiple prior states for each state prediction.
Simultaneously, the action predictions attend to prior actions more than they do prior states.
The action dependencies contrast with the usual formulation of behavior cloning, in which actions are a function of only past states, but is reminiscent of the action filtering technique used in some planning algorithm to produce smoother action sequences \citep{nagabandi2019pddm}.

\subsection{Reinforcement Learning and Control}
\label{sec:exp_rl}

\begin{table*}[b]
\centering
\footnotesize
\begin{tabular*}{\textwidth}{@{\extracolsep{\fill}}llrrrrrrr}
\toprule
\multicolumn{1}{c}{\textbf{Dataset}} & \multicolumn{1}{c}{\textbf{Environment}} & \multicolumn{1}{c}{\textbf{BC}} & \multicolumn{1}{c}{\textbf{MBOP}} & \multicolumn{1}{c}{\textbf{BRAC}} & \multicolumn{1}{c}{\textbf{CQL}} & \multicolumn{1}{c}{\textbf{DT}} & \multicolumn{1}{c}{\textbf{TT} \scriptsize{\textcolor{cgrey}{(uniform)}}} & \multicolumn{1}{c}{\textbf{TT} \scriptsize{\textcolor{cgrey}{(quantile)}}} \\ 
\midrule
Med-Expert & HalfCheetah & $59.9$ & $105.9$ & $41.9$ & $91.6$ & $86.8$ & $40.8$ \scriptsize{\raisebox{1pt}{$\pm 2.3$}} & $95.0$ \scriptsize{\raisebox{1pt}{$\pm 0.2$}} \\ 
Med-Expert & Hopper & $79.6$ & $55.1$ & $0.9$ & $105.4$ & $107.6$ & $106.0$ \scriptsize{\raisebox{1pt}{$\pm 0.28$}} & $110.0$ \scriptsize{\raisebox{1pt}{$\pm 2.7$}} \\ 
Med-Expert & Walker2d & $36.6$ & $70.2$ & $81.6$ & $108.8$ & $108.1$ & $91.0$ \scriptsize{\raisebox{1pt}{$\pm 2.8$}} & $101.9$ \scriptsize{\raisebox{1pt}{$\pm 6.8$}} \\ 
\midrule
Medium & HalfCheetah & $43.1$ & $44.6$ & $46.3$ & $44.0$ & $42.6$ & $44.0$ \scriptsize{\raisebox{1pt}{$\pm 0.31$}} & $46.9$ \scriptsize{\raisebox{1pt}{$\pm 0.4$}} \\ 
Medium & Hopper & $63.9$ & $48.8$ & $31.3$ & $58.5$ & $67.6$ & $67.4$ \scriptsize{\raisebox{1pt}{$\pm 2.9$}} & $61.1$ \scriptsize{\raisebox{1pt}{$\pm 3.6$}} \\ 
Medium & Walker2d & $77.3$ & $41.0$ & $81.1$ & $72.5$ & $74.0$ & $81.3$ \scriptsize{\raisebox{1pt}{$\pm 2.1$}} & $79.0$ \scriptsize{\raisebox{1pt}{$\pm 2.8$}} \\ 
\midrule
Med-Replay & HalfCheetah & $4.3$ & $42.3$ & $47.7$ & $45.5$ & $36.6$ & $44.1$ \scriptsize{\raisebox{1pt}{$\pm 0.9$}} & $41.9$ \scriptsize{\raisebox{1pt}{$\pm 2.5$}} \\ 
Med-Replay & Hopper & $27.6$ & $12.4$ & $0.6$ & $95.0$ & $82.7$ & $99.4$ \scriptsize{\raisebox{1pt}{$\pm 3.2$}} & $91.5$ \scriptsize{\raisebox{1pt}{$\pm 3.6$}} \\ 
Med-Replay & Walker2d & $36.9$ & $9.7$ & $0.9$ & $77.2$ & $66.6$ & $79.4$ \scriptsize{\raisebox{1pt}{$\pm 3.3$}} & $82.6$ \scriptsize{\raisebox{1pt}{$\pm 6.9$}} \\ 
\midrule
\multicolumn{2}{c}{\textbf{Average}} & 47.7 & 47.8 & 36.9 & 77.6 & 74.7 & 72.6 \hspace{.6cm} & 78.9 \hspace{.6cm} \\ 
\bottomrule
\end{tabular*}

\caption{
\textbf{(Offline reinforcement learning)} The Trajectory Transformer (TT) performs on par with or better than the best prior offline reinforcement learning algorithms on D4RL locomotion (\texttt{v2}) tasks.
Results for TT variants correspond to the mean and standard error over 15 random seeds (5 independently trained Transformers and 3 trajectories per Transformer).
We detail the sources of the performance for other methods in Appendix~\ref{app:baselines}.
}
\label{table:d4rl}
\end{table*}

\paragraph{Offline reinforcement learning.}
We evaluate the Trajectory Transformer on a number of environments from the D4RL offline benchmark suite \citep{fu2020d4rl}, including the locomotion and AntMaze domains.
This evaluation is the most difficult of our control settings, as reward-maximizing behavior is the most qualitatively dissimilar from the types of behavior that are normally associated with unsupervised modeling -- namely, imitative behavior.
Results for the locomotion environments are shown in Table~\ref{table:d4rl}.
We compare against five other methods spanning other approaches to data-driven control: (1) behavior-regularized actor-critic (BRAC; \citealt{wu2019behavior}) and conservative $Q$-learning (CQL; \citealt{kumar2020conservative}) represent the current state-of-the-art in model-free offline RL; model-based offline planning (MBOP; \citealt{argenson2020model}) is the best-performing prior offline trajectory optimization technique; decision transformer (DT; \cite{chen2021decision}) is a concurrently-developed sequence-modeling approach that uses return-conditioning instead of planning; and behavior-cloning (BC) provides the performance of a pure imitative method.

The Trajectory Transformer performs on par with or better than all prior methods (Table~\ref{table:d4rl}).
The two discretization variants of the Trajectory Transformer, uniform and quantile, perform similarly on all environments except for HalfCheetah-Medium-Expert, where the large range of the velocities prevents the uniform discretization scheme from recovering the precise actuation required for enacting the expert policy.
As a result, the quantile discretization approach achieves a return of more than twice that of the uniform discretization.\blfootnote{
    $^1$More attention visualizations can be found at 
    \href{https://trajectory-transformer.github.io/attention}
    {\texttt{trajectory-transformer.github.io/attention}}
}

\begin{figure}[t]
\centering
\includegraphics[width=0.85\linewidth]{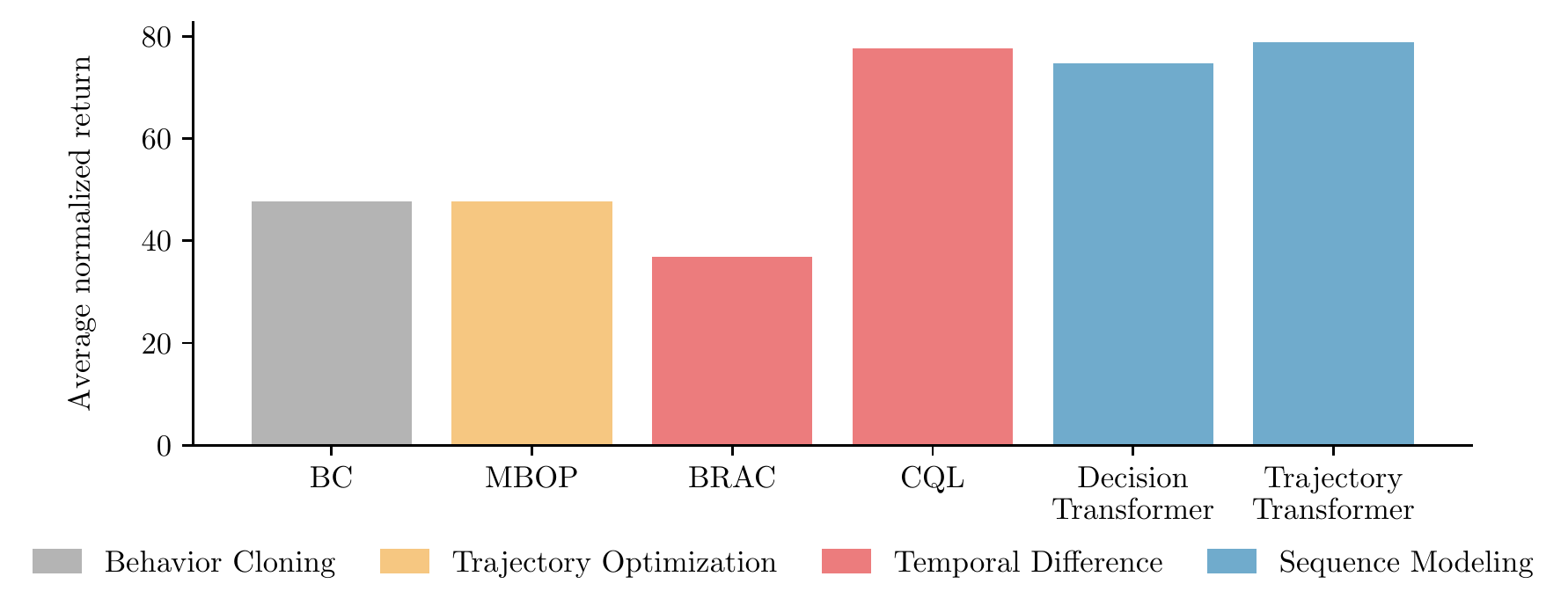}

\caption{
\textbf{(Offline locomotion performance)}
A plot showing the average per-algorithm performance in Table~\ref{table:d4rl}, with bars colored according to a crude algorithm categorization.
In this plot, ``Trajectory Transformer'' refers to the quantile discreization variant.
}

\label{fig:offline}
\end{figure}

\paragraph{Combining with $Q$-functions.}
Though Monte Carlo value estimates are sufficient for many standard offline reinforcement learning benchmarks, in sparse-reward and long-horizon settings they become too uninformative to guide the beam-search-based planning procedure.
In these problems, the value estimate from the Transformer can be replaced with a $Q$-function trained via dynamic programming.
We explore this combination by using the $Q$-function from the implicit $Q$-learning algorithm (IQL; \citealt{kostrikov2021implicit}) on the AntMaze navigation tasks \citep{fu2020d4rl}, for which there is only a sparse reward upon reaching the goal state.
These tasks evaluate temporal compositionality because they require stitching together multiple zero-reward trajectories in the dataset to reach a designated goal.

AntMaze results are provided in Table~\ref{table:antmaze}.
$Q$-guided Trajectory Transformer planning outperforms all prior methods on all maze sizes and dataset compositions.
In particular, it outperforms the IQL method from which we obtain the $Q$-function, underscoring that planning with a $Q$-function as a search heuristic can be less susceptible to errors in the $Q$-function than policy extraction.
However, because the $Q$-guided planning procedure still benefits from the temporal compositionality of both dynamic programming and planning, it outperforms return-conditioning approaches, such as the Decision Transformer, that suffer due to the lack of complete demonstrations in the AntMaze datasets.

\begin{table*}[t]
\centering
\small
\begin{tabular*}{\textwidth}{@{\extracolsep{\fill}}llrrrrrr}
\toprule
\multicolumn{1}{c}{\textbf{Dataset}} & \multicolumn{1}{c}{\textbf{Environment}} &
    \multicolumn{1}{c}{\textbf{BC}} &
    \multicolumn{1}{c}{\textbf{CQL}} &
    \multicolumn{1}{c}{\textbf{IQL}} &
    \multicolumn{1}{c}{\textbf{DT}} &
    \multicolumn{1}{c}{\textbf{TT \scriptsize{\textcolor{cgrey}{$\bm{(+Q)}$}}}} \\ 
\midrule
Umaze & AntMaze & 
    $54.6$ &
    $74.0$ &
    $87.5$  &
    $59.2$ &
    $100.0$ \scriptsize{\raisebox{1pt}{$\pm 0.0$}} \\ 
Medium-Play & AntMaze &
    $0.0$ &
    $61.2$ &
    $71.2$ &
    $0.0$ &
    $93.3$ \scriptsize{\raisebox{1pt}{$\pm 6.4$}} \\ 
Medium-Diverse & AntMaze &
    $0.0$ &
    $53.7$ &
    $70.0$ &
    $0.0$ &
    $100.0$ \scriptsize{\raisebox{1pt}{$\pm 0.0$}} \\
Large-Play & AntMaze &
    $0.0$ &
    $15.8$ &
    $39.6$ &
    $0.0$ &
    $66.7$ \scriptsize{\raisebox{1pt}{$\pm 12.2$}} \\ 
Large-Diverse & AntMaze &
    $0.0$ &
    $14.9$ &
    $47.5$ &
    $0.0$ &
    $60.0$ \scriptsize{\raisebox{1pt}{$\pm 12.7$}} \\
\midrule
\multicolumn{2}{c}{\textbf{Average}} & 10.9 & 44.9 & 63.2 & 11.8 & 84.0 \hspace{.6cm} \\ 
\bottomrule
\end{tabular*}
\caption{
\textbf{(Combining with $Q$-functions)}
Performance on the sparse-reward AntMaze (\texttt{v0}) navigation task.
Using a $Q$-function as a search heuristic with the Trajectory Transformer ({TT~\scriptsize{\textcolor{cgrey}{$\bm{(+Q)}$}}}) outperforms policy extraction from the $Q$-function (IQL) and return-conditioning approaches like the Decision Transformer (DT).
We report means and standard error over 15 random seeds for {TT \scriptsize{\textcolor{cgrey}{$\bm{(+Q)}$}}}; baseline results are taken from \cite{kostrikov2021implicit}.
}

\label{table:antmaze}
\end{table*}

\paragraph{Imitation and goal-reaching.}
We additionally plan with the Trajectory Transformer using standard likelihood-maximizing beam search, as opposed to the return-maximizing version used for offline RL.
We find that after training the model on datasets collected by expert policies \citep{fu2020d4rl}, using beam search as a receding-horizon controller achieves an average normalized return of $104\%$ and $109\%$ in the Hopper and Walker2d environments, respectively, using the same evaluation protocol of 15 runs described as in the offline reinforcement learning results.
While this result is perhaps unsurprising, as behavior cloning with standard feedforward architectures is already able to reproduce the behavior of the expert policies, it demonstrates that a decoding algorithm used for language modeling can be effectively repurposed for control.

Finally, we evaluate the goal-reaching variant of beam-search, which conditions on a future desired state alongside previously encountered states.
We use a continuous variant of the classic four rooms environment as a testbed \citep{sutton1999semimdps}.
Our training data consists of trajectories collected by a pretrained goal-reaching agent, with start and goal states sampled uniformly at random across the state space.
Figure~\ref{fig:goals} depicts routes taken by the the planner.
Anti-causal conditioning on a future state allows for beam search to be used as a goal-reaching method.
No reward shaping, or rewards of any sort, are required; the planning method relies entirely on goal relabeling.
An extension of this experiment to procedurally-generated maps is described in Appendix~\ref{app:minigrid}.

\begin{figure}[t]
    \centering
    \includegraphics[width=0.325\linewidth]{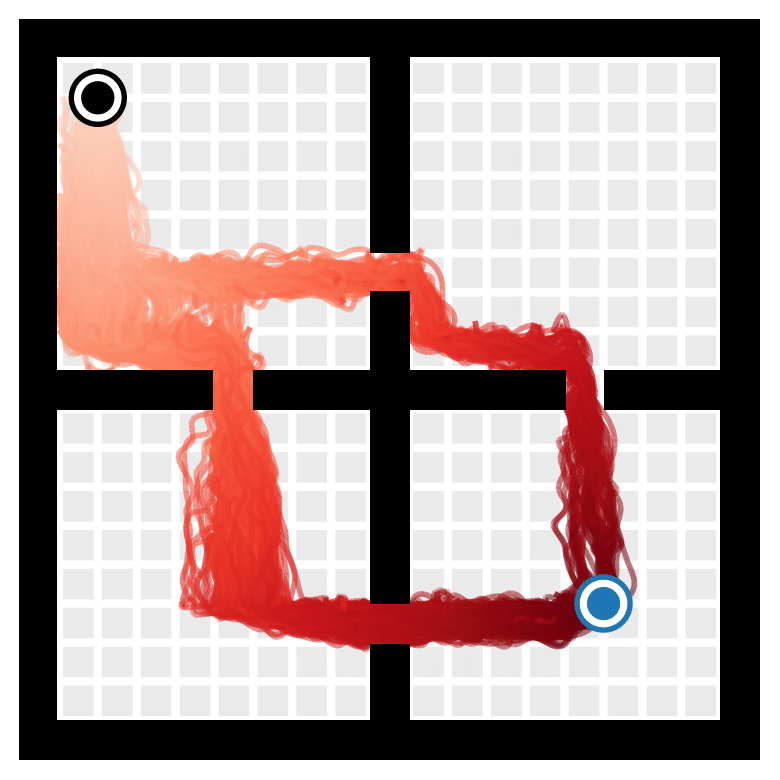}
    \includegraphics[width=0.325\linewidth]{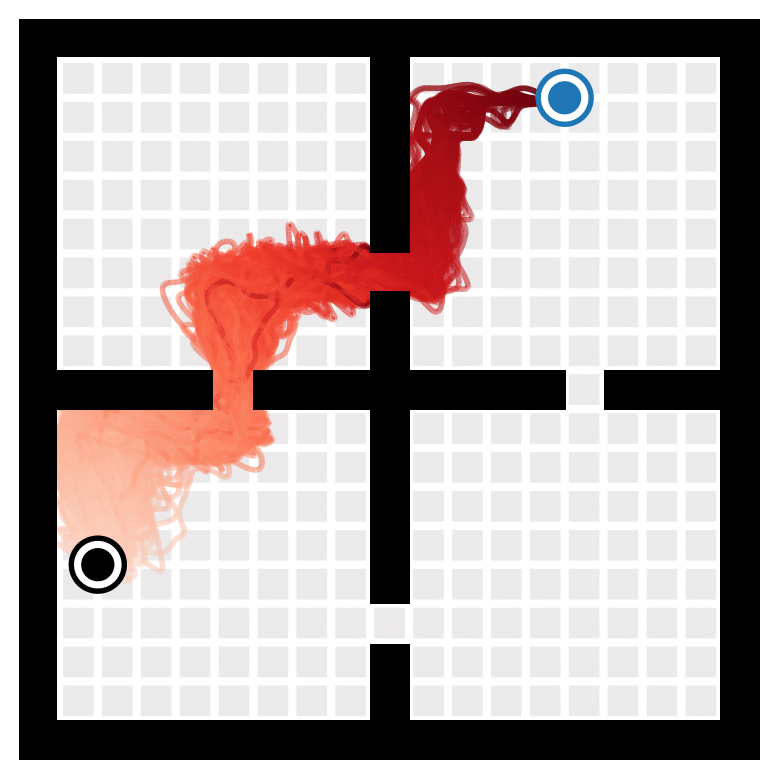}
    \includegraphics[width=0.325\linewidth]{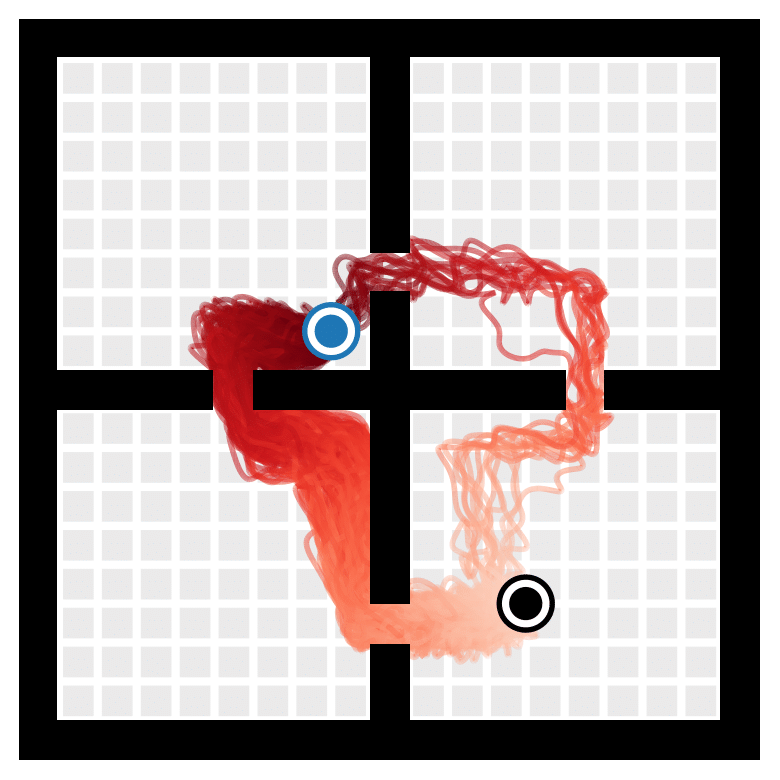}
    \caption{
    \textbf{(Goal-reaching)}
    Trajectories collected by TTO with anti-causal goal-state conditioning in a continuous variant of the four rooms environment.
    Trajectories are visualized as curves passing through all encountered states, with color becoming more saturated as time progresses.
    Note that these curves depict real trajectories collected by the controller and not sampled sequences.
    The starting state is depicted by
    \protect{\includegraphics[height=.35cm]{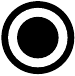}}
    and the goal state by
    \protect{\includegraphics[height=.35cm]{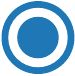}}.
    Best viewed in color.
    }
    \label{fig:goals}
\end{figure}

\section{Discussion}
\label{sec:discussion}

We have presented a sequence modeling view on reinforcement learning that enables us to derive a single algorithm for a diverse range of problem settings, unifying many of the standard components of reinforcement learning algorithms (such as policies, models, and value functions) under a single sequence model. The algorithm involves training a sequence model jointly on states, actions, and rewards and sampling from it using a minimally modified beam search.
Despite drawing from the tools of large-scale language modeling instead of those normally associated with control, we find that this approach is effective in imitation learning, goal-reaching, and offline reinforcement learning.

However, prediction with Transformers is currently slower and more resource-intensive than prediction with the types of single-step models often used in model-based control, requiring up to multiple seconds for action selection when the context window grows too large.
This precludes real-time control with standard Transformers for most dynamical systems.
While the beam-search-based planner is conceptually an instance of model-predictive control, and as such could be applicable wherever model-based RL is, in practice the slow planning also makes online RL experiments unwieldy.
(Computationally-efficient Transformer architectures \citep{tay2021long} could potentially cut runtimes down substantially.)
Further, we have chosen to discretize continuous data to fit a standard architecture instead of modifying the architecture to handle continuous inputs.
While we found this design to be much more effective than conventional continuous dynamics models, it does in principle impose an upper bound on prediction precision.

The effectiveness of the Trajectory Transformer stems largely from its improved predictive accuracy compared to dynamics models used in prior model-based reinforcement learning algorithms;
its weaknesses are largely consequences of the beam search-based planner in which it is embedded.
In the next chapter, we will consider whether it is possible to retain the strengths of the Trajectory Transformer without suffering from its limitations by building a planning algorithm from the ground up around the affordances of a specific generative model, as opposed to using an improved generative model in a generic search algorithm for model-predictive control.

\chapter{Planning with Diffusion}
\renewcommand{\chaptertitle}{Planning with Diffusion}
\label{ch:diffuser}

\section{Introduction}
\label{sec:intro}

Planning with a learned model is a conceptually simple framework for reinforcement learning and data-driven decision-making.
Its appeal comes from employing learning techniques only where they are the most mature and effective: for the approximation of unknown environment dynamics in what amounts to a supervised learning problem.
Afterwards, the learned model may be plugged into conventional search algorithms \citep{reddy1997beam,kocsis2006mcts} or trajectory optimization routines \cite{tassa2012synthesis,posa2014direct,matthew2017trajectory},
as demonstrated with the Trajectory Transformer in Chapter~\ref{ch:transformer}.

However, this combination has a number of shortcomings.
Because powerful trajectory optimizers can exploit learned models, plans generated by better optimizers often look more like adversarial examples than optimal trajectories \cite{talvitie2014hallcuinated,ke2018modeling}.
As a result, contemporary model-based reinforcement learning algorithms often inherit more from model-free methods, such as value functions and policy gradients \cite{wang2019benchmarking}, than from the trajectory optimization toolbox.
Those methods that do rely on online planning tend to use simple gradient-free trajectory optimization routines like random shooting \cite{nagabandi2018mbmf} or the cross-entropy method \cite{botev2013cem,chua2018pets} to avoid the aforementioned issues.
Because these problems stem from the limitations of the planning algorithm, they cannot be fully overcome by improved model quality.

\begin{figure}
\centering
\resizebox{0.8\columnwidth}{!}{
    \begin{tikzpicture}
        \node[anchor=south west,inner sep=0] (image) at (0,0)
            {\includegraphics[width=\columnwidth]{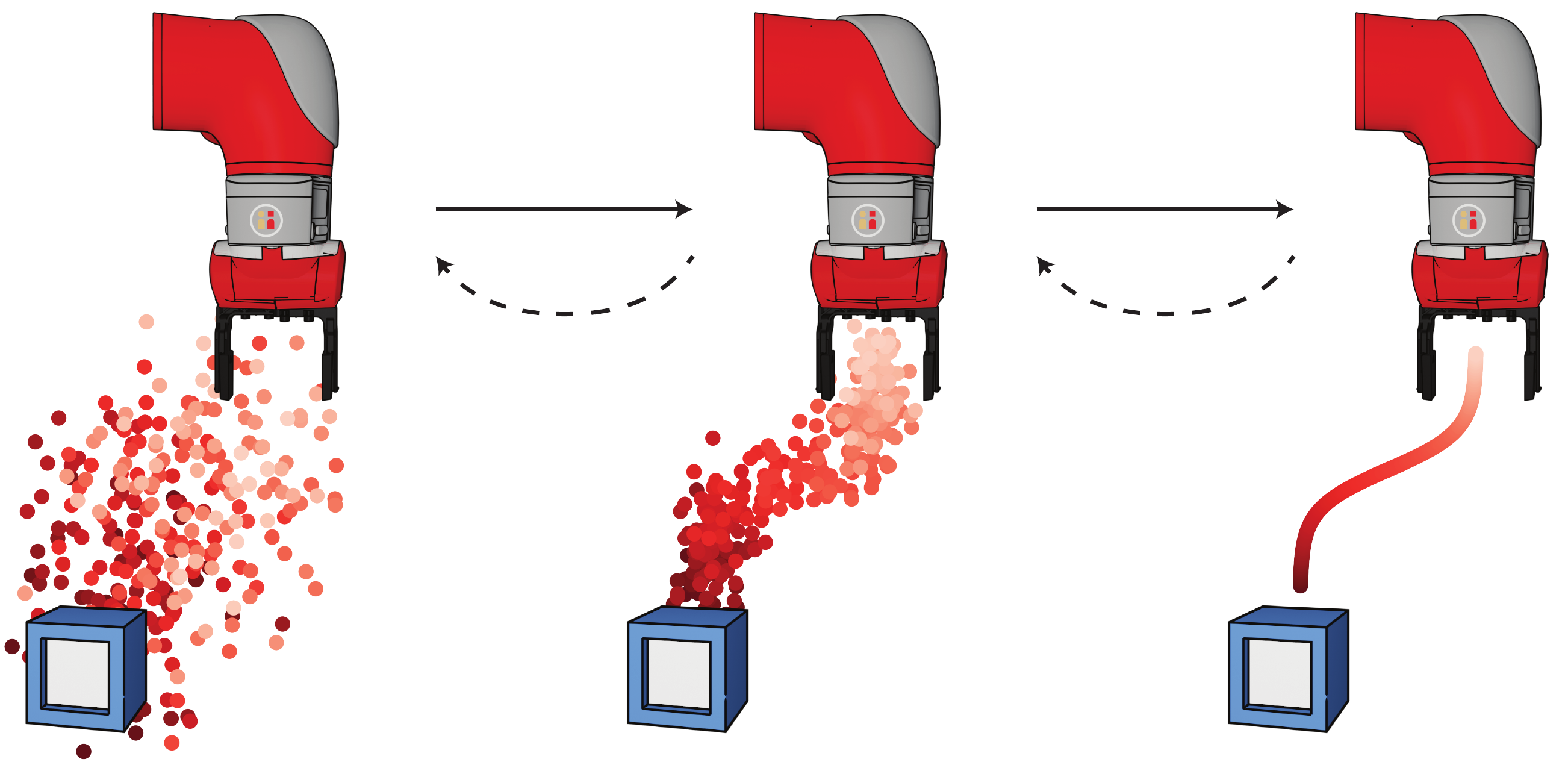}};
        \begin{scope}[x={(image.south east)},y={(image.north west)}]
            \node[] at (.74,.825) {\large $p_\theta(\btau{i-1} \!\mid\! \btau{i})$};
            \node[] at (.74,.5) {\large $q(\btau{i} \!\mid\! \btau{i-1})$};
            \node[] at (.355,.825) {\large denoising};
            \node[] at (.355,.515) {\large diffusion};
        \end{scope}
    \end{tikzpicture}
}
\caption{
    \textbf{(Planning via denoising)}
    \method{} is a diffusion probabilistic model that plans by iteratively refining trajectories.
}
\label{fig:teaser}
\end{figure}

In this chapter, we propose an alternative approach to data-driven trajectory optimization.
The core idea is to train a model that is directly amenable to trajectory optimization, in the sense that sampling from the model and planning with it become nearly identical.
This goal requires a shift in how the model is designed.
Because learned dynamics models are normally meant to be proxies
for environment dynamics, improvements are often achieved by structuring the model according to the underlying causal process \cite{bapst2019structured}.
Instead, we consider how to design a model in line with the planning problem in which it will be used.
For example, because the model will ultimately be used for planning, action distributions are just as important as state dynamics and long-horizon accuracy is more important than single-step error.
On the other hand, the model should remain agnostic to reward function so that it may be used in multiple tasks, including those unseen during training.
Finally, the model should be designed so that its plans, and not just its predictions, improve with experience and are resistant to the myopic failure modes of standard shooting-based planning algorithms.

\begin{figure}
\centering
\centering
\includegraphics[width=0.75\columnwidth]{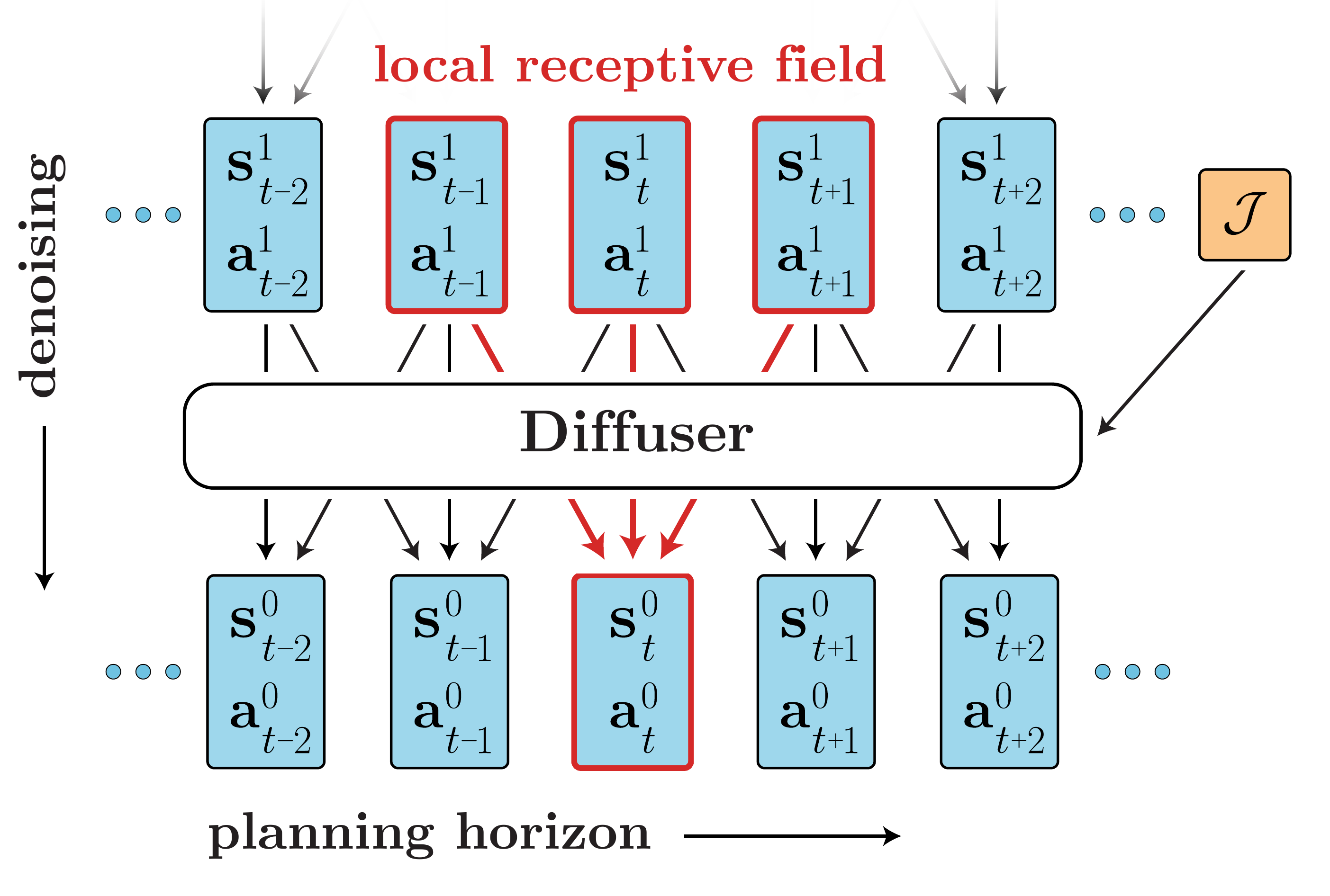}
\caption{
    \textbf{(Diffuser block architecture)}
    Diffuser samples plans by iteratively denoising two-dimensional arrays consisting of a variable number of state-action pairs.
    A small receptive field constrains the model to only enforce local consistency during a single denoising step.
    By composing many denoising steps together, local consistency can drive global coherence of a sampled plan.
    An optional guide function $\mathcal{J}$ can be used to bias plans toward those optimizing a test-time objective or satisfying a set of constraints.
}
\label{fig:architecture}
\end{figure}

We instantiate this idea as a trajectory-level diffusion probabilistic model \citep{sohldickstein2015nonequilibrium,ho2020denoising} called \method{}, visualized in Figure~\ref{fig:architecture}.
Whereas standard model-based planning techniques predict forward in time autoregressively, \method{} predicts all timesteps of a plan simultaneously.
The iterative sampling process of diffusion models leads to flexible conditioning, allowing for auxiliary guides to modify the sampling procedure to recover trajectories with high return or satisfying a set of constraints.
This formulation of data-driven trajectory optimization has several appealing properties:

\textbf{Long-horizon scalability~~}
Diffuser is trained for the accuracy of its generated trajectories rather than its single-step error, so it does not suffer from the compounding rollout errors of single-step dynamics models and scales more gracefully with respect to long planning horizon.

\textbf{Task compositionality~~}
Reward functions provide auxiliary gradients to be used while sampling a plan, allowing for a straightforward way of planning by composing multiple rewards simultaneously by adding together their gradients.

\textbf{Temporal compositionality~~}
Diffuser generates globally coherent trajectories by iteratively improving local consistency, allowing it to generalize to novel trajectories by stitching together in-distribution subsequences.

\textbf{Effective non-greedy planning~~}
By blurring the line between model and planner, the training procedure that improves the model's predictions also has the effect of improving its planning capabilities.
This design yields a learned planner that can solve the types of long-horizon, sparse-reward problems that prove difficult for many conventional planning methods.

The core contribution of this chapter is a denoising diffusion model designed for trajectory data and an associated probabilistic framework for behavior synthesis.
While unconventional compared to the types of models routinely used in deep model-based reinforcement learning, the unusual properties of \method make it particularly effective in control settings that require long-horizon reasoning and test-time flexibility.

\section{Background on Diffusion Probabilistic Models}

\label{sec:background_diffusion}
Diffusion probabilistic models \citep{sohldickstein2015nonequilibrium,ho2020denoising} pose the data-generating process as an iterative denoising procedure $p_\theta(\btau{i-1} \mid \btau{i})$.
This denoising is the reverse of a forward diffusion process $q(\btau{i} \mid \btau{i-1})$ that slowly corrupts the structure in data by adding noise.
The data distribution induced by the model is given by:
\[
p_\theta(\btau{0}) =
\int p(\btau{N}) \prod_{i=1}^{N} p_\theta(\btau{i-1} \mid \btau{i}) \mathrm{d} \btau{1:N}
\]
where $p(\btau{N})$ is a standard Gaussian prior and $\btau{0}$ denotes (noiseless) data.
Parameters $\theta$ are optimized by minimizing a variational bound on the negative log likelihood of the reverse process:
$
\theta^* = \argmin_{\theta}
-\expect{\btau{0}}{\log p_\theta(\btau{0})}.
$
The reverse process is often parameterized as Gaussian with fixed timestep-dependent covariances:
\[
p_\theta(\btau{i-1} \mid \btau{i}) = \mathcal{N}(\btau{i-1} \mid \mu_\theta(\btau{i}, i), \Sigma^i).
\]
The forward process $q(\btau{i} \mid \btau{i-1})$ is typically prespecified.

\textbf{Notation.}~~
There are two ``times" at play in this work: that of the diffusion process and that of the planning problem.
We use superscripts ($i$ when unspecified) to denote diffusion timestep and subscripts ($t$ when unspecified) to denote planning timestep.
For example, $\st^{0}$ refers to the $t^{\text{th}}$ state in a noiseless trajectory.
When it is unambiguous from context, superscripts of noiseless quantities are omitted: $\btau{} = \btau{0}$.
We overload notation slightly by referring to the $t^\text{th}$ state (or action) in a trajectory $\btau{}$ as $\btau{}_{\st}$ (or $\btau{}_{\at}$).

\section{Planning with Diffusion}
\label{sec:method}

A major obstacle to using trajectory optimization techniques is that they require knowledge of the environment dynamics.
Most learning-based methods attempt to overcome this obstacle by training an approximate dynamics model and plugging it in to a conventional planning routine.
However, learned models are often poorly suited to the types of planning algorithms designed with ground-truth models in mind, leading to planners that exploit learned models by finding adversarial examples.

We propose a tighter coupling between modeling and planning.
Instead of using a learned model in the context of a classical planner, we subsume as much of the planning process as possible into the generative modeling framework, such that planning becomes nearly identical to sampling.
We do this using a diffusion model of trajectories, $p_\theta(\btau{})$.
The iterative denoising process of a diffusion model lends itself to flexible conditioning by way of sampling from perturbed distributions of the form:

\begin{equation}
\label{eq:perturbed}
\tilde{p}_\theta(\btau{}) \propto p_\theta(\btau{}) h(\btau{}).
\end{equation}

The function $h(\btau{})$ can contain information about prior evidence (such as an observation history), desired outcomes (such as a goal to reach), or general functions to optimize (such as rewards or costs).
Performing inference in this perturbed distribution can be seen as a probabilistic analogue to the trajectory optimization problem posed in Section~\ref{ch:preliminaries}, as it requires finding trajectories that are both physically realistic under $p_\theta(\btau{})$ and high-reward (or constraint-satisfying) under $h(\btau{})$.
Because the dynamics information is separated from the perturbation distribution $h(\btau{})$, a single diffusion model $p_\theta(\btau{})$ may be reused for multiple tasks in the same environment.

In this section, we describe \method, a diffusion model designed for learned trajectory optimization.
We then discuss two specific instantiations of planning with \method, realized as reinforcement learning counterparts to classifier-guided sampling and image inpainting.

\subsection{A Generative Model for Trajectory Planning}
\label{sec:model}

\textbf{Temporal ordering.}~~
Blurring the line between sampling from a trajectory model and planning with it yields an unusual constraint: we can no longer predict states autoregressively in temporal order.
Consider the goal-conditioned inference ${p(\bs_1 \mid \bs_0, \bs_T)}$;
the next state $\bs_1$ depends on a \emph{future} state as well as a prior one.
This example is an instance of a more general principle:
while dynamics prediction is causal, in the sense that the present is determined by the past, decision-making and control can be anti-causal, in the sense that decisions in the present are conditional on the future.\footnote{
    In general reinforcement learning contexts, conditioning on the future emerges from the assumption of future optimality for the purpose of writing a dynamic programming recursion. Concretely, this appears as the future optimality variables $\mathcal{O}_{t:T}$ in the action distribution $\log p(\at \mid \st, \mathcal{O}_{t:T})$ (Levine, 2018)\nocite{levine2018reinforcement}.
}
Because we cannot use a temporal autoregressive ordering, we design \method to predict all timesteps of a plan concurrently.

\textbf{Temporal locality.}~~
Despite not being autoregressive or Markovian, \method features a relaxed form of temporal locality.
In Figure~\ref{fig:architecture}, we depict a dependency graph for a diffusion model consisting of a single temporal convolution.
The receptive field of a given prediction only consists of nearby timesteps, both in the past and the future.
As a result, each step of the denoising process can only make predictions based on local consistency of the trajectory.
By composing many of these denoising steps together, however, local consistency can drive global coherence.

\textbf{Trajectory representation.}~~
\method is a model of trajectories designed for planning, meaning that the effectiveness of the controller derived from the model is just as important as the quality of the state predictions.
As a result, states and actions in a trajectory are predicted jointly; for the purposes of prediction the actions are simply additional dimensions of the state.
Specifically, we represent inputs (and outputs) of \method as a two-dimensional array:

\begin{align}
\label{eq:trajectory_array}
\btau{} = \begin{bmatrix}
    \bs_{0} & \bs_{1} & \raisebox{-.2cm}{\ldots} & \bs_{T} \\
    \ba_{0} & \ba_{1} & & \ba_{T}
\end{bmatrix}.
\end{align}
with one column per timestep of the planning horizon.

\textbf{Architecture.}~~
We now have the ingredients needed to specify a \method architecture:
\textbf{(1)} an entire trajectory should be predicted non-autoregressively,
\textbf{(2)} each step of the denoising process should be temporally local, and
\textbf{(3)} the trajectory representation should allow for equivariance  along one dimension (the planning horizon) but not the other (the state and action features).
We satisfy these criteria with a model consisting of repeated (temporal) convolutional residual blocks.
The overall architecture resembles the types of U-Nets that have found success in image-based diffusion models, but with two-dimensional spatial convolutions replaced by one-dimensional temporal convolutions (Figure~\ref{fig:app_architecture}).
Because the model is fully convolutional, the horizon of the predictions is determined not by the model architecture, but by the input dimensionality; it can change dynamically during planning if desired.

\textbf{Training.}~~
We use \method to parameterize a learned gradient $\epsilon_\theta(\btau{i}, i)$ of the trajectory denoising process, from which the mean $\mu_\theta$ can be solved in closed form \citep{ho2020denoising}.
We use the simplified objective for training the $\epsilon$-model, given by:
\[
\mathcal{L}(\theta) = \expect{
    i,\epsilon,\btau{0}
    }{
    \lVert \epsilon - \epsilon_\theta(\btau{i}, i) \rVert^2
    },
\]
in which $i \sim \mathcal{U}\{1, 2, \ldots, N\}$ is the diffusion timestep, $\epsilon \sim \mathcal{N}(\bm{0}, \bm{I})$ is the noise target, and $\btau{i}$ is the trajectory $\btau{0}$ corrupted with noise $\epsilon$.
Reverse process covariances $\Sigma^i$ follow the cosine schedule of \citet{nichol2021improved}.

\subsection{Reinforcement Learning as Guided Sampling}
\label{sec:guided}

In order to solve reinforcement learning problems with \method, we must introduce a notion of reward.
We appeal to the control-as-inference graphical model \citep{levine2018reinforcement} to do so.
Let $\mathcal{O}_{t}$ be a binary random variable denoting the optimality of timestep $t$ of a trajectory, with ${p(\mathcal{O}_t=1) = \exp(r(\st, \at))}$. We can sample from the set of optimal trajectories by setting ${h(\btau{}) = p(\mathcal{O}_{1:T} \mid \btau{})}$ in Equation~\ref{eq:perturbed}:

\[
\tilde{p}_\theta(\btau{}) = p(\btau{} \mid \mathcal{O}_{1:T}=1) \propto p(\btau{}) p(\mathcal{O}_{1:T}=1 \mid \btau{}).
\]

We have exchanged the reinforcement learning problem for one of \emph{conditional sampling}.
Thankfully, there has been much prior work on conditional sampling with diffusion models.
While it is intractable to sample from this distribution exactly, when $p(\mathcal{O}_{1:T} \mid \btau{i})$ is sufficiently smooth, the reverse diffusion process transitions can be approximated as Gaussian \citep{sohldickstein2015nonequilibrium}:
\begin{equation}
\label{eq:guided}
p_\theta(\btau{i-1} \mid \btau{i}, \opt_{1:T}) \approx \mathcal{N}(\btau{i-1}; \mu + \Sigma g, \Sigma)
\end{equation}
where $\mu, \Sigma$ are the parameters of the original reverse process transition $p_\theta(\btau{i-1} \mid \btau{i})$ and

\begin{align*}
g &= \nabla_{\btau{}} \log p(\opt_{1:T} \mid \btau{}) |_{\btau{} = \mu} \\
&= \sum_{t=0}^{T} \nabla_{\st,\at} r(\st, \at) |_{(\st,\at)=\mu_t}
= \nabla \mathcal{J}(\mu).
\end{align*}

This relation provides a straightforward translation between classifier-guided sampling, used to generate class-conditional images \citep{dhariwal2021diffusion}, and the reinforcement learning problem setting.
We first train a diffusion model $p_\theta(\btau{})$ on the states and actions of all available trajectory data.
We then train a separate model $\mathcal{J}_\phi$ to predict the cumulative rewards of trajectory samples $\btau{i}$.
The gradients of $\mathcal{J}_\phi$ are used to guide the trajectory sampling procedure by modifying the means $\mu$ of the reverse process according to Equation~\ref{eq:guided}.
The first action of a sampled trajectory ${\btau{} \sim p(\btau{} \mid \mathcal{O}_{1:T}=1)}$ may be executed in the environment, after which the planning procedure begins again in a standard receding-horizon control loop.
Pseudocode for the guided planning method is given in Algorithm~\ref{alg:rl}.

\def\NoNumber#1{{\def\alglinenumber##1{}\STATE #1}\addtocounter{ALG@line}{-1}}

{\centering
\begin{figure}[t]
\begin{minipage}{\linewidth}
  \begin{algorithm}[H]
    \caption{Guided Diffusion Planning}
    \label{alg:rl}
    \begin{algorithmic}[1]
    \STATE \textbf{Require} Diffuser $\mu_\theta$, guide $\mathcal{J}$, scale $\alpha$, covariances $\Sigma^i$ \\
    \WHILE{not done}
        \STATE Observe state $\bs$; initialize plan  $\btau{N} \sim \mathcal{N}(\bm{0}, \bm{I})$ \\
        \FOR{$i = N, \ldots, 1$}
            \STATE \small{\color{gray}\te{// parameters of reverse transition}} \\
            \STATE $\mu \gets \mu_\theta(\btau{i})$ \\
            \STATE \small{\color{gray}\te{// guide using gradients of return}} 
            \STATE $\btau{i-1} \sim \mathcal{N}(\mu + \alpha \Sigma \nabla \mathcal{J}(\mu), \Sigma^i)$\\
            \STATE \small{\color{gray}\te{// constrain first state of plan}}
            \STATE $\btau{i-1}_{\vs_0} \gets \vs$ \hspace{0.15cm} \\
        \ENDFOR
        \STATE Execute first action of plan $\btau{0}_{\ba_0}$
    \ENDWHILE
    \end{algorithmic}
  \end{algorithm}
\end{minipage}
\end{figure}
}

\subsection{Goal-Conditioned Reinforcement Learning as Inpainting}
\label{sec:inpainting}

Some planning problems are more naturally posed as constraint satisfaction than reward maximization.
In these settings, the objective is to produce any feasible trajectory that satisfies a set of constraints, such as terminating at a goal location.
Appealing to the two-dimensional array representation of trajectories described by Equation~\ref{eq:trajectory_array}, this setting can be translated into an \emph{inpainting problem}, in which state and action constraints act analogously to observed pixels in an image \citep{sohldickstein2015nonequilibrium}.
All unobserved locations in the array must be filled in by the diffusion model in a manner consistent with the observed constraints.

The perturbation function required for this task is a Dirac delta for observed values and constant elsewhere.
Concretely, if $\mathbf{c}_t$ is state constraint at timestep $t$, then

\[
    h(\btau{}) = \delta_{\mathbf{c}_t}(\bs_0,\ba_0,\ldots,\bs_T,\ba_T) =
\begin{cases}
    +\infty      & \text{if } \mathbf{c}_t = \st\\
    ~~~~~0      & \text{otherwise}
\end{cases}
\]

The definition for action constraints is identical.
In practice, this may be implemented by sampling from the unperturbed reverse process ${\btau{i-1} \sim p_\theta(\btau{i-1} \mid \btau{i})}$ and replacing the sampled values with conditioning values $\mathbf{c}_t$ after all diffusion timesteps $i \in \{0, 1, \ldots, N\}$.

Even reward maximization problems require conditioning-by-inpainting because all sampled trajectories should begin at the current state.
This conditioning is described by line 10 in Algorithm~\ref{alg:rl}.

\section{Properties of Diffusion Planners}
\label{sec:properties}

We discuss a number of \method's important properties, focusing on those that are are either distinct from standard dynamics models or unusual for non-autoregressive trajectory prediction.

\textbf{Learned long-horizon planning.}~~
Single-step models are typically used as proxies for ground-truth environment dynamics $p(\stp \mid \st, \at)$, and as such are not tied to any planning algorithm in particular.
In contrast, the planning routine in Algorithm~\ref{alg:rl} is closely tied to the specific affordances of diffusion models.
Because our planning method is nearly identical to sampling (with the only difference being guidance by a perturbation function $h(\btau{})$), \method's effectiveness as a long-horizon predictor directly translates to effective long-horizon planning.
We demonstrate the benefits of learned planning in a goal-reaching setting in \textbf{Figure~\ref{fig:properties}a}, showing that \method is able to generate feasible trajectories in the types of sparse reward settings where shooting-based approaches are known to struggle.
We explore a more quantitative version of this problem setting in Section~\ref{sec:maze2d}.

\vspace{.2em}
\textbf{Temporal compositionality.}~~
Single-step models are often motivated using the Markov property, allowing them to compose in-distribution transitions to generalize to out-of-distribution trajectories.
Because \method generates globally coherent trajectories by iteratively improving local consistency (Section~\ref{sec:model}), it can also stitch together familiar subsequences in novel ways.
In \textbf{Figure~\ref{fig:properties}b}, we train \method on trajectories that only travel in a straight line, and show that it can generalize to v-shaped trajectories by composing trajectories at their point of intersection.

\newcommand{\undersetimage}[3]{
    \begin{tikzpicture}
        \node[anchor=south west,inner sep=0] (image) at (0, 0)
            {\includegraphics[width=0.55cm]{#1}};
        \begin{scope}[x={(image.south east)},y={(image.north west)}]
            \node[] at (0.5, #3) {#2};
        \end{scope}
    \end{tikzpicture}
}

\begin{figure*}
\centering
\begin{minipage}[t]{0.65\textwidth}
    \raisebox{2cm}{\textbf{a}}~~~
    \hspace{.2cm}
    \includegraphics[width=2.6cm]{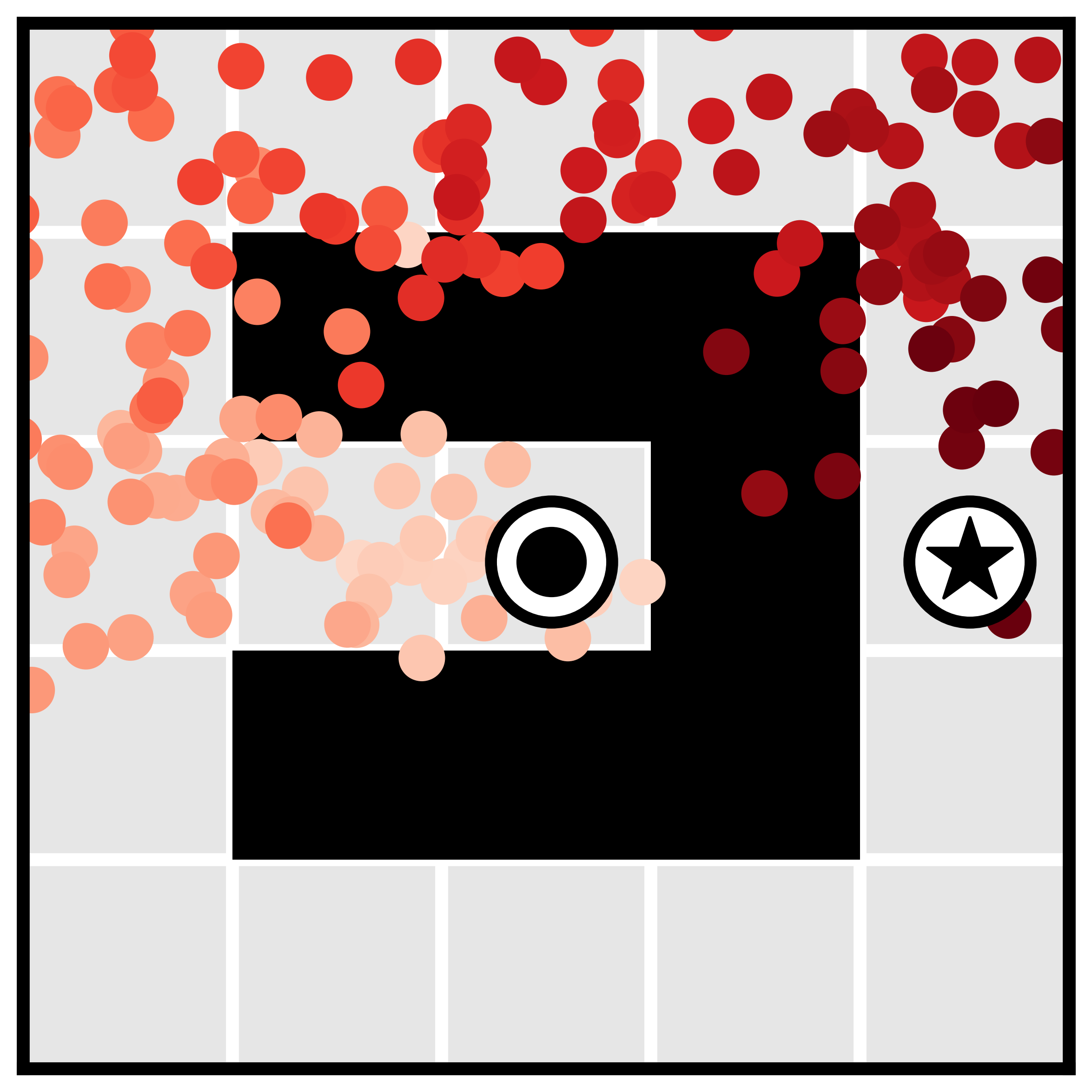}~
    \includegraphics[width=2.6cm]{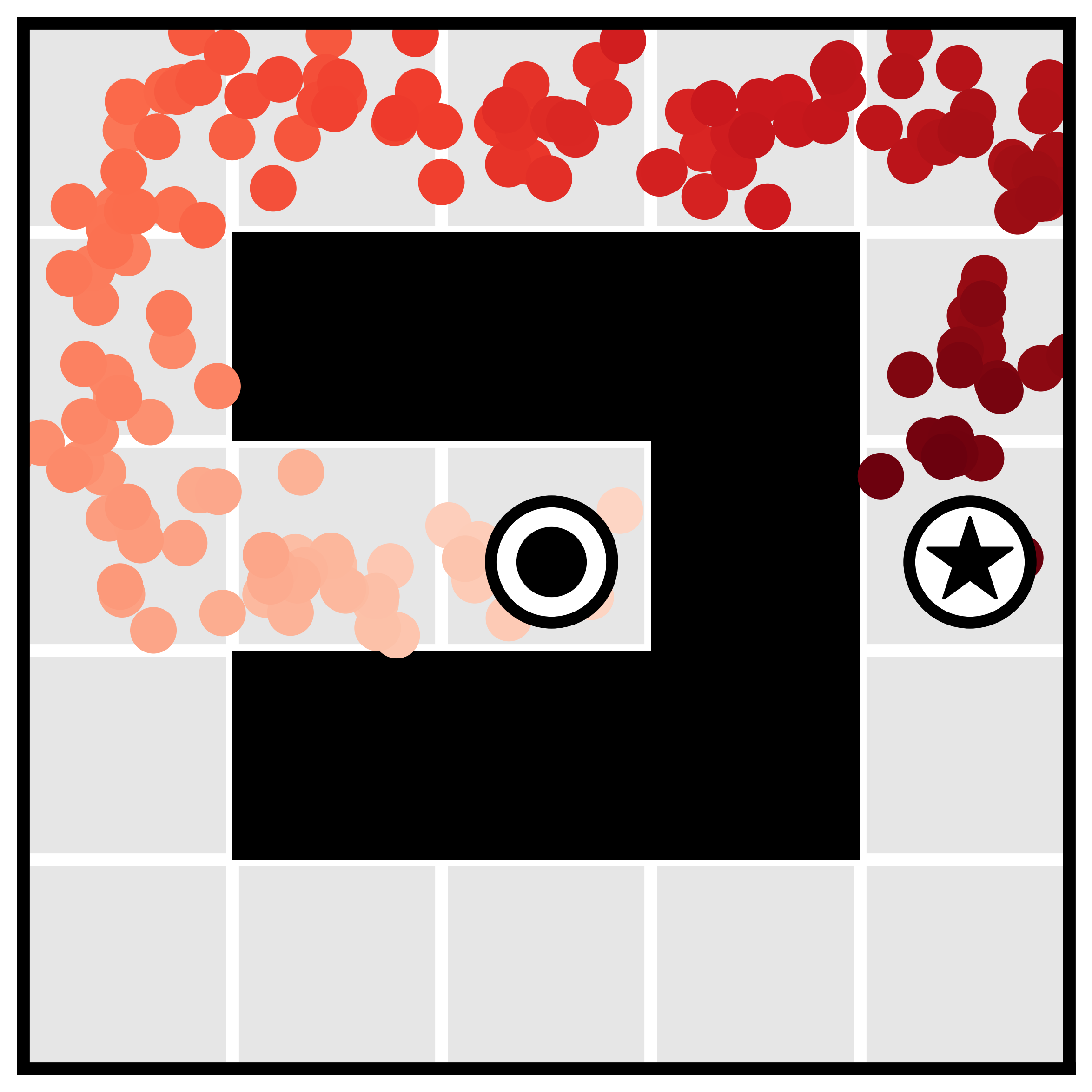}~
    \includegraphics[width=2.6cm]{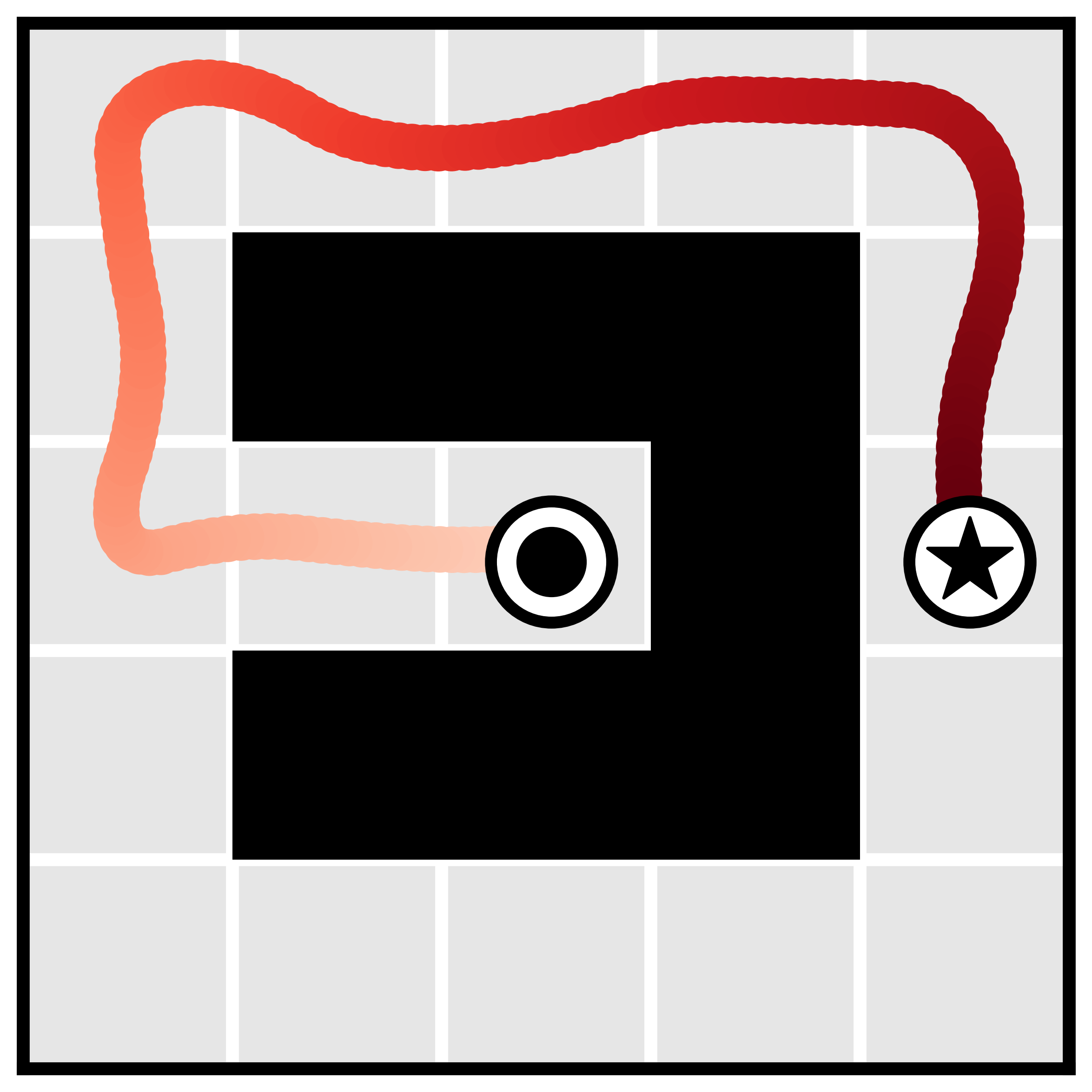} \\
    \vspace{-.4cm}
    \begin{center}
        \raisebox{.3\height}{denoising}~~
        \tikz \draw [-{Computer Modern Rightarrow[length=1.33mm, width=2mm]}, line width=.2mm] (0,0) -- (3,0); \\
    \end{center}
\end{minipage}%
\hfill
\begin{minipage}[t]{0.35\textwidth}
    \raisebox{2cm}{\textbf{b}}~~~
    \includegraphics[width=2.6cm]{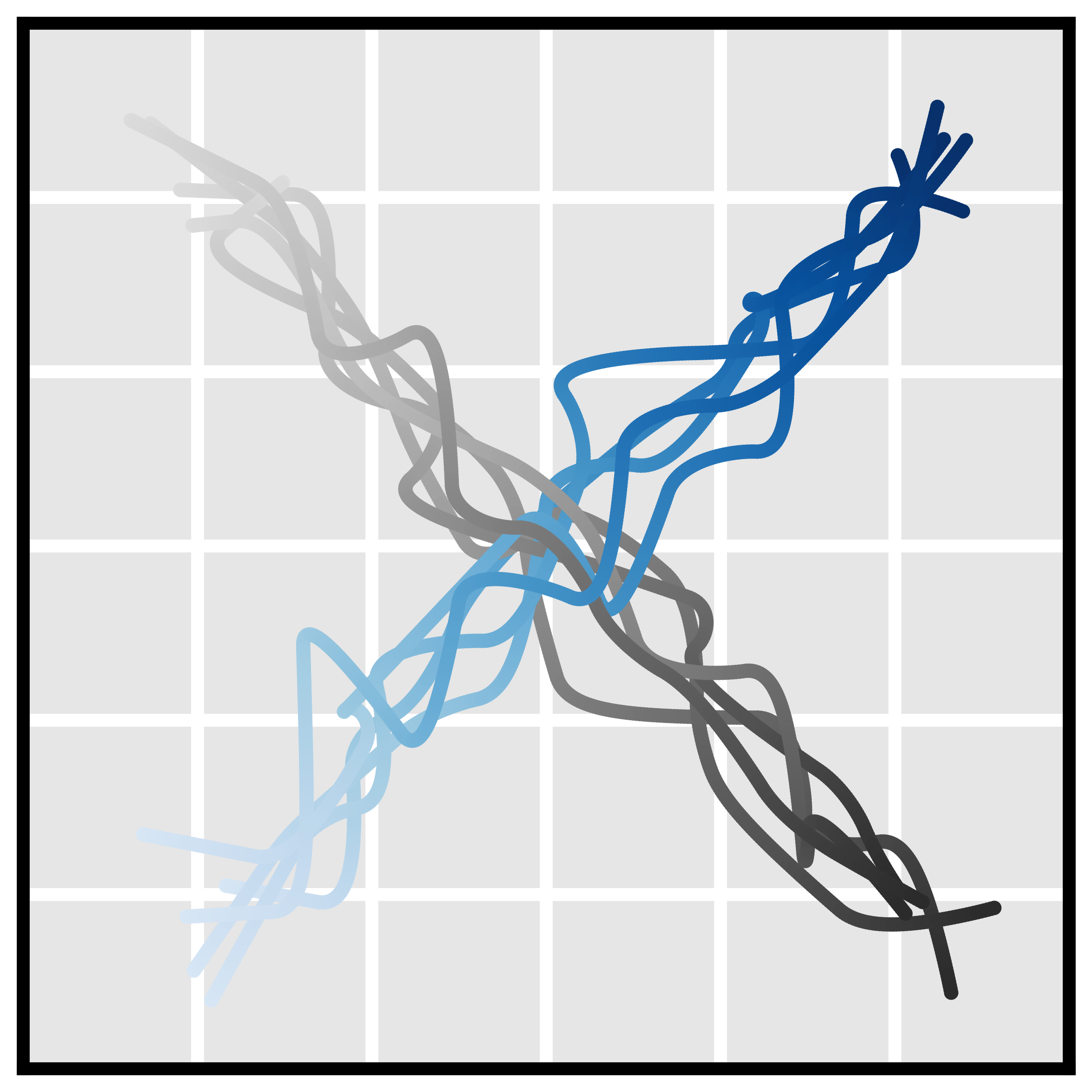}~
    \includegraphics[width=2.6cm]{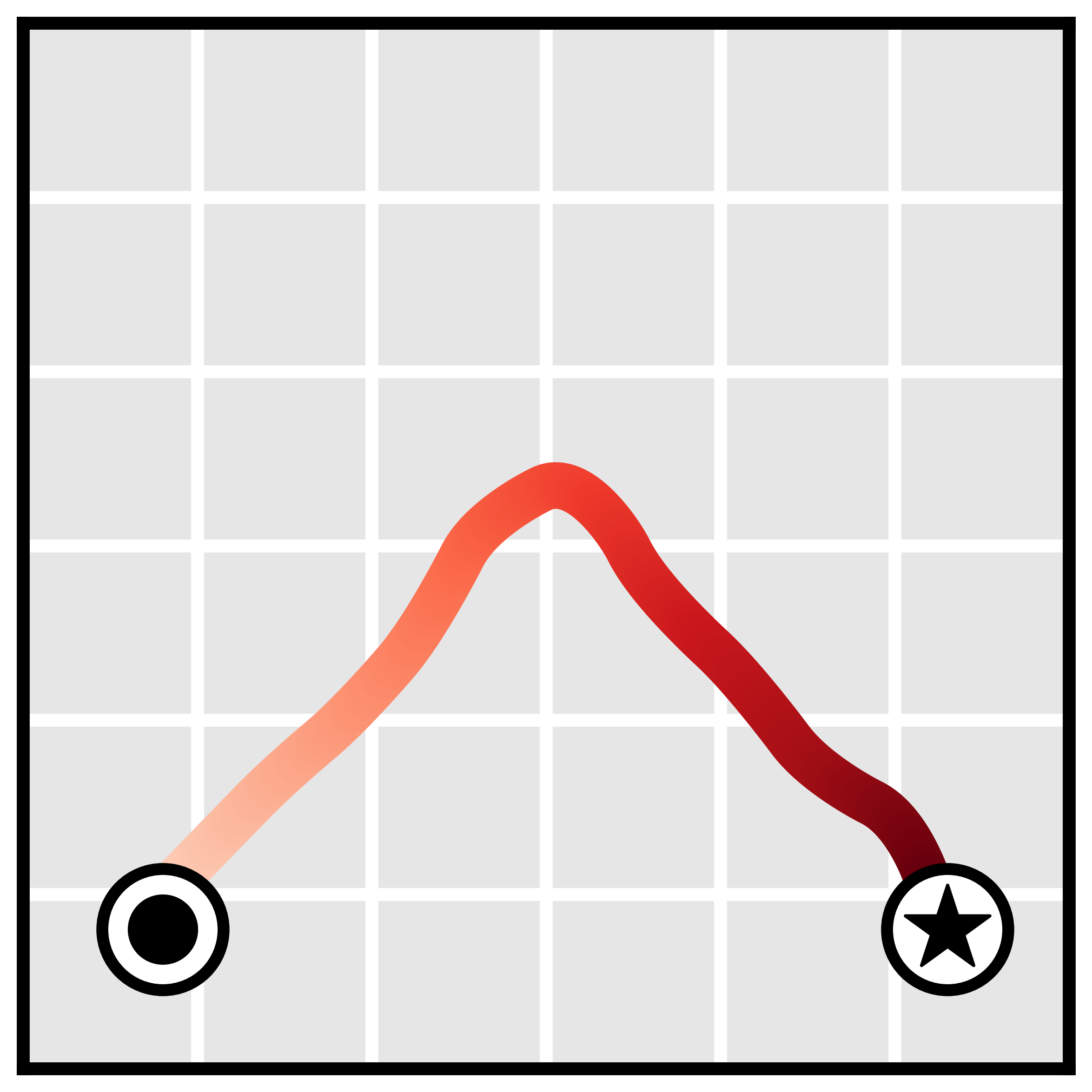} \\
    \vspace{-.5cm}
    \begin{center}
            \hspace{1.0cm} data \hspace{1.5cm} plan \\
    \end{center}
\end{minipage} \\
\vspace{.4cm}
\begin{minipage}[t]{0.65\textwidth}
    \raisebox{2cm}{\textbf{c}}~~~
    \raisebox{0.3cm}{\undersetimage{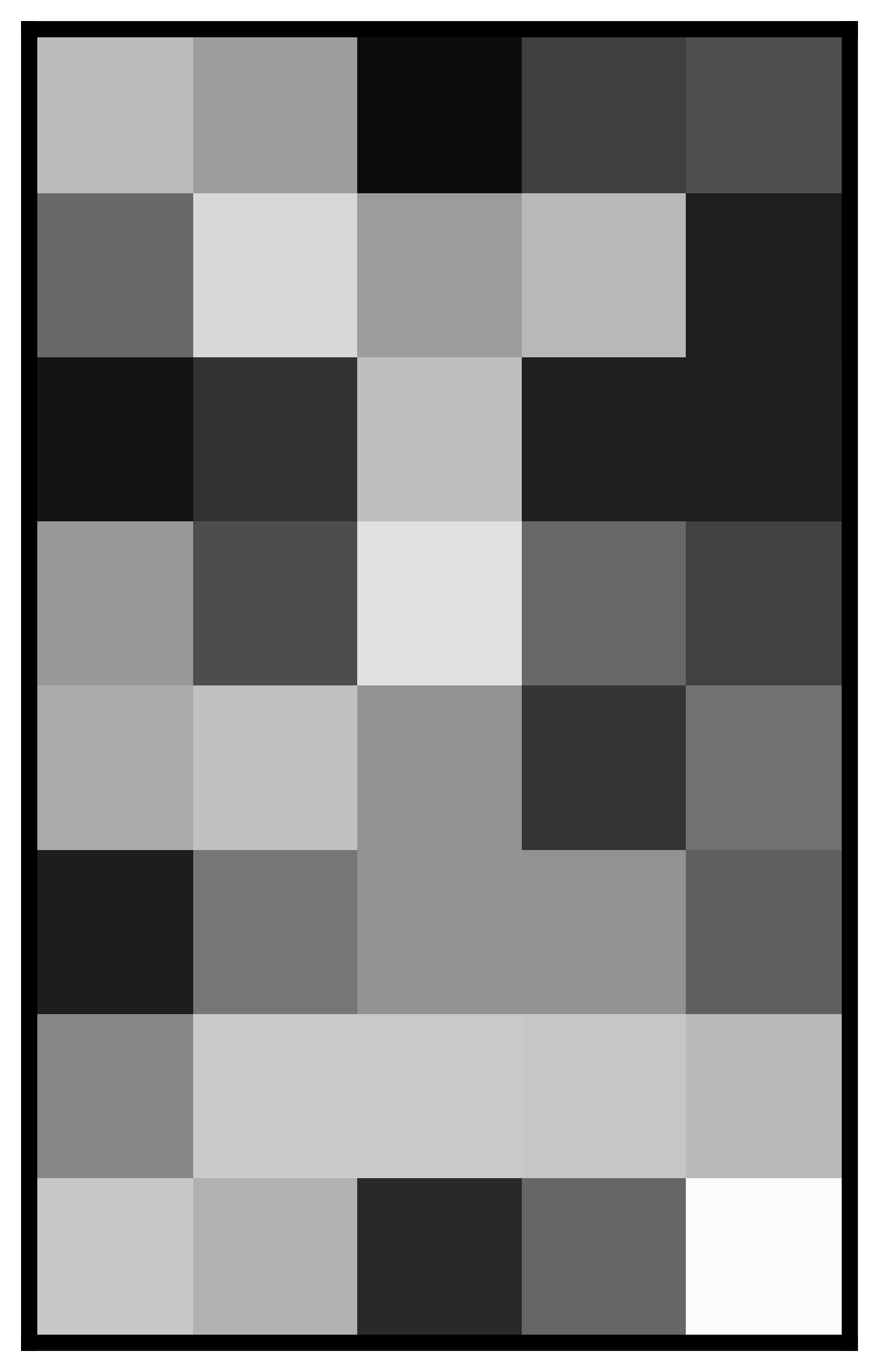}{$\btau{N}$}{-0.35}}
        \hspace{-.4cm}
        \raisebox{1.2cm}{$\rightarrow$}
        \includegraphics[width=2.6cm]{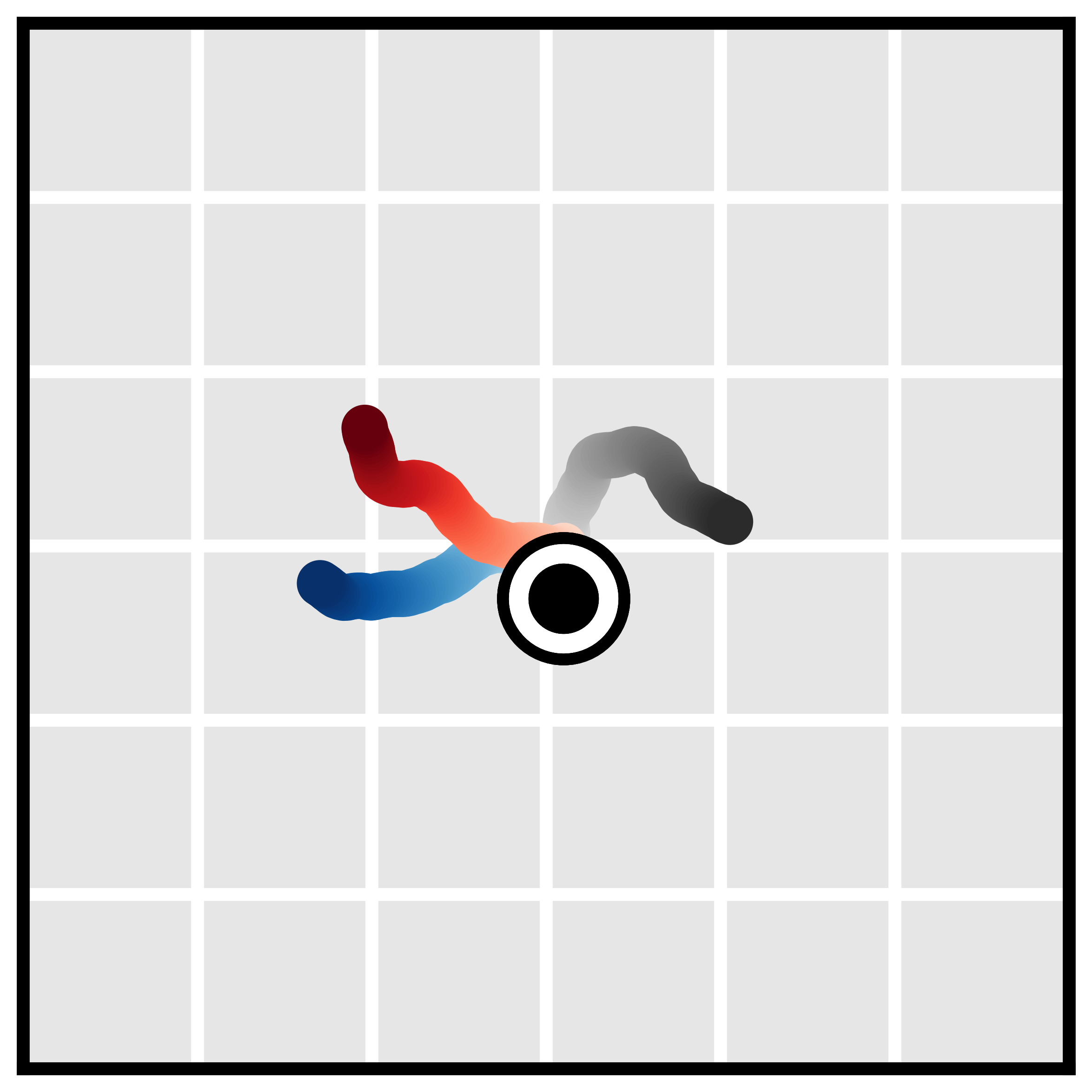}
        ~
    \raisebox{-0.1cm}{\undersetimage{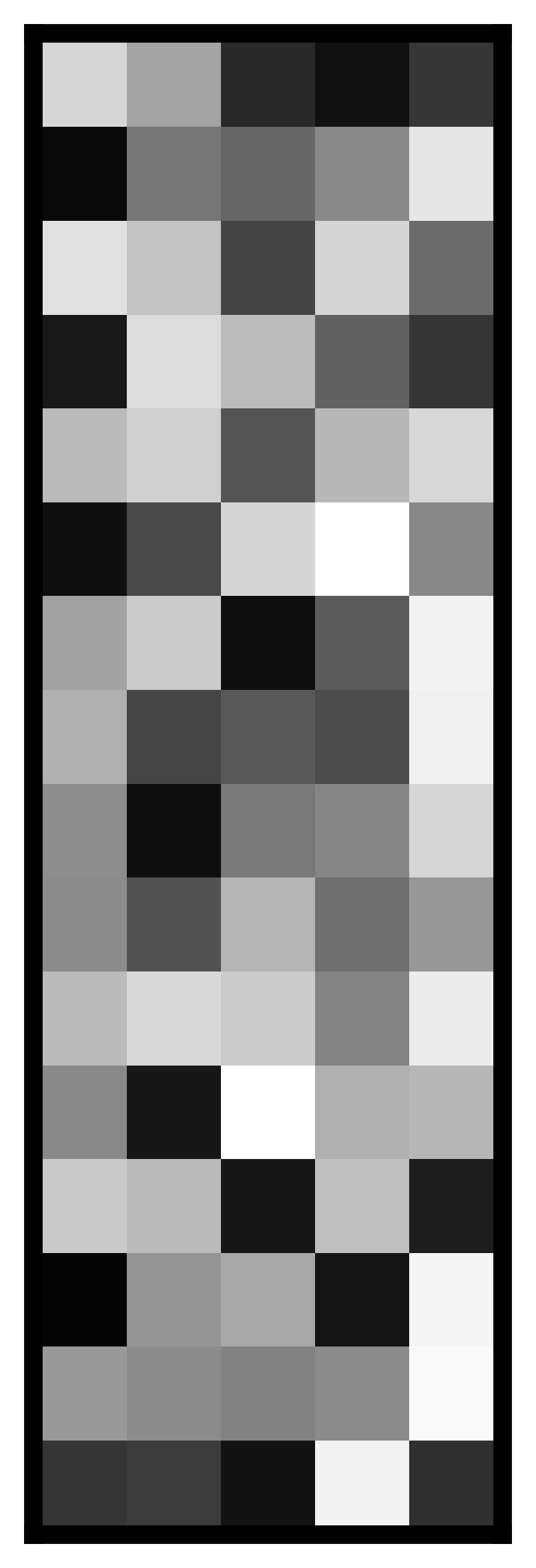}{$\btau{N}$}{-0.25}}
        \hspace{-.4cm}
        \raisebox{1.2cm}{$\rightarrow$}
        \includegraphics[width=2.6cm]{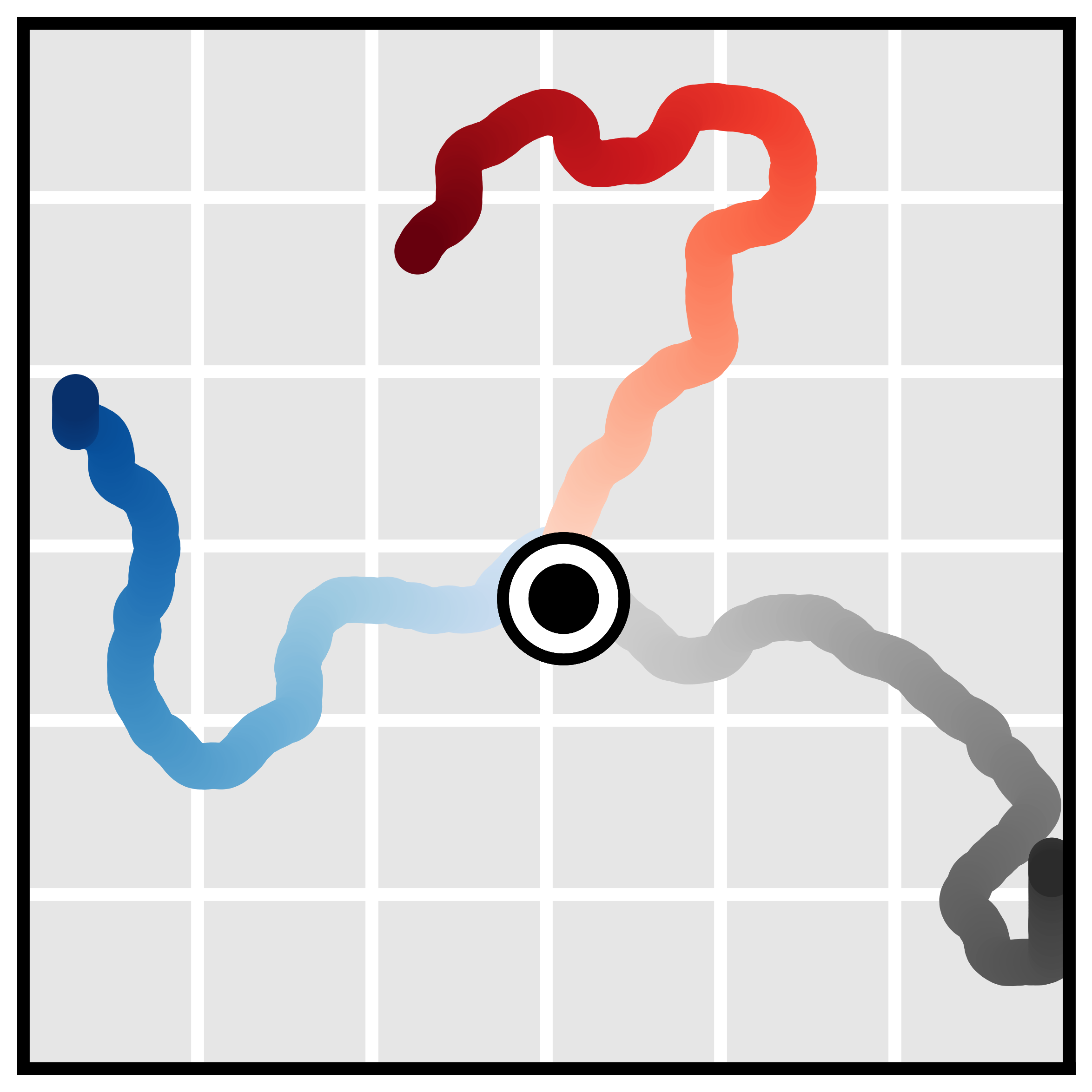} \\
\end{minipage}%
\hfill
\begin{minipage}[t]{0.35\textwidth}
    \raisebox{2cm}{\textbf{d}}~~~
    \includegraphics[width=2.6cm]{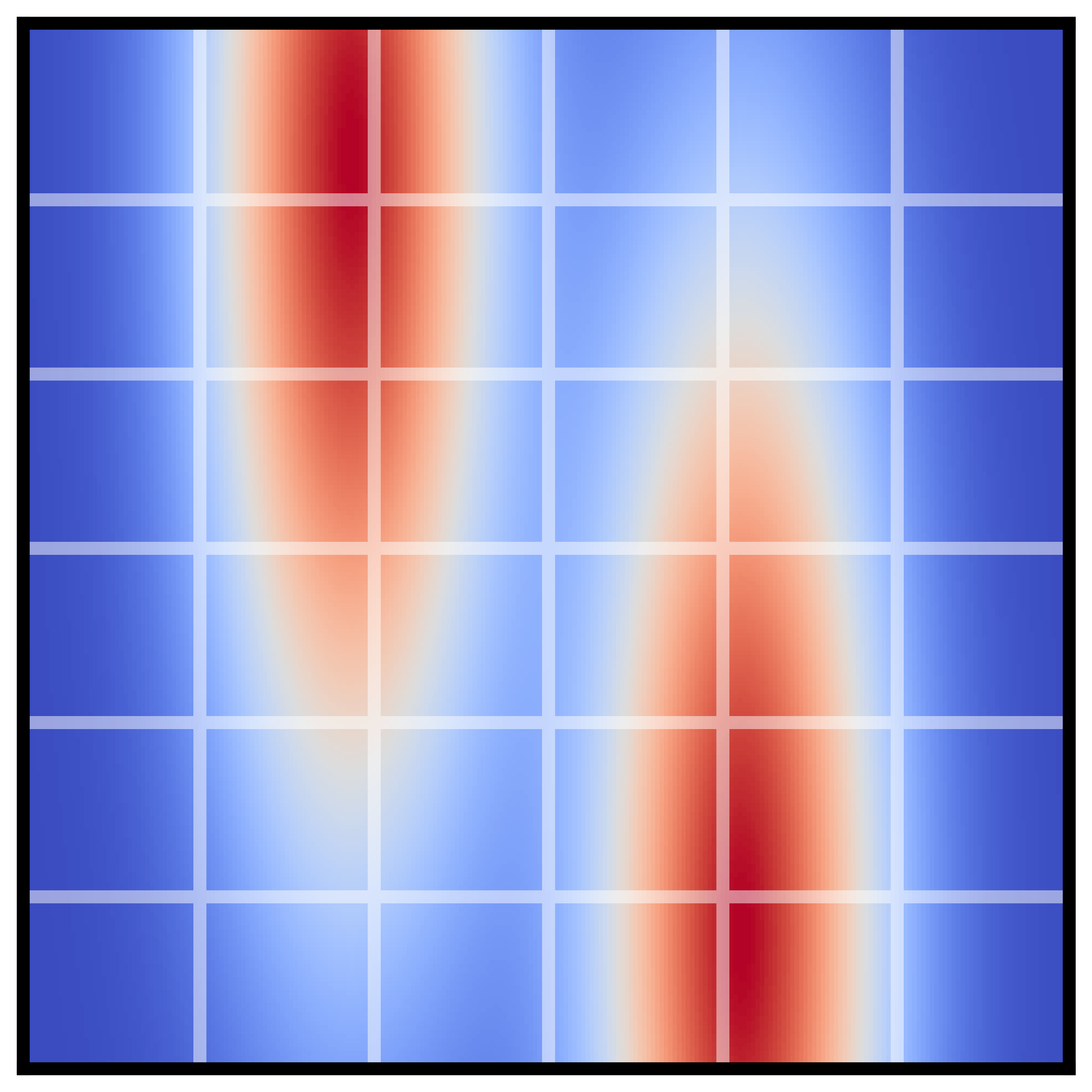}~
    \includegraphics[width=2.6cm]{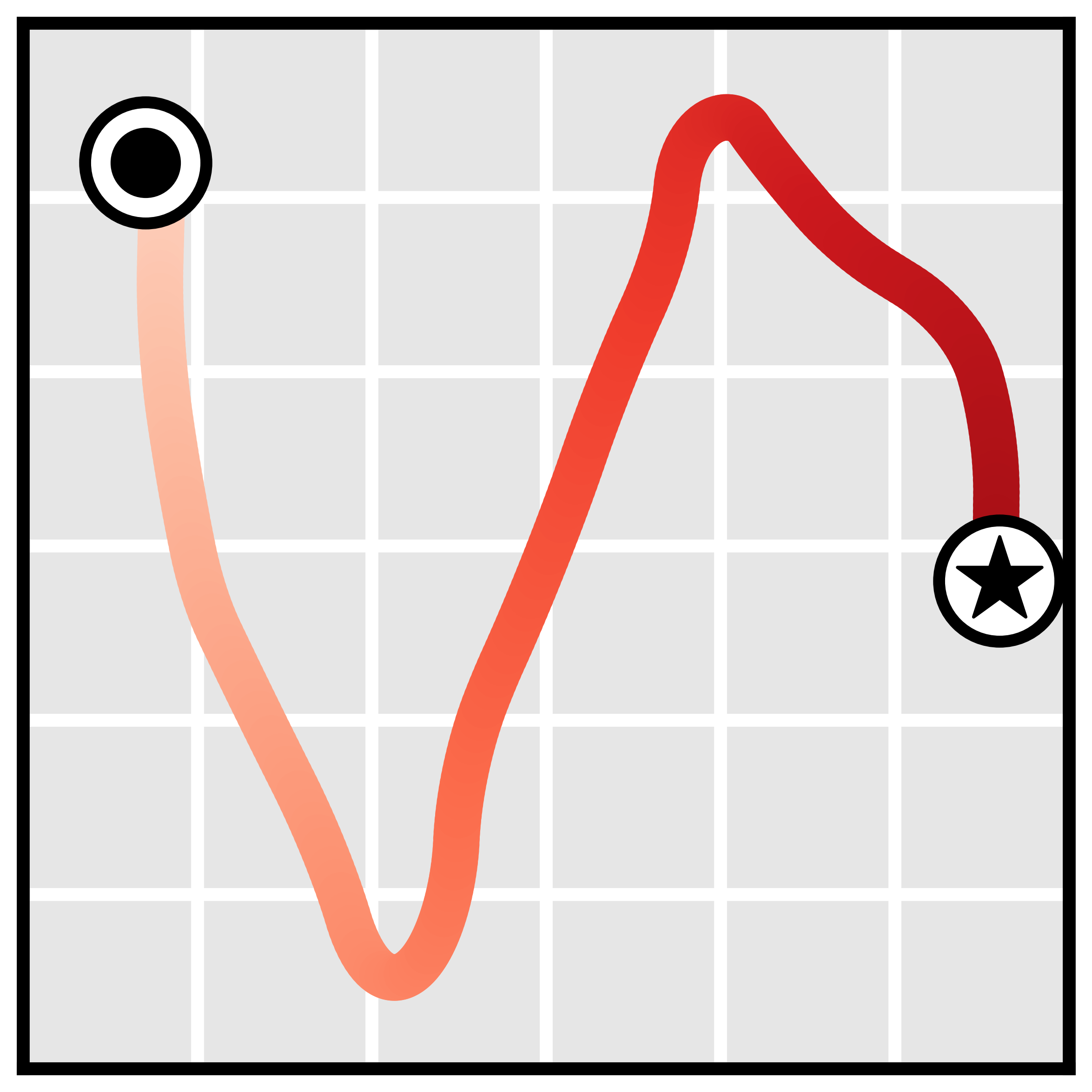} \\
    \vspace{-.5cm}
    \begin{flushleft}
            \hspace{0.43cm} reward
            \hspace{-0.1cm}
            \raisebox{.05cm}{$-$}
            \raisebox{.0cm}{\includegraphics[height=1cm,angle=90]{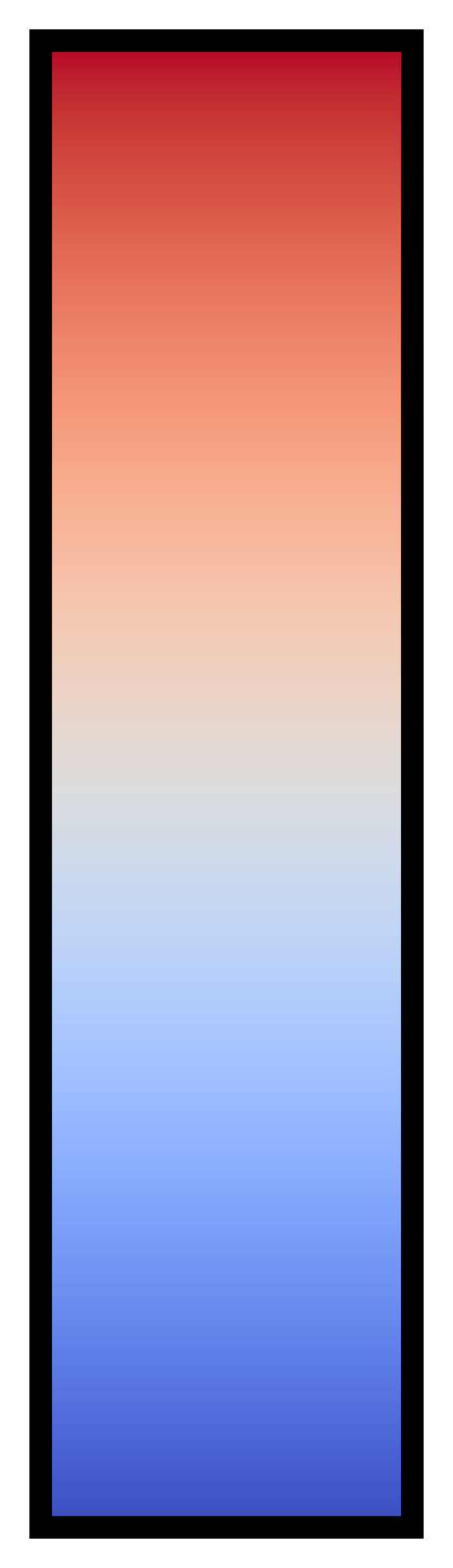}}
            \raisebox{.05cm}{$+$}
            \hspace{0.2cm} plan \\
    \end{flushleft}
\end{minipage}
\caption{
    \textbf{(Properties of diffusion planners)}
    \textbf{(a) Learned long-horizon planning:}
    \method{}'s learned planning procedure does not suffer from the myopic failure modes common to shooting algorithms and is able to plan over long horizons with sparse reward.
    \textbf{(b) Temporal compositionality:} Even though the model is not Markovian, it generates trajectories via iterated refinements to local consistency.
    As a result, it exhibits the types of generalization usually associated with Markovian models, with the ability to stitch together snippets of trajectories from the training data to generate novel plan.
    \textbf{(c) Variable-length plans:}
    Despite being a trajectory-level model, \method's planning horizon is not determined by its architecture.
    The horizon can be updated after training by changing the dimensionality of the input noise.
    \textbf{(d) Task compositionality:} \method{} can be composed with new reward functions to plan for tasks unseen during training.
    In all subfigures,
    \protect{\raisebox{-.05cm}{\includegraphics[height=.35cm]{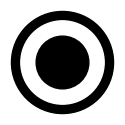}}}
    denotes a starting state and
    \protect{\raisebox{-.05cm}{\includegraphics[height=.35cm]{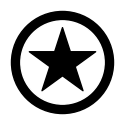}}}
    denotes a goal state.
}
\label{fig:properties}
\end{figure*}

\vspace{.2em}
\textbf{Variable-length plans.}~~
Because our model is fully convolutional in the horizon dimension of its prediction, its planning horizon is not specified by architectural choices.
Instead, it is determined by the size of the input noise $\btau{N} \sim \mathcal{N(\mathbf{0}, \mathbf{I})}$ that initializes the denoising process, allowing for variable-length plans (\textbf{Figure~\ref{fig:properties}c}).

\vspace{.2em}
\textbf{Task compositionality.}~~
While \method contains information about both environment dynamics and behaviors, it is independent of reward function.
Because the model acts as a prior over possible futures, planning can be guided by comparatively lightweight perturbation functions $h(\btau{})$ (or even combinations of multiple perturbations) corresponding to different rewards.
We demonstrate this by planning for a new reward function unseen during training of the diffusion model (\textbf{Figure~\ref{fig:properties}d}).


\begin{table}[t]
\centering
\small
\begin{tabular*}{\columnwidth}{@{\extracolsep{\fill}}lrrrrr}
\toprule
\multicolumn{1}{c}{\textbf{Environment}} &
    \multicolumn{1}{c}{\textbf{MPPI}} &
    \multicolumn{1}{c}{\textbf{CQL}} &
    \multicolumn{1}{c}{\textbf{IQL}} &
    \multicolumn{1}{c}{\textbf{\method}} \\
\midrule
Maze2D~~~~U-Maze &
    $33.2$ &
    $5.7$ &
    $47.4$ &
    \highlight{\color{highlight}113.9} \scriptsize{\raisebox{1pt}{$\pm 3.1$}} \\ 
Maze2D~~~~Medium &
    $10.2$ &
    $5.0$ &
    $34.9$ &
    \highlight{\color{highlight}121.5} \scriptsize{\raisebox{1pt}{$\pm 2.7$}} \\ 
Maze2D~~~~Large &
    $5.1$ &
    $12.5$ &
    $58.6$ &
    \highlight{\color{highlight}123.0} \scriptsize{\raisebox{1pt}{$\pm 6.4$}} \\
\midrule
\multicolumn{1}{c}{\textbf{Single-task Average}} & 16.2 & 7.7 & 47.0 & \highlight{\color{highlight}119.5} \hspace{.58cm} \\
\vspace{-.2cm} \\
\toprule
Multi2D~~~~U-Maze & 
    $41.2$ &
    - &
    $24.8$ &
    \highlight{\color{highlight}128.9} \scriptsize{\raisebox{1pt}{$\pm 1.8$}} \\ 
Multi2D~~~~Medium &
    $15.4$ &
    - &
    $12.1$ &
    \highlight{\color{highlight}127.2} \scriptsize{\raisebox{1pt}{$\pm 3.4$}} \\ 
Multi2D~~~~Large &
    $8.0$ &
    - &
    $13.9$ &
    \highlight{\color{highlight}132.1} \scriptsize{\raisebox{1pt}{$\pm 5.8$}} \\
\midrule
\multicolumn{1}{c}{\textbf{Multi-task Average}} & 21.5 & - & 16.9 & \highlight{\color{highlight}129.4} \hspace{.58cm} \\ 
\bottomrule
\end{tabular*}
\caption{
    \textbf{(Long-horizon planning)}
    The performance of \method{} and prior model-free algorithms in the Maze2D environment, which tests long-horizon planning due to its sparse reward structure.
    The Multi2D setting refers to a multi-task variant with goal locations resampled at the beginning of every episode.
    \method{} substantially outperforms prior approaches in both settings.
    Appendix~\ref{app:baselines} details the sources for the scores of the baseline algorithms.
}
\label{table:maze2d}
\end{table}

\section{Experimental Evaluation}
\label{sec:experiments}

The focus of our experiments is to evaluate \method on the capabilities we would like from a data-driven planner.
In particular, we evaluate
\textbf{(1)} the ability to plan over long horizons without manual reward shaping,
\textbf{(2)} the ability to generalize to new configurations of goals unseen during training, and
\textbf{(3)} the ability to recover an effective controller from heterogeneous data of varying quality.
We conclude by studying practical runtime considerations of diffusion-based planning, including the most effective ways of speeding up the planning procedure while suffering minimally in terms of performance.

\subsection{Long Horizon Multi-Task Planning}
\label{sec:maze2d}

We evaluate long-horizon planning in the Maze2D environments \citep{fu2020d4rl}, which require traversing to a goal location where a reward of 1 is given.
No reward shaping is provided at any other location.
Because it can take hundreds of steps to reach the goal location, 
even the best model-free algorithms struggle to adequately perform credit assignment and reliably reach the goal (Table~\ref{table:maze2d}).

We plan with \method using the inpainting strategy to condition on a start and goal location.
(The goal location is also available to the model-free methods; it is identifiable by being the only state in the dataset with non-zero reward.)
We then use the sampled trajectory as an open-loop plan.
\method achieves scores over 100 in all maze sizes, indicating that it outperforms a reference expert policy.
We visualize the reverse diffusion process generating \method's plans in Figure~\ref{fig:maze2d}.

While the training data in Maze2D is undirected -- consisting of a controller navigating to and from randomly selected locations -- the evaluation is single-task in that the goal is always the same.
In order to test multi-task flexibility, we modify the environment to randomize the goal location at the beginning of each episode.
This setting is denoted as Multi2D in Table~\ref{table:maze2d}.
\method{} is naturally a multi-task planner; we do not need to retrain the model from the single-task experiments and simply change the conditioning goal.
As a result, \method{} performs as well in the multi-task setting as in the single-task setting.
In contrast, there is a substantial performance drop of the best model-free algorithm in the single-task setting (IQL; \citealt{kostrikov2021implicit}) when adapted to the multi-task setting.
Details of our multi-task IQL with hindsight experience relabeling \citep{andrychowicz2017hindsight} are provided in Appendix~\ref{app:multitask}.
MPPI uses the ground-truth dynamics; its poor performance compared to the learned planning algorithm of Diffuser highlights the difficulty posed by long-horizon planning even when there are no prediction inaccuracies.

\begin{figure}[t!]
\centering
\begin{minipage}{.8\columnwidth}
\begin{flushright}
    \raisebox{.35\height}{\rotatebox{90}{\textbf{U-Maze}}}\hspace{0.5cm}
    \includegraphics[width=0.85\columnwidth]{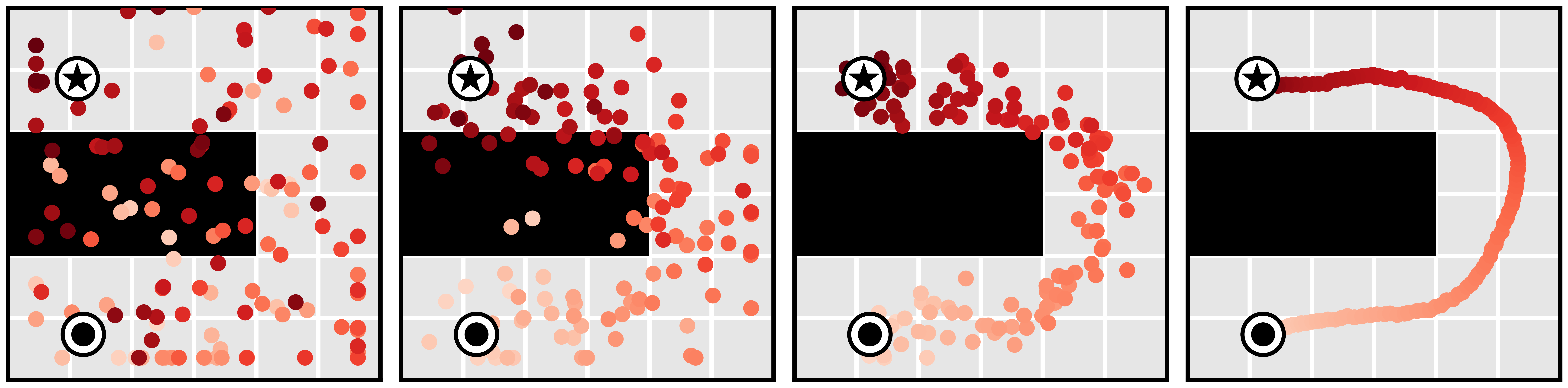} \\
    \raisebox{.3\height}{\rotatebox{90}{\textbf{Medium}}}\hspace{0.5cm}
    \includegraphics[width=0.85\columnwidth]{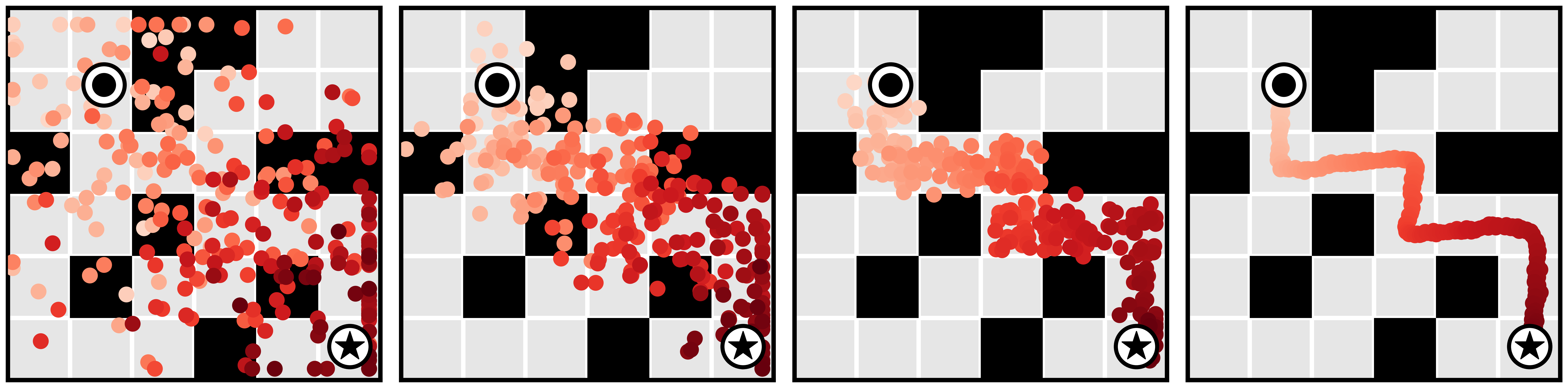} \\
    \raisebox{.6\height}{\rotatebox{90}{\textbf{Large}}}\hspace{0.5cm}
    \includegraphics[width=0.85\columnwidth,trim={.03cm 0 .03cm 0},clip]{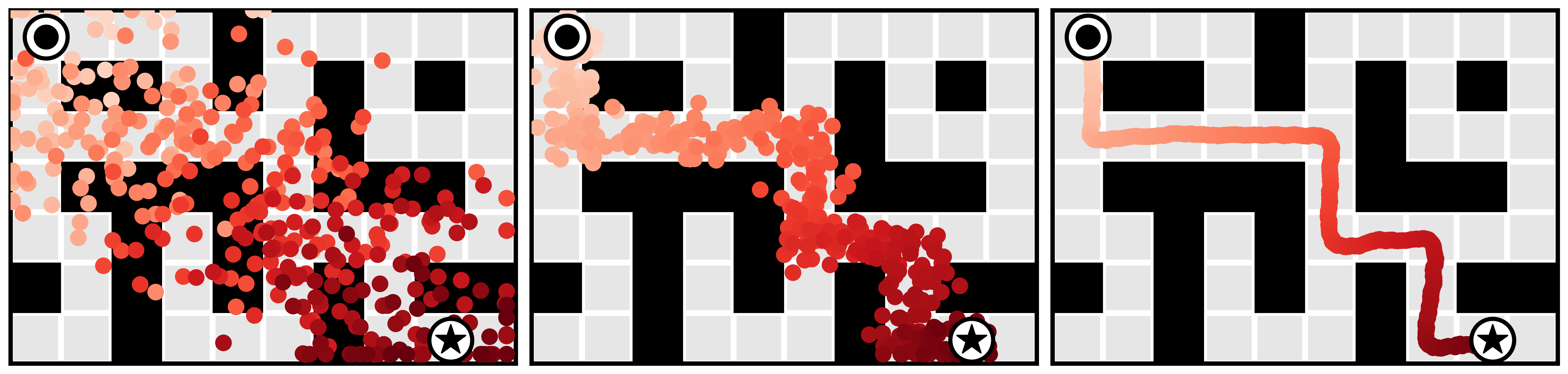} \\
\end{flushright}
\begin{centering}
    \raisebox{.3\height}{denoising}~~
    \tikz \draw [-{Computer Modern Rightarrow[length=2mm, width=3mm]}, line width=.3mm] (0,0) -- (3,0); \\
\end{centering}
\end{minipage}
\caption{
    \textbf{(Planning as inpainting)}
    Plans are generated in the Maze2D environment by sampling trajectories consistent with a specified start 
    \protect{\raisebox{-.05cm}{\includegraphics[height=.35cm]{diffuser/images/maze2d/mark_start_crop.png}}}
    and goal
    \protect{\raisebox{-.05cm}{\includegraphics[height=.35cm]{diffuser/images/maze2d/mark_goal_crop.png}}}
    condition.
    The remaining states are ``inpainted'' by the denoising process.
}
\label{fig:maze2d}
\end{figure}

\subsection{Test-time Flexibility}
\label{sec:blocks}

In order to evaluate the ability to generalize to new test-time goals, we construct a suite of block stacking tasks with three settings: \textbf{(1)} unconditional stacking, for which the task is to build a block tower as tall as possible; \textbf{(2)} conditional stacking, for which the task is to construct a block tower with a specified order of blocks, and \textbf{(3)} rearrangement, for which the task is to match a set of reference blocks' locations in a novel arrangement.
We train all methods on 10000 trajectories from demonstrations generated by PDDLStream \citep{garrett2020pddlstream}; rewards are equal to one upon successful stack placements and zero otherwise.
These block stacking are challenging diagnostics of test-time flexibility;
in the course of executing a partial stack for a randomized goal, a controller will venture into novel states not included in the training configuration.

We use one trained \method for all block-stacking tasks, only modifying the perturbation function $h(\btau{})$ between settings.
In the unconditional stacking task, we directly sample from the unperturbed denoising process $p_\theta(\btau{})$ to emulate the PDDLStream controller.
In the conditional stacking and rearrangement tasks, we compose two perturbation functions $h(\btau{})$ to bias the sampled trajectories: the first maximizes the likelihood of the trajectory's final state matching the goal configuration, and the second enforces a contact constraint between the end effector and a cube during stacking motions.
(See Appendix~\ref{app:stacking_costs} for details.)

\begin{figure}[t]
    \centering
    \small
    \begin{tabular}{@{\extracolsep{\fill}}lrrr}
    \toprule
    \multicolumn{1}{c}{\textbf{Environment}} & \multicolumn{1}{c}{\textbf{~~~~~BCQ~}} &
        \multicolumn{1}{c}{\textbf{~~~~~CQL~}} &
        \multicolumn{1}{c}{\textbf{~~\method}} \\
    \midrule
    {Unconditional Stacking} & $0.0$ & $24.4$ & ~\highlight{\color{highlight}58.7} \scriptsize{\raisebox{1pt}{$\pm 2.5$}}
         \\ 
    {Conditional Stacking} & 0.0  & 0.0 & \highlight{\color{highlight}45.6} \scriptsize{\raisebox{1pt}{$\pm 3.1$}}
         \\ 
    {Rearrangement} & 0.0  & 0.0 & \highlight{\color{highlight}58.9} \scriptsize{\raisebox{1pt}{$\pm 3.4$}} \\ 
    \midrule
    \multicolumn{1}{c}{\textbf{Average}} & 0.0  & 8.1 & \highlight{\color{highlight}54.4~~} \hspace{.44cm} \\ 
    \bottomrule
    \end{tabular}
    \captionof{table}{
    \textbf{(Test-time flexibility)}
    Performance of BCQ, CQL, and \method on block stacking tasks.
    A score of 100 corresponds to a perfectly executed stack; 0 is that of a random policy.
    }
    \label{table:blocks}
\end{figure}

\begin{figure}[t]
    \centering
    \includegraphics[width=0.3\columnwidth]{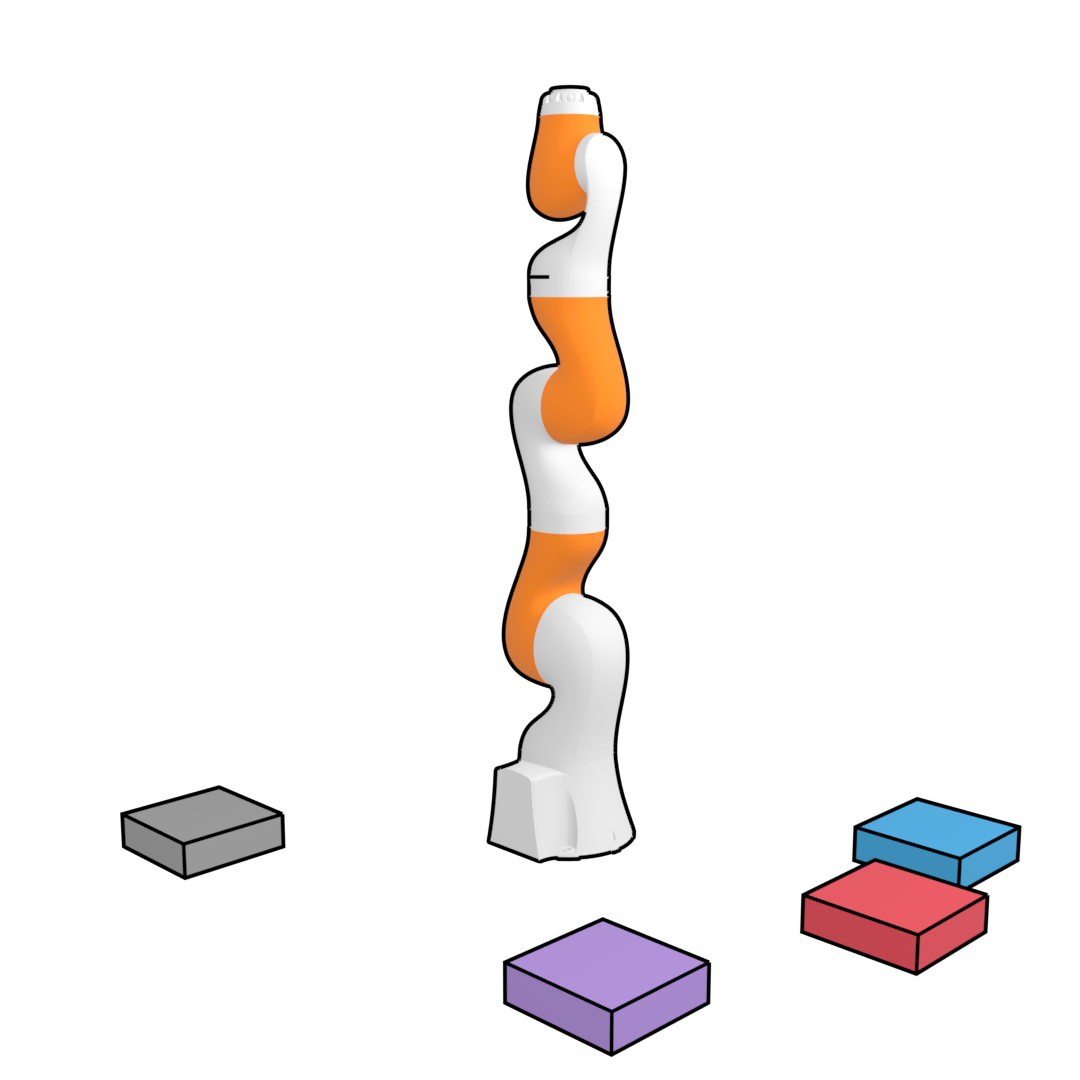}
    \raisebox{1.7cm}{\hspace{0cm}$\rightarrow$}\hspace{.5cm}
    \includegraphics[width=0.3\columnwidth]{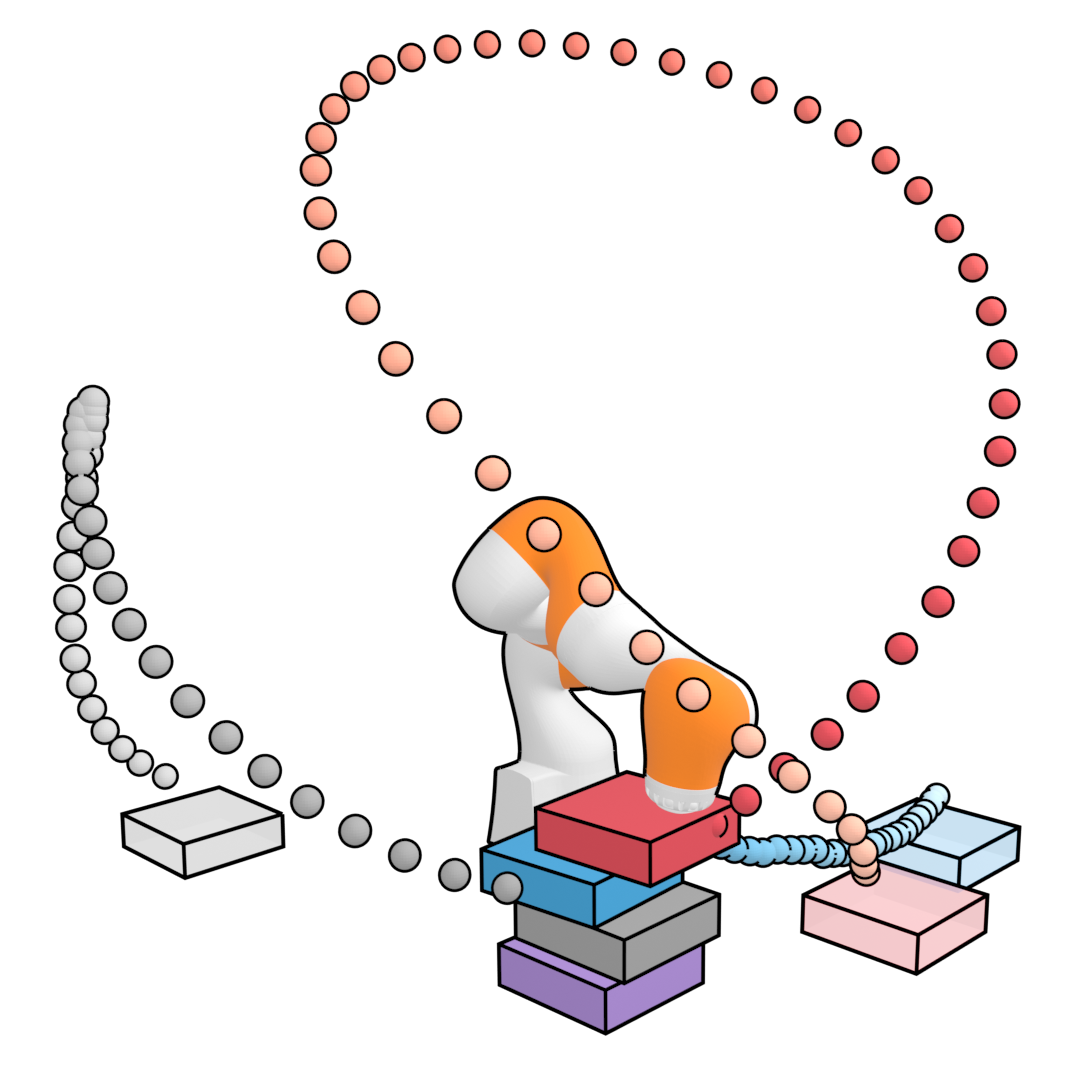}
    \caption{
        \textbf{(Block stacking)}
        A block stacking sequence executed by \method. This task is best illustrated by videos viewable at
        \href{https://diffusion-planning.github.io/}
        {\texttt{diffusion-planning.github.io}}.
    }
    \label{fig:blocks}
\end{figure}

We compare with two prior model-free offline reinforcement learning algorithms: BCQ \citep{fujimoto2019off} and CQL \citep{kumar2020conservative}, training standard variants for the unconditional stacking task and goal-conditioned variants for the conditional stacking and rearrangement tasks.
(Baseline details are provided in Appendix~\ref{app:baselines}.)
Quantitative results are given in Table~\ref{table:blocks}, in which a score of 100 corresponds to a perfect execution of the task.
\method substantially outperforms both prior methods, with the conditional settings requiring flexible behavior generation proving especially difficult for the model-free algorithms.
A visual depiction of an execution by \method is provided in Figure~\ref{fig:blocks}.

\subsection{Offline Reinforcement Learning}
\label{sec:locomotion}

\begin{figure}[t]
\centering
\begin{minipage}{.7\columnwidth}
\begin{minipage}{.05\columnwidth}
\begin{flushleft}
    \raisebox{.2\height}{\rotatebox{90}{denoising}}
    \tikz \draw [-{Computer Modern Rightarrow[length=2mm, width=3mm]}, line width=.3mm] (0.2,2) -- (0.2,0); \\
\end{flushleft}
\end{minipage}~~
\begin{minipage}{.9\columnwidth}
\centering
    \vspace{.15cm}
    \includegraphics[width=\linewidth]{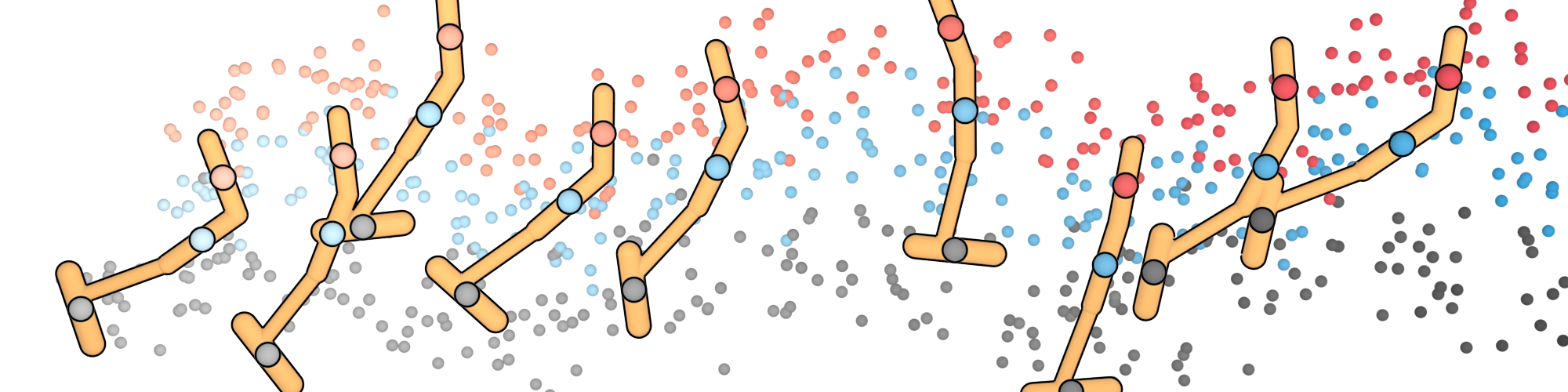} \\
    \includegraphics[width=\linewidth]{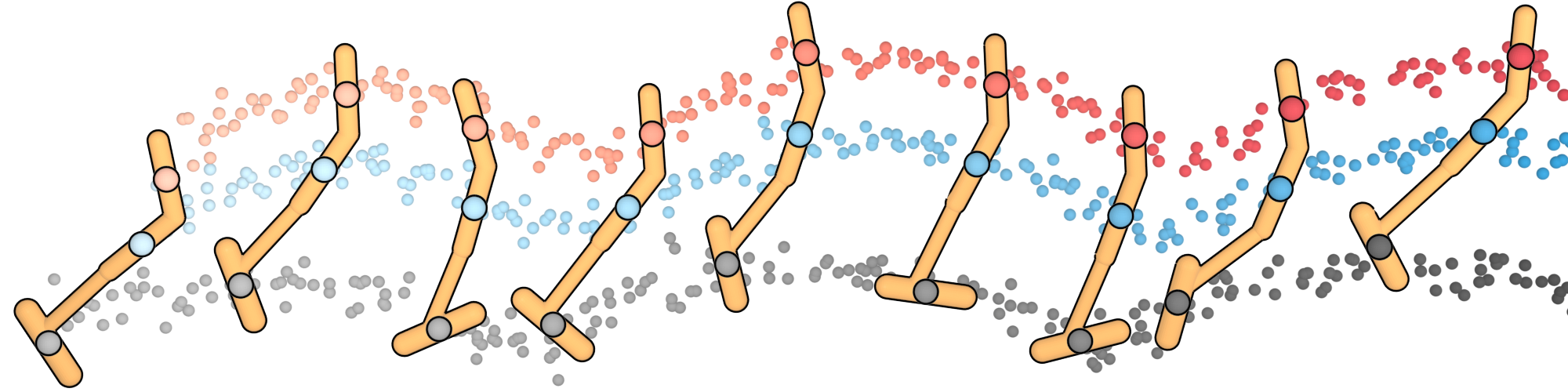} \\
    \includegraphics[width=\linewidth]{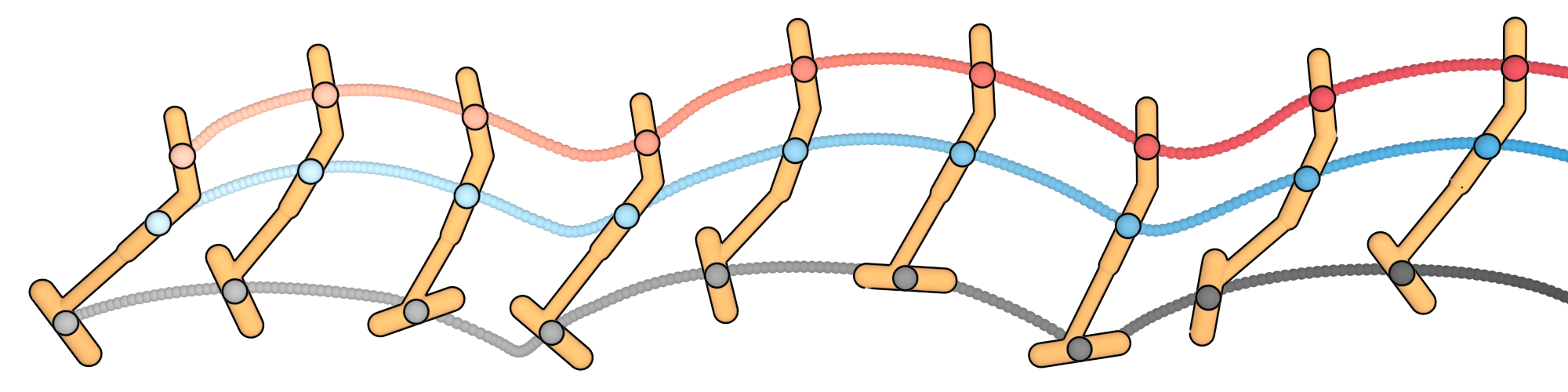} \\
\end{minipage}
\end{minipage}
\colorbox{white}{\textcolor{white}{\rule{\linewidth}{.5cm}}}
\hspace{5cm}
\raisebox{.3\height}{planning horizon}~~
\tikz \draw [-{Computer Modern Rightarrow[length=2mm, width=3mm]}, line width=.3mm] (0,0) -- (3,0); \\
\caption{
    \textbf{(Guided sampling)}
    \method generates all timesteps of a plan concurrently, instead of autoregressively, through the denoising process.
}
\label{fig:locomotion}
\end{figure}

Finally, we evaluate the capacity to recover an effective single-task controller from heterogeneous data of varying quality using the D4RL offline locomotion suite \citep{fu2020d4rl}.
We guide the trajectories generated by \method toward high-reward regions using the sampling procedure described in Algorithm~\ref{alg:rl} and condition the trajectories on the current state using the inpainting procedure described in Section~\ref{sec:inpainting}.
The reward predictor $\mathcal{J}_\phi$ is trained on the same trajectories as the diffusion model.
A visualization of the learned denoising procedure in this setting is shown in Figure~\ref{fig:locomotion}.

We compare to a variety of prior algorithms spanning other approaches to data-driven control, including the model-free reinforcement learning algorithms CQL \citep{kumar2020conservative} and IQL \citep{kostrikov2021implicit}; return-conditioning approaches like Decision Transformer (DT; \citealt{chen2021decision}); and model-based reinforcement learning approaches including Trajectory Transformer (TT; \citealt{janner2021sequence}), MOPO \citep{yu2020mopo}, MOReL \citep{kidambi2020morel}, and MBOP \citep{argenson2020model}.
As shown in Table~\ref{table:locomotion}, \method performs comparably to prior algorithms in the single-task setting: better than the model-based MOReL and MBOP and return-conditioning DT, but worse than the best offline techniques designed specifically for single-task performance.
We also investigated a variant using Diffuser as a dynamics model in conventional trajectory optimizers such as MPPI \citep{grady2015mppi}, but found that this combination performed no better than random, suggesting that the effectiveness of Diffuser stems from coupled modeling and planning, and not from improved open-loop predictive accuracy.

\definecolor{tblue}{HTML}{1F77B4}
\definecolor{tred}{HTML}{FF6961}
\definecolor{tgreen}{HTML}{429E9D}
\definecolor{thighlight}{HTML}{000000}
\newcolumntype{P}{>{\raggedleft\arraybackslash}X}
\begin{table*}[t]
\centering
\footnotesize
\begin{tabularx}{\textwidth}{llPPPPPPPr}
\toprule
\multicolumn{1}{r}{\textbf{\color{black}} Dataset} & \multicolumn{1}{r}{\textbf{\color{black}} Environment} & \multicolumn{1}{r}{\textbf{\color{black}} BC} & \multicolumn{1}{r}{\textbf{\color{black}} CQL} & \multicolumn{1}{r}{\textbf{\color{black}} IQL} & \multicolumn{1}{r}{\textbf{\color{black}} DT} & \multicolumn{1}{r}{\textbf{\color{black}} TT} & \multicolumn{1}{r}{\textbf{\color{black}} MOPO} & \multicolumn{1}{r}{\textbf{\color{black}} MBOP} & \multicolumn{1}{r}{\textbf{\color{black}} Diffuser} \\ 
\midrule
Medium-Expert & HalfCheetah & $55.2$ & $91.6$ & $86.7$ & $86.8$ & $95.0$ & $63.3$ & $\textbf{\color{thighlight}105.9}$ & $88.9$ \scriptsize{\raisebox{1pt}{$\pm 0.3$}} \\ 
Medium-Expert & Hopper & $52.5$ & $\textbf{\color{thighlight}105.4}$ & $91.5$ & $\textbf{\color{thighlight}107.6}$ & $\textbf{\color{thighlight}110.0}$ & $23.7$ & $55.1$ & $103.3$ \scriptsize{\raisebox{1pt}{$\pm 1.3$}} \\ 
Medium-Expert & Walker2d & $\textbf{\color{thighlight}107.5}$ & $\textbf{\color{thighlight}108.8}$ & $\textbf{\color{thighlight}109.6}$ & $\textbf{\color{thighlight}108.1}$ & $101.9$ & $44.6$ & $70.2$ & $\textbf{\color{thighlight}106.9}$ \scriptsize{\raisebox{1pt}{$\pm 0.2$}} \\ 
\midrule
Medium & HalfCheetah & $42.6$ & $44.0$ & $\textbf{\color{thighlight}47.4}$ & $42.6$ & $\textbf{\color{thighlight}46.9}$ & $42.3$ & $44.6$ & $42.8$ \scriptsize{\raisebox{1pt}{$\pm 0.3$}} \\ 
Medium & Hopper & $52.9$ & $58.5$ & $66.3$ & $67.6$ & $61.1$ & $28.0$ & $48.8$ & $\textbf{\color{thighlight}74.3}$ \scriptsize{\raisebox{1pt}{$\pm 1.4$}} \\ 
Medium & Walker2d & $75.3$ & $72.5$ & $\textbf{\color{thighlight}78.3}$ & $74.0$ & $\textbf{\color{thighlight}79.0}$ & $17.8$ & $41.0$ & $\textbf{\color{thighlight}79.6}$ \scriptsize{\raisebox{1pt}{$\pm 0.55$}} \\ 
\midrule
Medium-Replay & HalfCheetah & $36.6$ & $45.5$ & $44.2$ & $36.6$ & $41.9$ & $\textbf{\color{thighlight}53.1}$ & $42.3$ & $37.7$ \scriptsize{\raisebox{1pt}{$\pm 0.5$}} \\ 
Medium-Replay & Hopper & $18.1$ & $\textbf{\color{thighlight}95.0}$ & $\textbf{\color{thighlight}94.7}$ & $82.7$ & $\textbf{\color{thighlight}91.5}$ & $67.5$ & $12.4$ & $\textbf{\color{thighlight}93.6}$ \scriptsize{\raisebox{1pt}{$\pm 0.4$}} \\ 
Medium-Replay & Walker2d & $26.0$ & $77.2$ & $73.9$ & $66.6$ & $\textbf{\color{thighlight}82.6}$ & $39.0$ & $9.7$ & $70.6$ \scriptsize{\raisebox{1pt}{$\pm 1.6$}} \\ 
\midrule
\multicolumn{2}{c}{\textbf{Average}} & 51.9 & \textbf{\color{thighlight}77.6} & \textbf{\color{thighlight}77.0} & 74.7 & \textbf{\color{thighlight}78.9} & 42.1 & 47.8 & \textbf{\color{thighlight}77.5} \hspace{.6cm} \\ 
\bottomrule
\end{tabularx}
\vspace{-.0cm}
\caption{
    \textbf{(Offline reinforcement learning)}
    The performance of \method{} and a variety of prior algorithms on the D4RL locomotion benchmark \citep{fu2020d4rl}.
    Results for \method{} correspond to the mean and standard error over 150 planning seeds. We detail the sources for the performance of prior methods in Appendix~\ref{app:d4rl_sources}. Following \citet{kostrikov2021implicit}, we emphasize in bold scores within 5 percent of the maximum per task ($\ge 0.95 \cdot \text{max}$).
}
\label{table:locomotion}
\end{table*}

\subsection{Warm-Starting Diffusion for Faster Planning}

A limitation of \method{} is that individual plans are slow to generate (due to iterative generation). Na\"ively,  as we execute plans open loop, a new plan must be regenerated at each step of execution. To improve execution speed of \method{}, we may further reuse previously generated plans to warm-start generations of subsequent plans. 

To warm-start planning, we may run a limited number of forward diffusion steps from a previously generated plan and then run a corresponding number of denoising steps from this partially noised trajectory to regenerate an updated plan. In \fig{fig:speed}, we illustrate the trade-off between performance and runtime budget as we vary the underlying number of denoising steps used to regenerate each a new plan from 2 to 100. We find that we may reduce the planning budget of our approach markedly with only modest drop in performance.

\vspace{-.15cm}
\section{Related Work}

\begin{figure}
    \centering
    \includegraphics[width=0.6\columnwidth]{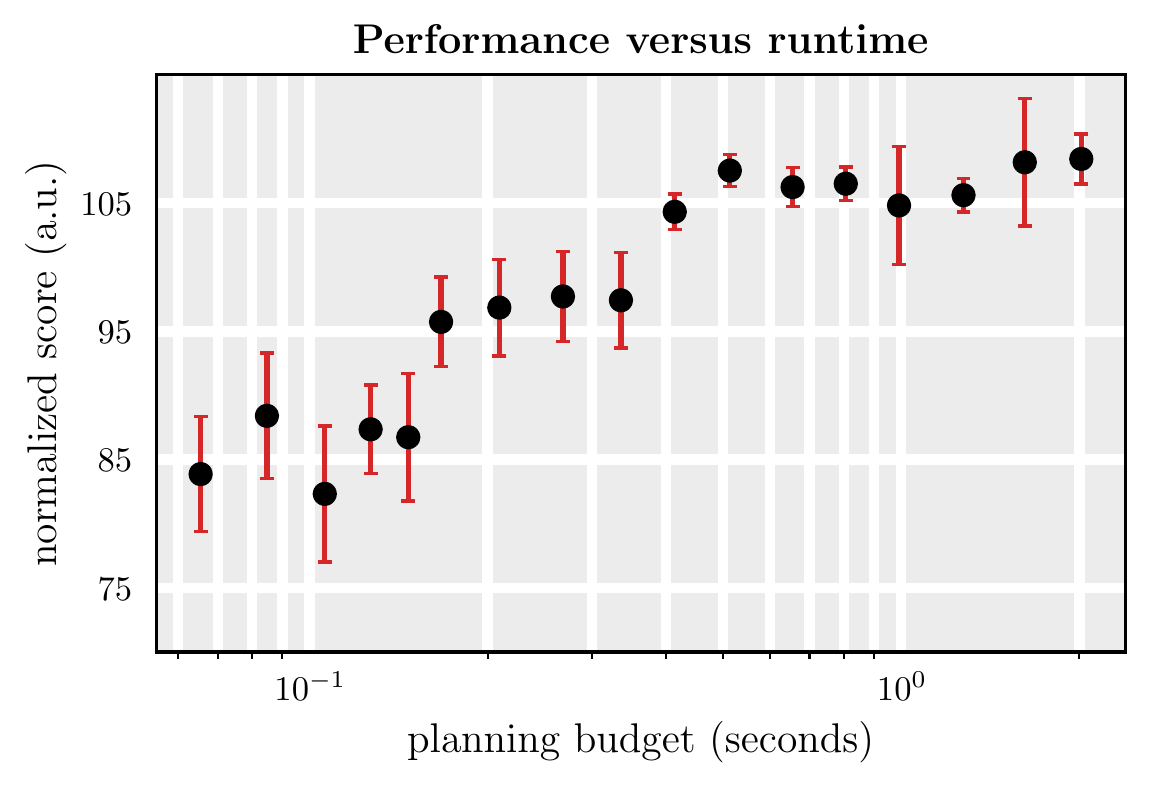}
    \caption{
    \textbf{(Warm-starting planning)}
    Performance of \method on Walker2d Medium-Expert when varying the number of diffusion steps to warm-start planning.
    Performance suffers only minimally even when using one-tenth the number of diffusion steps, as long as plans are initialized from the previous timestep's plan.}
    \label{fig:speed}
\end{figure}

Advances in deep generative modeling have recently made inroads into model-based reinforcement learning, with multiple lines of work exploring dynamics models parameterized as
convolutional U-networks \cite{kaiser2019mbatari},
stochastic recurrent networks \cite{ke2018modeling,hafner2018planet,ha2018worldmodels},
vector-quantized autoencoders \cite{hafner2021mastering,ozair2021vector},
neural ODEs \cite{du2020ode},
normalizing flows \cite{rhinehart2020imitative,janner2020gamma},
generative adversarial networks \cite{eysenbach2021mismatched},
energy-based models (\emph{EBMs}; \citealt{du2019model}),
graph neural networks  \cite{sanchez-gonzalez2018graph},
neural radiance fields \cite{li2021scene},
and Transformers \citep{janner2021sequence,chen2021transdreamer}.
Further, \citealt{lambert2020learning} have studied non-autoregressive trajectory-level dynamics models for long-horizon prediction.
These investigations generally assume an abstraction barrier between the model and planner.
Specifically, the role of learning is relegated to approximating environment dynamics; once learning is complete the model may be inserted into any of a variety of planning \cite{botev2013cem,grady2015mppi} or policy optimization \cite{sutton1990dyna,wang2019benchmarking} algorithms because the form of the planner does not depend strongly on the form of the model.
Our goal is to break this abstraction barrier by designing a model and planning algorithm that are trained alongside one another,
resulting in a non-autoregressive trajectory-level model for which sampling and planning are nearly identical.

A number of parallel lines of work have studied how to break the abstraction barrier between model learning and planning in different ways.
Approaches include training an autoregressive latent-space model for reward prediction \citep{tamar2016vin, oh2017vpn,schrittwieser2019muzero}; weighing model training objectives by state values \citep{farahmand2017valueaware}; and applying collocation techniques to learned single-step energies \citep{du2019model, rybkin2021model}.
In contrast, our method plans by modeling and generating all timesteps of a trajectory concurrently, instead of autoregressively, and conditioning the sampled trajectories with auxiliary guidance functions.


Diffusion models have emerged as a promising class of generative model that formulates the data-generating process as an iterative denoising procedure \citep{sohldickstein2015nonequilibrium, ho2020denoising}.
The denoising procedure can be seen as parameterizing the gradients of the data distribution \citep{song2019generative}, connecting diffusion models to score matching \cite{hyvarinen2005score} and EBMs \citep{lecun06atutorial, du2019implicit, nijkamp2019learning,grathwohl2020stein}.
Iterative, gradient-based sampling lends itself towards flexible conditioning \citep{dhariwal2021diffusion} and compositionality \citep{du2020compositional}, which we use to recover effective behaviors from heterogeneous datasets and plan for reward functions unseen during training.
While diffusion models have been developed for the generation of images \cite{song2020denoising}, waveforms \cite{chen2020wavegrad}, 3D shapes \cite{zhou2021shape}, and text \cite{austin2021structured}, to the best of our knowledge they have not previously been used in the context of reinforcement learning or decision-making.
\vspace{-.15cm}
\section{Discussion}
\label{sec:discussion}
This chapter has proposed \method, a denoising diffusion model for trajectory data.
Unlike conventional dynamics models, \method lends itself to a learned, non-greedy planning procedure that allows it so solve long-horizon tasks.
Planning with \method is almost identical to sampling from it, differing only in the addition of auxiliary perturbation functions that serve to guide samples.
The learned diffusion-based planning procedure has a number of useful properties, including graceful handling of sparse rewards, the ability to plan for new rewards without retraining, and a temporal compositionality that allows it to produce out-of-distribution trajectories by stitching together in-distribution subsequences.

Our results point to a new class of diffusion-based planning procedures for deep model-based reinforcement learning.
However, there are genuine tradeoffs between Diffuser and the Transformer-based approach from Chapter~\ref{ch:transformer}.
While Diffuser comes equipped with a much more efficient learned planner than beam search, it comes at the cost of reduced open-loop predictive accuracy.
This discrepancy arises due to differences in the underlying generative models.
Diffusion models' success in image generation has been attributed to an increased importance placed on low-frequency information, a consequence of the denoising schedule and parameter-sharing across noise levels \citep{song2019ncsn,dieleman2022diffusion}.
In contrast, Transformers are more readily able to represent the types of high-frequency information that can disproportionately affect trajectories, such as joint angles near a contact point, but are limited in planning contexts due to impoverished search procedures.
It remains an open question, both in the context of decision-making and broadly in generative modeling, how to best combine the effectiveness of the Transformer architecture with the strengths of non-autoregressive denoising strategies.

\chapter{Conclusion}
\renewcommand{\chaptertitle}{Conclusion}
\label{ch:conclusion}

In this thesis, we considered the role of generative models in the reinforcement learning problem.
We began by asking what prediction problem the generative model should be trained to solve, and in the process developed a continuous-state dynamics model that can predict over probabilistic, infinite horizons.
However, when generalizing the standard single-step prediction problem to that of infinite-horizon prediction, we found that the bottleneck to scaling lay with the quality of the generative model itself.
As a result, we studied the extent to which improved generative modeling capabilities could be used to enhance the performance of a planning algorithm, designing a planner around the toolbox of large-scale language modeling.
While this approach markedly increased predictive accuracy, planning capabilities lagged behind due to the inefficiency of language model decoding approaches, making the new bottleneck the planning algorithm in which the generative model was embedded.

The main difficulty with such an approach was that the quality of the planner was constant; because the model itself could improve with data but the planner could not, the model would continue to become more accurate until the planner became the performance-limiting component.
To address this, we described a method of incorporating the entire planning algorithm into the generative model itself, as opposed to embedding a generative model into a conventional planner.
This design allowed the plans, and not just the predictive quality, to improve with more data and experience, and we demonstrated that this approach was effective in long-horizon and sparse-reward settings where conventional planning algorithms struggle.

As with many research programs, the work in this thesis raises more questions than it answers.
We conclude by discussing some of the most promising directions for future work in the intersection of reinforcement learning and generative modeling:
\begin{itemize}
    \item \textbf{Planning in abstracted time.}
        The $\gamma$-model allows us to ask questions of the form ``will I reach a particular state at some point in the future'' without needing to worry about precisely when that state will be reached.
        This affordance is a natural fit for long-horizon decision-making.
        For example, a plan for a mobile robot would plausibly need to include battery recharging after some amount of time.
        The ability to incorporate such a step in its plan, without needing to precommit to the exact second in which this step must occur, would allow the planner to hone in on higher-level decisions to be made and their ordering as opposed to unnecessary (and inevitably misleading) precision.
        This idea shares much of its inspiration with the line of work describing options \citep{sutton1999semimdps}; scaling such methods to high-dimensional control problems poses an exciting, and likely fruitful, challenge.
    \item \textbf{Planning in abstracted states.}
        The methods described in this thesis plan in raw state space.
        In large observation spaces, this becomes not only unnecessary  -- as most details in an observation are unlikely to be relevant to the task at hand -- but also possibly infeasible in the event that the space is so large that high-quality generative modeling once again becomes the bottleneck.
        Instead, it would be much more efficient to plan in a compressed space pruned of unnecessary details.
        There has been much work on learned latent spaces, with the usual spectrum  ranging from those trained with reconstructive objectives \citep{hafner2018planet,janner2019mbpo} to those that contain only value-relevant information \citep{grimm2020value} and options that interpolate between these two extremes \citep{farahmand2017valueaware,eysenbach2021mismatched}.
        However, it remains an open problem how to combine these works with learned planners, contemporary high-capacity generative models, and deploy them in the types of real-world problems that are often used as the motivating examples for learned latent spaces.
    \item \textbf{Improved learned planning algorithms.}
        One of the main contributions of this thesis is a learned planning algorithm based on iterative denoising.
        The main advantage of this method compared to conventional planners is its ability to improve with experience.
        However, just as there exist many classical planning algorithms with different strengths and weaknesses, there is no reason to believe that Diffuser is the end of the story for data-driven planning.
        Improved variants could incorporate dynamics Jacobians, more sophisticated constrained optimization techniques, or make use of tree-based search techniques for better coverage of the space of possibilities.
    \item \textbf{Amortized and adaptive planning.}
        As discussed in Chapter~\ref{ch:introduction}, model-based planners are appealing for their test-time flexibility and generalization properties.
        However, a fundamental limitation of planning-based approaches is their runtime costs, which can become prohibitively large for any problem requiring a high planning frequency.
        In comparison, model-free policies tend to be more efficient, often requiring only a single feedforward pass of a neural network as opposed to many such computations for the purpose of an iterative planner.
        To benefit from the respective strengths of both approaches, the cost of planning could be amortized over time by storing the result of the planning method into a lightweight policy and falling back on the planner only in novel situations.
        The open challenge, of course, is to design a reliable mechanism for determining when a situation calls for the slower planning approach and when it is safe to use the faster policy.
    \item \textbf{Foundation dynamics models.}
        The dynamics models and planners described in this thesis were trained and evaluated on individual tasks (\emph{e.g.}, in locomotion settings) or in small sets of tasks (\emph{e.g.}, in maze-solving or block-stacking domains).
        While common in reinforcement learning, this workflow is becoming increasingly rare in the rest of machine learning, in which models pretrained on large-scale datasets are finetuned (or used directly) on a wide variety of tasks.
        There is an opportunity to build something similar for dynamics models, especially in real-world robotics settings where the regularity of the physical world could enable the types of generalization present in large language modeling \citep{brown2020gpt3}.
        In the short term, this problem is largely one of dataset collection.
        In the longer term, this endeavour will open up new research problems we are not yet able to anticipate.
        A common refrain in deep learning is that something new breaks with every order-of-magnitude scale increase, and it is likely that the same will be true in robotic control.
\end{itemize}

\newpage
\subsection*{Open-Source Implementations}
\renewcommand{\chaptertitle}{Open-Source}
\addcontentsline{toc}{chapter}{Open-Source Implementations}

Code to reproduce the results in this thesis is available at the following webpages:
\begin{itemize}
    \item Chapter~\ref{ch:gamma}: \href{https://gammamodels.github.io/}
    {\texttt{gamma-models.github.io}}
    \item Chapter~\ref{ch:transformer}: \href{https://trajectory-transformer.github.io/}
    {\texttt{trajectory-transformer.github.io}}
    \item Chapter~\ref{ch:diffuser}: \href{https://diffusion-planning.github.io/}
    {\texttt{diffusion-planning.github.io}}
\end{itemize}

\subsection*{Code References}
\addcontentsline{toc}{chapter}{Code References}

The following libraries were used in this work:
NumPy \citep{harris2020numpy},
PyTorch \citep{paszke2019pytorch},
JAX \citep{jax2018github},
Flax \citep{flax2020github},
einops \citep{rogozhnikov2022einops}, 
MuJoCo \citep{todorov2012mujoco}, 
mujoco-py \citep{openai2016mujocopy},
Gym \citep{openai2016gym},
minGPT \citep{karpathy2020mingpt},
and Diffusion Models in PyTorch \citep{wang2020ddpm}.

\newpage
\titleformat{\chapter}[display]
{\Huge\bfseries\centering}
{}
{}
{\titlerule[3pt]\vspace{0.5em}}
[\vspace{0.65em}{\titlerule[1pt]}\vspace{-.25em}]

\renewcommand{\chaptertitle}{Bibliography}
\bibliography{references}
\bibliographystyle{setup/icml}

\appendix

\begin{appendices}

\titleformat{\chapter}[display]
{\Huge\bfseries\centering}
{
    \raisebox{0.175em}{\rule{0.325\linewidth}{3pt}}
    \parbox[t]{0.3\linewidth}{\centering\Huge Appendix \thechapter}
    \raisebox{0.175em}{\rule{0.335\linewidth}{3pt}}
}
{-0.95em}
{}
[\vspace{0.65em}{\titlerule[1pt]}\vspace{-.25em}]

\newpage

\chapter{$\boldgamma$-Model Details}
\renewcommand{\chaptertitle}{Appendix A: $\boldgamma$-Model}

\setcounter{thm}{0}

\section{Geometric weighting lemma}
\label{app:geometric_lemma}
\begin{lem}
\label{lem:gamma_mve_weights}
Let $\alpha_n$ be importance weights as described in Theorem~\ref{thm:weights}:
$\alpha_n = \frac{(1-\gammav)(\gammav - \gammam)^{n-1}}{(1-\gammam)^n}$.
Then:
\begin{equation*}
1 - \sum_{n=1}^{H} \alpha_n = \left( \frac{\gammav-\gammam}{1-\gammam} \right)^{\!H}
\end{equation*}
\end{lem}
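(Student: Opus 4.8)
The plan is to recognize the partial sum $\sum_{n=1}^{H}\alpha_n$ as a finite geometric series and evaluate it in closed form. First I would introduce the shorthand $\rho = \frac{\gammav-\gammam}{1-\gammam}$ for the common ratio appearing in the weights, so that $\alpha_n = \frac{1-\gammav}{1-\gammam}\,\rho^{\,n-1}$.

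The key algebraic observation is that the leading constant equals $1-\rho$: indeed $1 - \frac{\gammav-\gammam}{1-\gammam} = \frac{(1-\gammam)-(\gammav-\gammam)}{1-\gammam} = \frac{1-\gammav}{1-\gammam}$, hence $\frac{1-\gammav}{1-\gammam} = 1-\rho$. Substituting gives $\alpha_n = (1-\rho)\,\rho^{\,n-1}$, so that $\sum_{n=1}^{H}\alpha_n = (1-\rho)\sum_{n=1}^{H}\rho^{\,n-1} = (1-\rho)\,\frac{1-\rho^{H}}{1-\rho} = 1 - \rho^{H}$, where the division by $1-\rho$ is legitimate because $\gammav \in [\gammam,1)$ forces $\rho \in [0,1)$. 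Rearranging yields $1 - \sum_{n=1}^{H}\alpha_n = \rho^{H} = \left(\frac{\gammav-\gammam}{1-\gammam}\right)^{\!H}$, which is the claim.

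There is no genuine obstacle here — the statement is a routine geometric-series identity — so the only points deserving care are the identity $\frac{1-\gammav}{1-\gammam} = 1-\rho$ that makes the sum telescope cleanly, and recording the admissible range of $\gammav$ so the closed-form sum applies (together with the paper's convention $0^0 = 1$ should one wish to include the degenerate case $\gammav = \gammam$). As a sanity check, the identity is consistent with the interpretation used in Remark~1 following Theorem~\ref{thm:gamma_mve}: the weight placed on the terminal value function is precisely the residual mass $1 - \sum_{n=1}^{H}\alpha_n$, and the mixture weights over the $H$ model steps and the terminal term sum to $1$.
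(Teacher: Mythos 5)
Your proof is correct and follows essentially the same route as the paper: both evaluate $\sum_{n=1}^{H}\alpha_n$ as a finite geometric series with ratio $\rho = \frac{\gammav-\gammam}{1-\gammam}$ and simplify. The only (cosmetic) difference is that you factor $\alpha_n = (1-\rho)\rho^{n-1}$ so the sum telescopes to $1-\rho^H$ immediately, whereas the paper pulls out $\frac{1-\gammav}{\gammav-\gammam}$ and sums $\rho^n$ from $n=1$ to $H$; your factoring also sidesteps the paper's implicit division by $\gammav-\gammam$ in the degenerate case $\gammav=\gammam$.
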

\vspace{-.3cm}
\begin{proof}
\begin{align*}
1 - \sum_{n=1}^{H} \alpha_n
&= 1 - \left( \frac{1-\gammav}{\gammav-\gammam} \right) \sum_{n=1}^{H}  \left( \frac{\gammav - \gammam}{1-\gammam} \right)^{\!n} \\
&= 1 -  \left( \frac{1-\gammav}{\gammav-\gammam} \right)
\frac{ \left( \frac{\gammav - \gammam}{1-\gammam} \right) - \left(\frac{\gammav - \gammam}{1-\gammam}\right)^{\!H+1}}{ \frac{1-\gammav}{1-\gammam} } \\
&= 1 -  \left( \frac{1-\gammam}{\gammav-\gammam} \right) \left(
\left( \frac{\gammav - \gammam}{1-\gammam} \right) - \left(\frac{\gammav - \gammam}{1-\gammam}\right)^{\!H+1} \right) \\
&= \left(\frac{\gammav - \gammam}{1-\gammam}\right)^{\!H}
\end{align*}
\end{proof}




\section{Implementation Details}
\label{app:implementation}

\paragraph{$\boldgamma$-MVE algorithmic description.}
The $\gamma$-MVE estimator may be used for value estimation in any actor-critic algorithm.
We describe the variant used in our control experiments, in which it is used in the soft actor critic algorithm (SAC; \citealt{haarnoja18sac}), in Algorithm~\ref{alg:gamma_mve}.
The $\gamma$-model update is unique to $\gamma$-MVE; the objectives for the value function and policy are identical to those in SAC.
The objective for the $Q$-function differs only by replacing $V(\stp)$ with $V_{\gamma-\text{MVE}}(\stp)$.
For a detailed description of how the gradients of these objectives may be estimated, and for hyperparameters related to the training of the $Q$-function, value function, and policy, we refer to \cite{haarnoja18sac}. 

\paragraph{Network architectures.}
For all GAN experiments, the $\gamma$-model generator $\model$ and discriminator $\disc$ are instantiated as two-layer MLPs with hidden dimensions of 256 and leaky ReLU activations. For all normalizing flow experiments, we use a six-layer neural spline flow \citep{durkan2019spline} with 16 knots defined in the interval $[-10, 10]$.
The rational-quadratic coupling transform uses a three-layer MLP with hidden dimensions of 256.

\paragraph{Hyperparameter settings.}
We include the hyperparameters used for training the GAN $\gamma$-model in Table~\ref{tbl:hyperparameters_gan} and the flow $\gamma$-model in Table~\ref{tbl:hyperparameters_flow}.

{
\renewcommand{\arraystretch}{1.4}
\begin{table}
\centering
\caption{GAN $\gamma$-model hyperparameters (Algorithm~\ref{alg:practical_samples}).}
\label{tbl:hyperparameters_gan}
\begin{tabular}{p{8cm}|p{1.75cm}}
\hline
    \textbf{Parameter} & \textbf{Value}\\
\hline
    Batch size & 128 \\
    Number of $\se$ samples per $(\st, \at)$ pair & 512 \\
    Delay parameter $\tau$ & $5 \cdot 10^{-3}$ \\
    Step size $\lambda$ & $1 \cdot 10^{-4}$ \\
    Replay buffer size (off-policy prediction experiments) & $2 \cdot 10^{5}$ \\
\hline
\end{tabular} \\
\vspace{.2cm}
\end{table}
}

{
\renewcommand{\arraystretch}{1.4}
\begin{table}
\centering
\caption{Flow $\gamma$-model hyperparameters (Algorithm~\ref{alg:practical_logp})}
\label{tbl:hyperparameters_flow}
\begin{tabular}{p{8cm}|p{1.75cm}}
\hline
    \textbf{Parameter} & \textbf{Value}\\
\hline
    Batch size & 1024 \\
    Number of $\se$ samples per $(\st, \at)$ pair & 1 \\
    Delay parameter $\tau$ & $5 \cdot 10^{-3}$ \\
    Step size $\lambda$ & $1 \cdot 10^{-4}$ \\
    Replay buffer size (off-policy prediction experiments) & $2 \cdot 10^{5}$ \\
   Single-step Gaussian variance $\sigma^2$ & $1 \cdot 10^{-2}$ \\
\hline
\end{tabular} \\
\vspace{.2cm}
\end{table}
}

We found the original GAN \citep{goodfellow2014gan} and the least-squares GAN \citep{mao2016lsgan} formulation to be equally effective for training $\gamma$-models as GANs.

\section{Environment Details}
\label{app:environments}

\textbf{Acrobot-v1} is a two-link system \citep{sutton1996generalization}. The goal is to swing the lower link above a threshold height. The eight-dimensional observation is given by $[\cos \theta_0, \sin \theta_0, \cos \theta_1, \sin \theta_1, \frac{\mathrm{d}}{\mathrm{d}t}\theta_0, \frac{\mathrm{d}}{\mathrm{d}t}\theta_1]$. We modify it to have a one-dimensional continuous action space instead of the standard three-dimensional discrete action space. We provide reward shaping in the form of $r_\text{shaped}=-\cos \theta_0 - \cos (\theta_0 + \theta_1)$.

\textbf{MountainCarContinuous-v0} is a car on a track \citep{moore1990efficient}.
The goal is to drive the car up a high too high to summit without built-up momentum.
The two-dimmensional observation space is $[x, \frac{\mathrm{d}}{\mathrm{d}t}x]$. We provide reward shaping in the form of $r_\text{shaped}=x$.

\textbf{Pendulum-v0} is a single-link system. The link starts in a random position and the goal is to swing it upright. The three-dimensional observation space is given by $[\cos\theta, \sin\theta, \frac{\mathrm{d}}{\mathrm{d}t}\theta]$.

{
\newcommand{\be}{\mathbf{e}}
\renewcommand{\bg}{\mathbf{g}}
\textbf{Reacher-v2} is a two-link arm.
The objective is to move the end effector $\be$ of the arm to a randomly sampled goal position $\bg$. The 11-dimensional observation space is given by 
$[\cos\theta_0, \cos\theta_1, \sin\theta_0, \sin\theta_1, \bg_{x},\bg_{y}, \frac{\mathrm{d}}{\mathrm{d}t}\theta_0, \frac{\mathrm{d}}{\mathrm{d}t}\theta_1, \be_x-\bg_x, \be_y-\bg_y, \be_z-\bg_z]$.
}

Model-based methods often make use of shaped reward functions during model-based rollouts \citep{chua2018pets}. For fair comparison, when using shaped rewards we also make the same shaping available to model-free methods.


\section{Adversarial \texorpdfstring{$\boldsymbol{\gammam}$}{Gamma}-Model Predictions}
\label{app:gan_predictions}
\begin{figure}[H]
    \centering
    \includegraphics[width=1.0\linewidth]{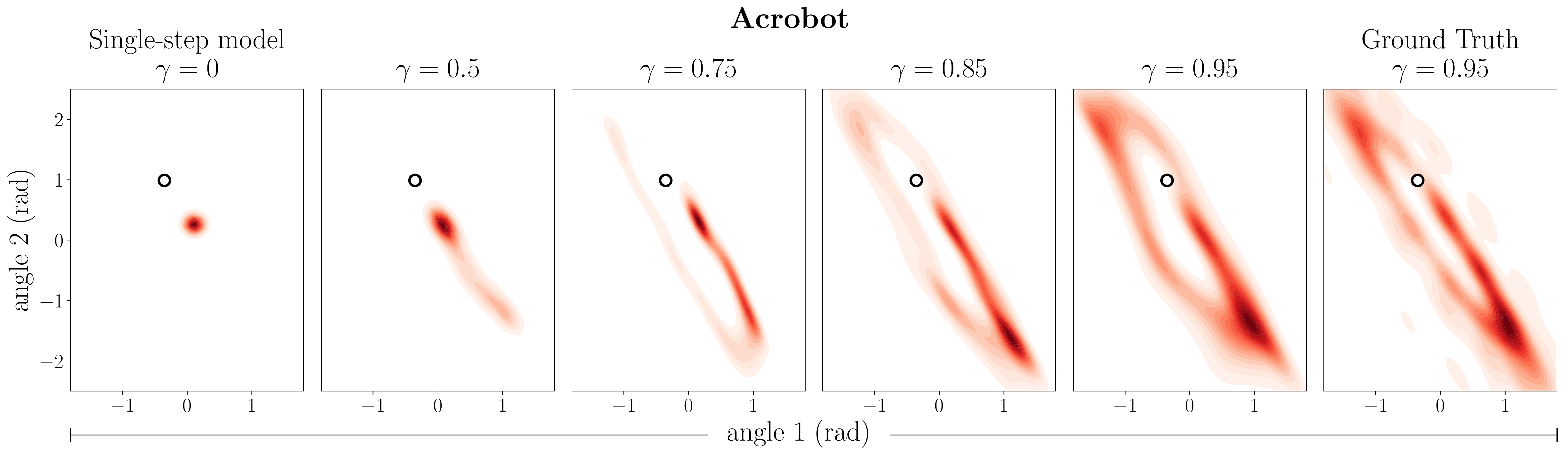} \\
    \vspace{0.2cm}
    \includegraphics[width=1.0\linewidth]{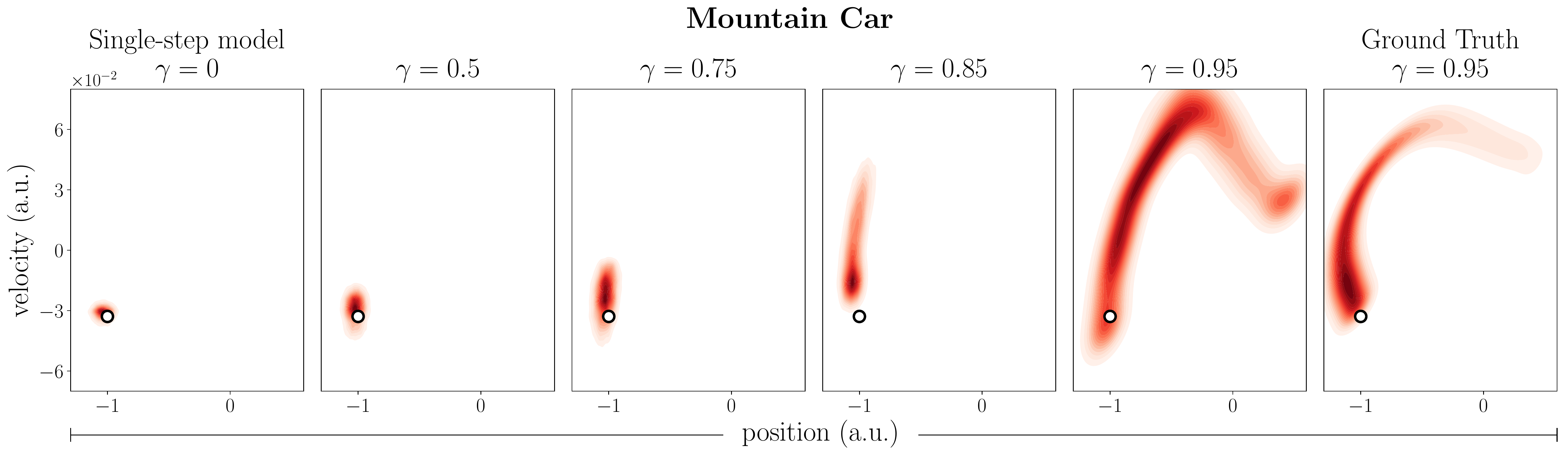}
    \caption{
    \textbf{(Adversarial $\boldgamma$-model predictions)}
    Visualization of the distribution from a single feedforward pass of $\gamma$-models trained as GANs according to Algorithm~\ref{alg:practical_samples}.
    GAN-based $\gamma$-models tend to be more unstable than normalizing flow $\gamma$-models, especially at higher discounts.
    }
    \label{fig:density_gan}
\end{figure}


\newpage
\chapter{Trajectory Transformer Details}
\renewcommand{\chaptertitle}{Appendix B: Trajectory Transformer}
\section{Model and Training Specification}
\label{sec:details}

\paragraph{Architecture and optimization details.}
In all environments, we use a Transformer architecture with four layers and four self-attention heads.
The total input vocabulary of the model is $V \times (N + M + 2)$ to account for states, actions, rewards, and rewards-to-go, but the output linear layer produces logits only over a vocabulary of size $V$; output tokens can be interpreted unambiguously because their offset is uniquely determined by that of the previous input.
The dimension of each token embedding is 128.
Dropout is applied at the end of each block with probability $0.1$.

We follow the learning rate scheduling of \citep{radford2018improving}, increasing linearly from 0 to $\num{2.5e-4}$ over the course of $2000$ updates.
We use a batch size of 256. 

\paragraph{Hardware.}
Model training took place on NVIDIA Tesla V100 GPUs (NCv3 instances on Microsoft Azure) for 80 epochs, taking approximately 6-12 hours (varying with dataset size) per model on one GPU.

\section{Discrete Oracle}
\label{app:oracle}
The discrete oracle in Figure~\ref{fig:model_error} is the maximum log-likelihood attainable by a model under the uniform discretization granularity.
For a single state dimension $i$, this maximum is achieved by a model that places all probability mass on the correct token, corresponding to a uniform distribution over an interval of size
\[
\frac{r_i - \ell_i}{V}.
\]
The total log-likelihood over the entire state is then given by:
\[
\sum_{i=1}^{N} \log\frac{V}{r_i - \ell_i}.
\]

\section{Baseline performance sources}
\label{app:baselines}

\paragraph{Offline reinforcement learning}
The results for CQL, IQL, and DT are from Table 1 in \citet{kostrikov2021implicit}.
The results for MBOP are from Table 1 in \citet{argenson2020model}.
The results for BRAC are from Table 2 in \cite{fu2020d4rl}.
The results for BC are from Table 1 in \citet{kumar2020conservative}.

\section{Datasets}
\label{app:data}
The D4RL dataset~\citep{fu2020d4rl} used in our experiments is under the Creative Commons Attribution 4.0 License (CC BY). The license information can be found at

\begin{centering}
    \url{https://github.com/rail-berkeley/d4rl/blob/master/README.md} \\
\end{centering}
\vspace{.2cm}

under the ``Licenses'' section. 

\section{Beam Search Hyperparameters}

\begin{center}
\def\arraystretch{1.35}
\begin{tabular}{|l|l|c|} 
\hline
\textbf{Beam width} & maximum number of hypotheses retained during beam search & 256 \\
\textbf{Planning horizon} & number of transitions predicted by the model during & 15 \\
\textbf{Vocabulary size} & number of bins used for autoregressive discretization & 100 \\
\textbf{Context size} & number of input $(\st, \at, \rt, R_t)$ transitions & 5 \\
$\bm{k_\textbf{obs}}$ & top-$k$ tokens from which observations are sampled & 1 \\
$\bm{k_\textbf{act}}$ & top-$k$ tokens from which actions & 20 \\
\hline
\end{tabular}
\end{center}

Beam width and context size are standard hyperparameters for decoding Transformer language models.
Planning horizon is a standard trajectory optimization hyperparameter.
The hyperparameters $k_\text{obs}$ and  $k_\text{act}$ indicate that actions are sampled from the most likely $20\%$ of action tokens and next observations are decoded greedily conditioned on previous observations and actions.

In many environments, the beam width and horizon may be reduced to speed up planning without affecting performance.
Examples of these configurations are provided in the reference implementation: \href{https://github.com/JannerM/trajectory-transformer}{\texttt{github.com/jannerm/trajectory-transformer}}.

\section{Goal-Reaching on Procedurally-Generated Maps}
\label{app:minigrid}

The method evaluated here and the experimental setup is identical to that described in Section 3.2 (Goal-conditioned reinforcement learning), with one distinction: because the map changes each episode, the Transformer model has an additional context embedding that is a function of the current observation image. This embedding is the output of a small convolutional neural network and is added to the token embeddings analogously to the treatment of position embeddings. The agent position and goal state are not included in the map; these are provided as input tokens as described in Section 3.2.

The action space of this environment is discrete. There are seven actions, but only four are required to complete the tasks: turning left, turning right, moving forward, and opening a door. The training data is a mixture of trajectories from a pre-trained goal-reaching policy and a uniform random policy.

94\% of testing goals are reached by the model on held-out maps. Example paths are shown in Figure~\ref{fig:minigrid}.

\begin{figure}
    \centering
    \includegraphics[width=0.25\linewidth]{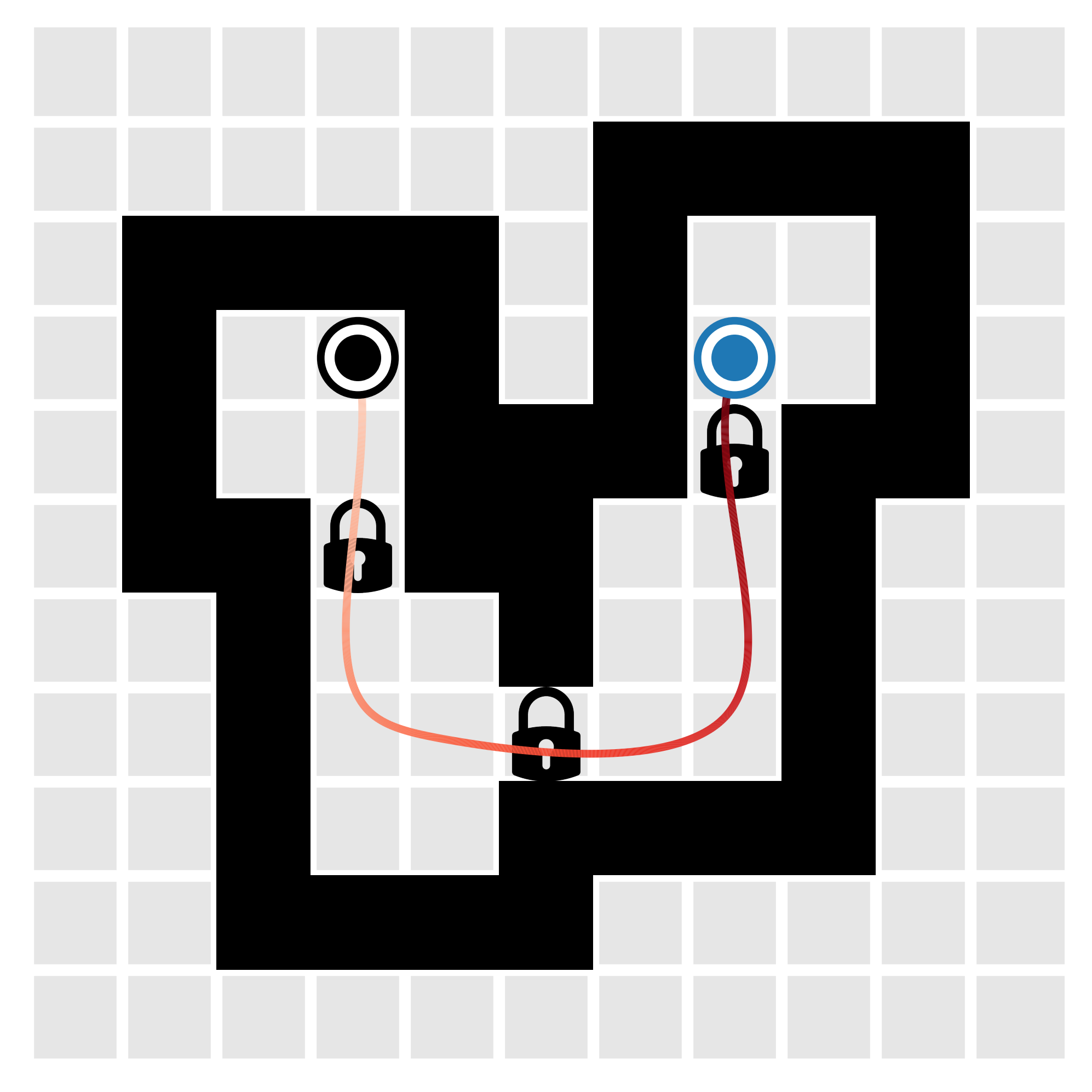}
    \includegraphics[width=0.25\linewidth]{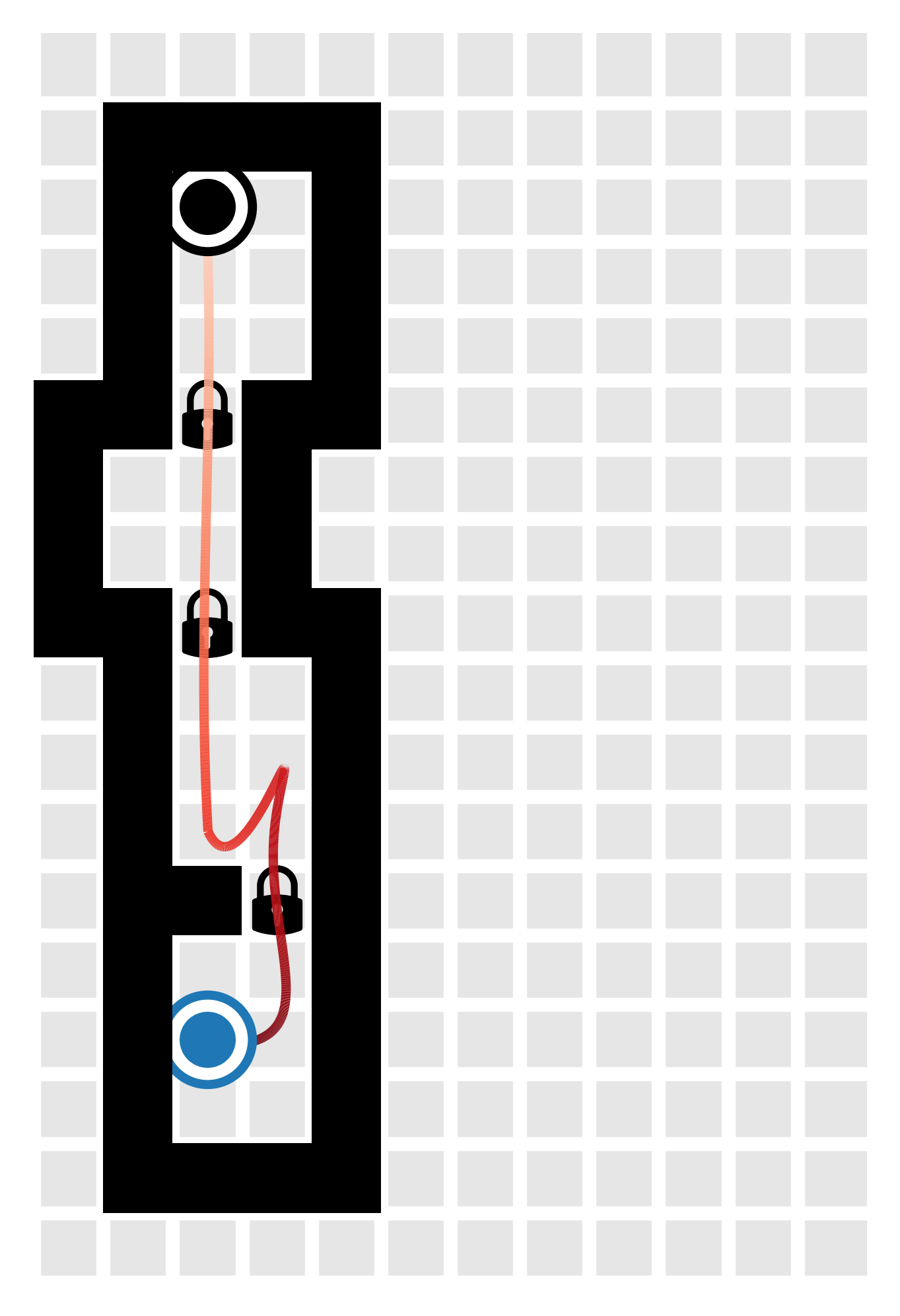}
    \includegraphics[width=0.25\linewidth]{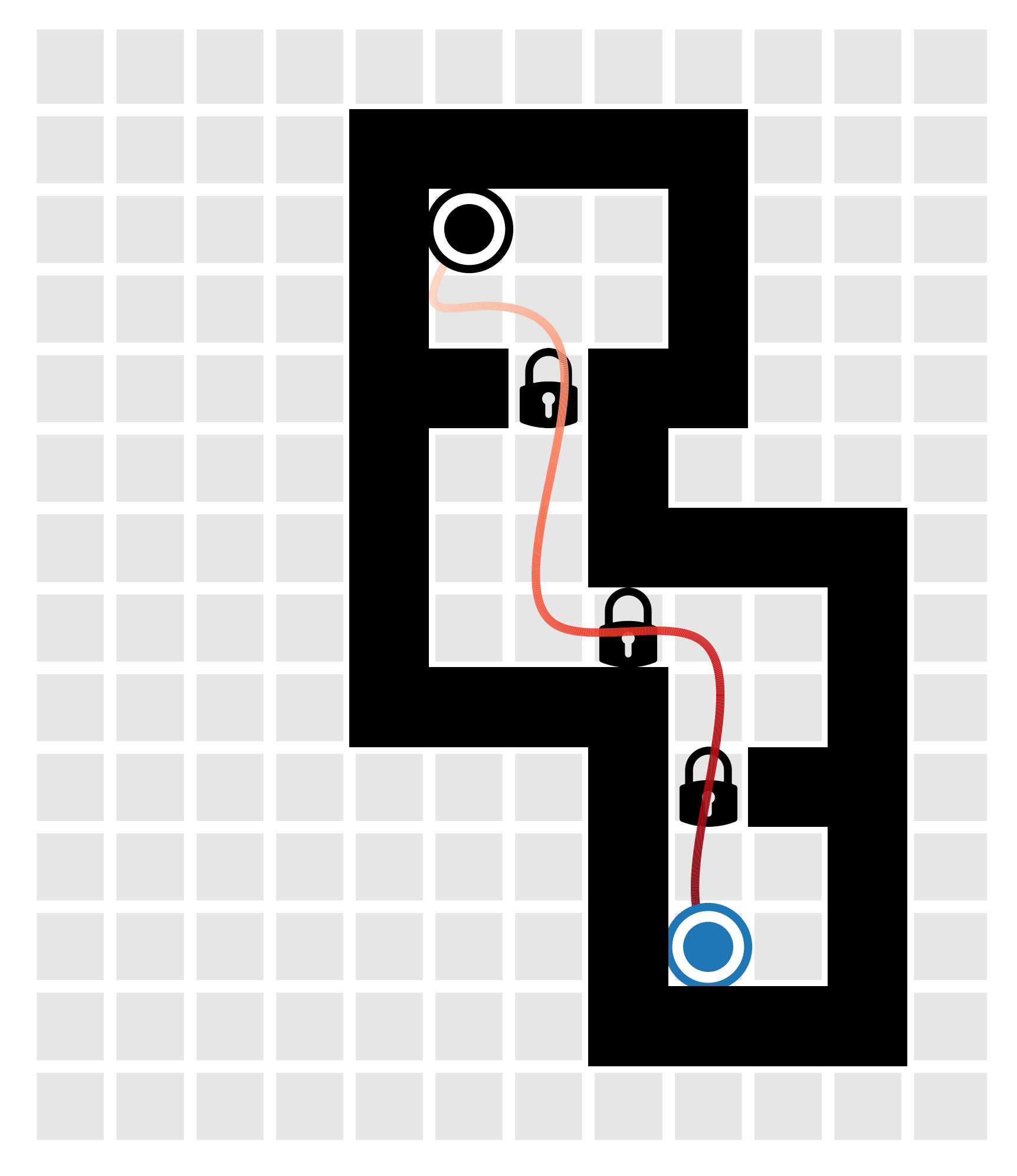} \\
    \includegraphics[width=0.25\linewidth]{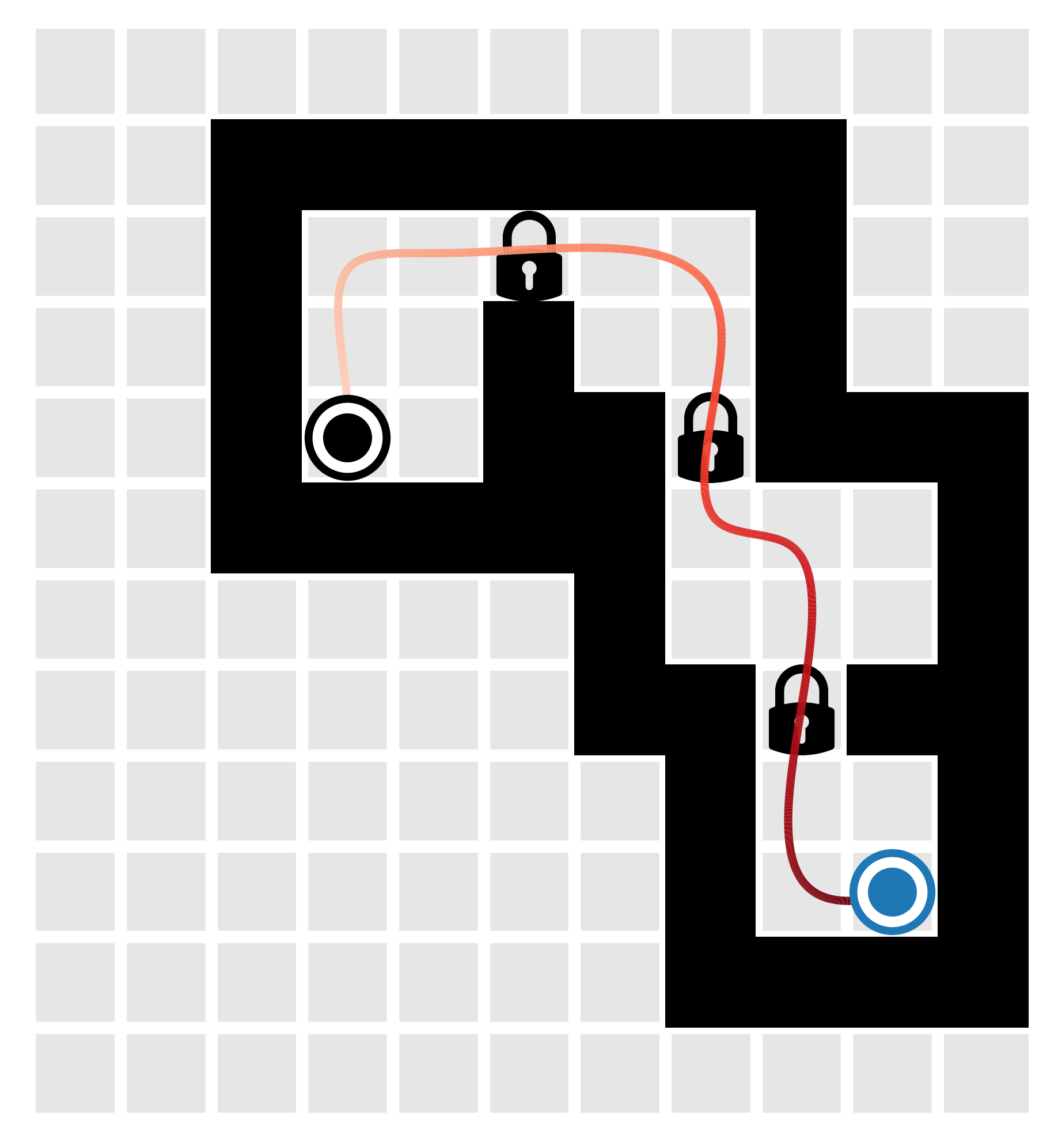}
    \includegraphics[width=0.25\linewidth]{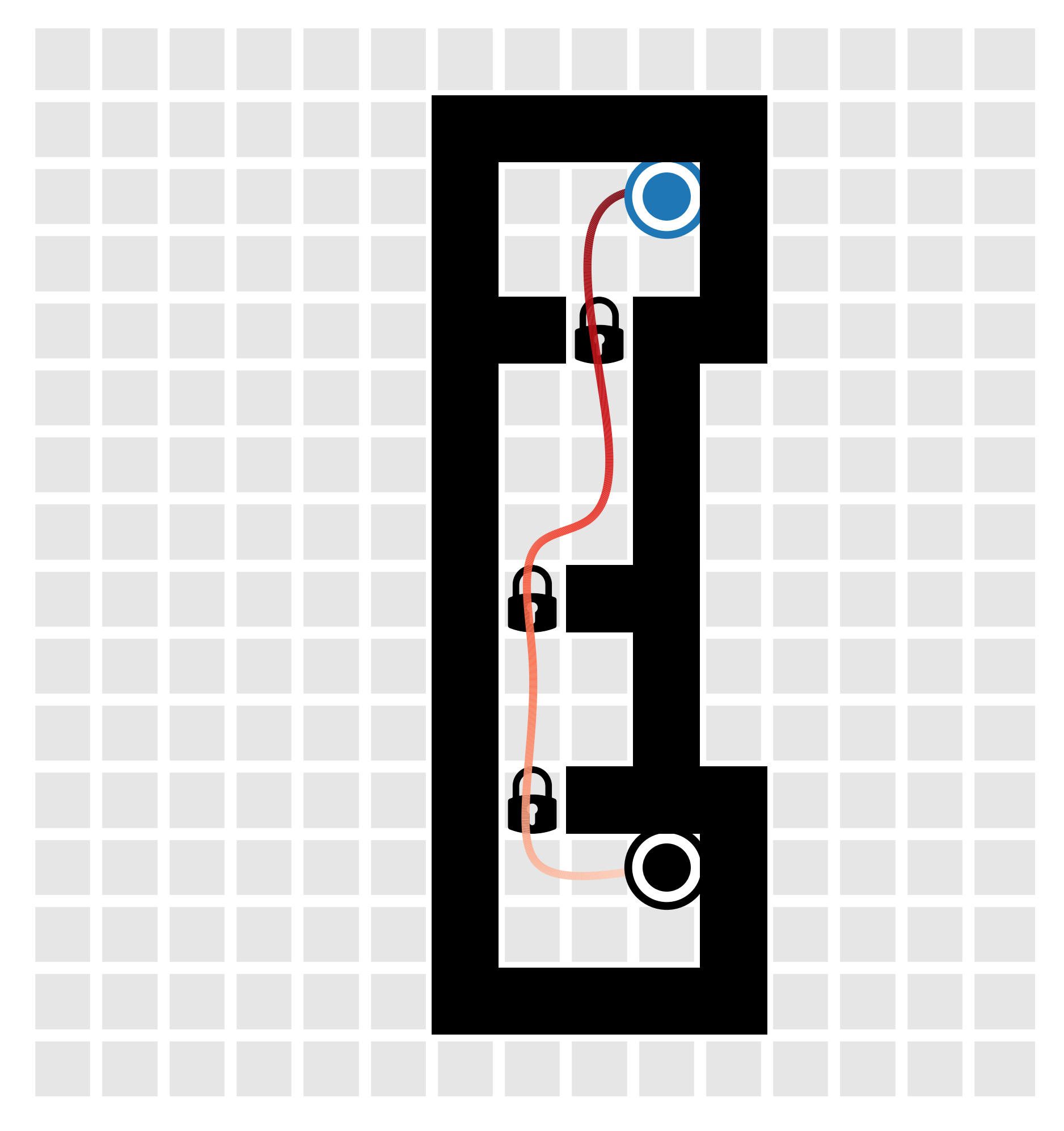}
    \includegraphics[width=0.25\linewidth]{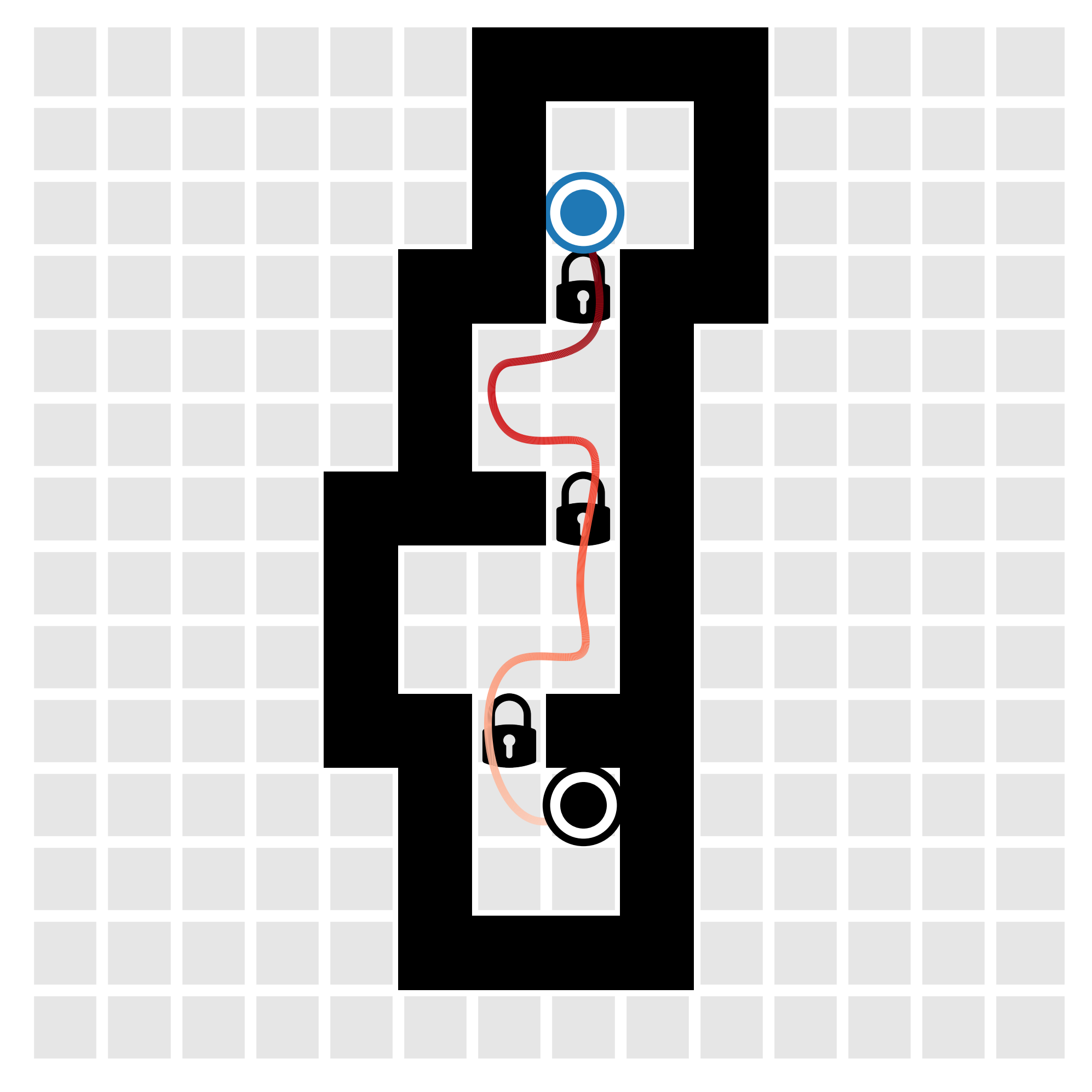} \\
    \includegraphics[width=0.25\linewidth]{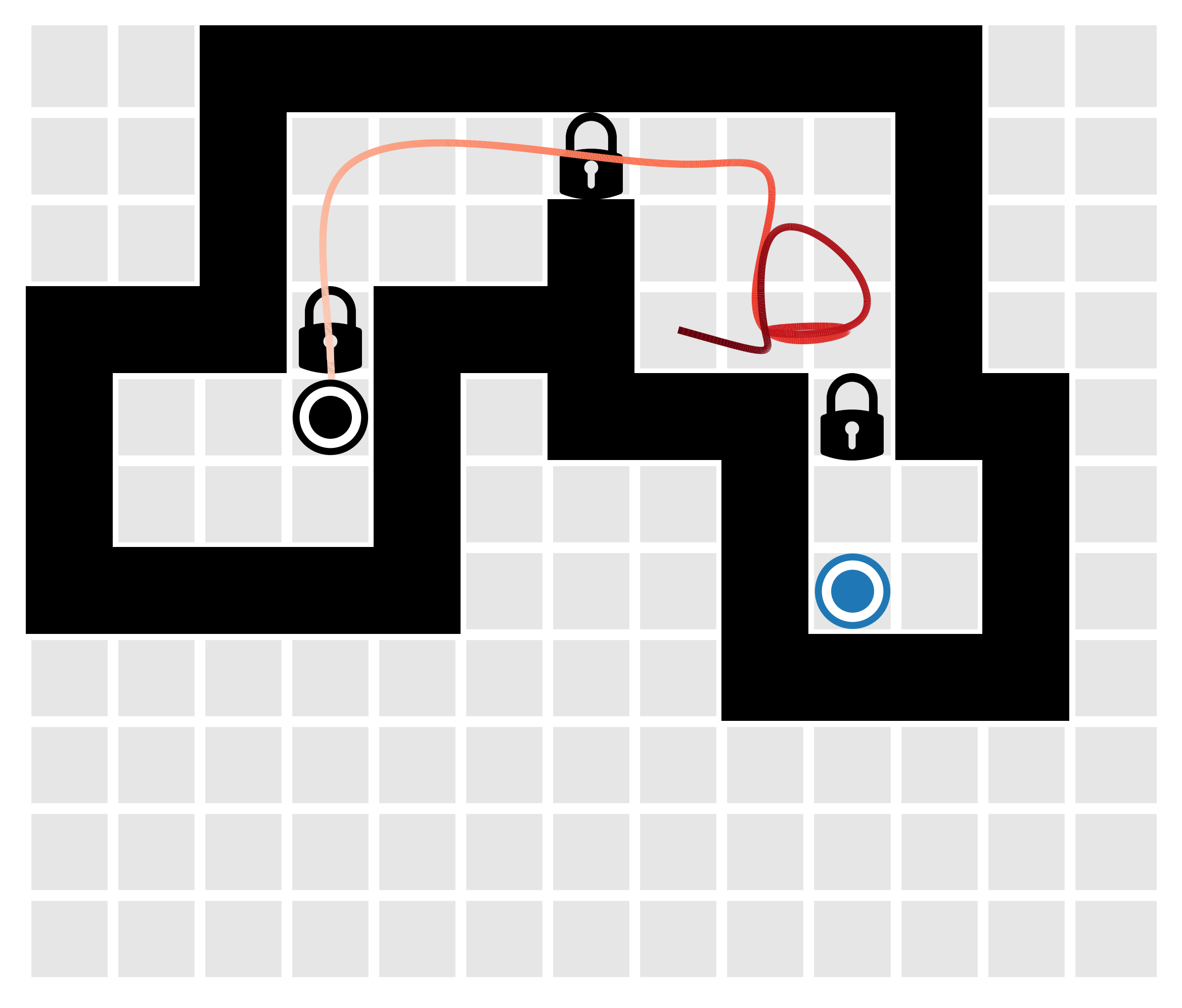}
    \includegraphics[width=0.25\linewidth]{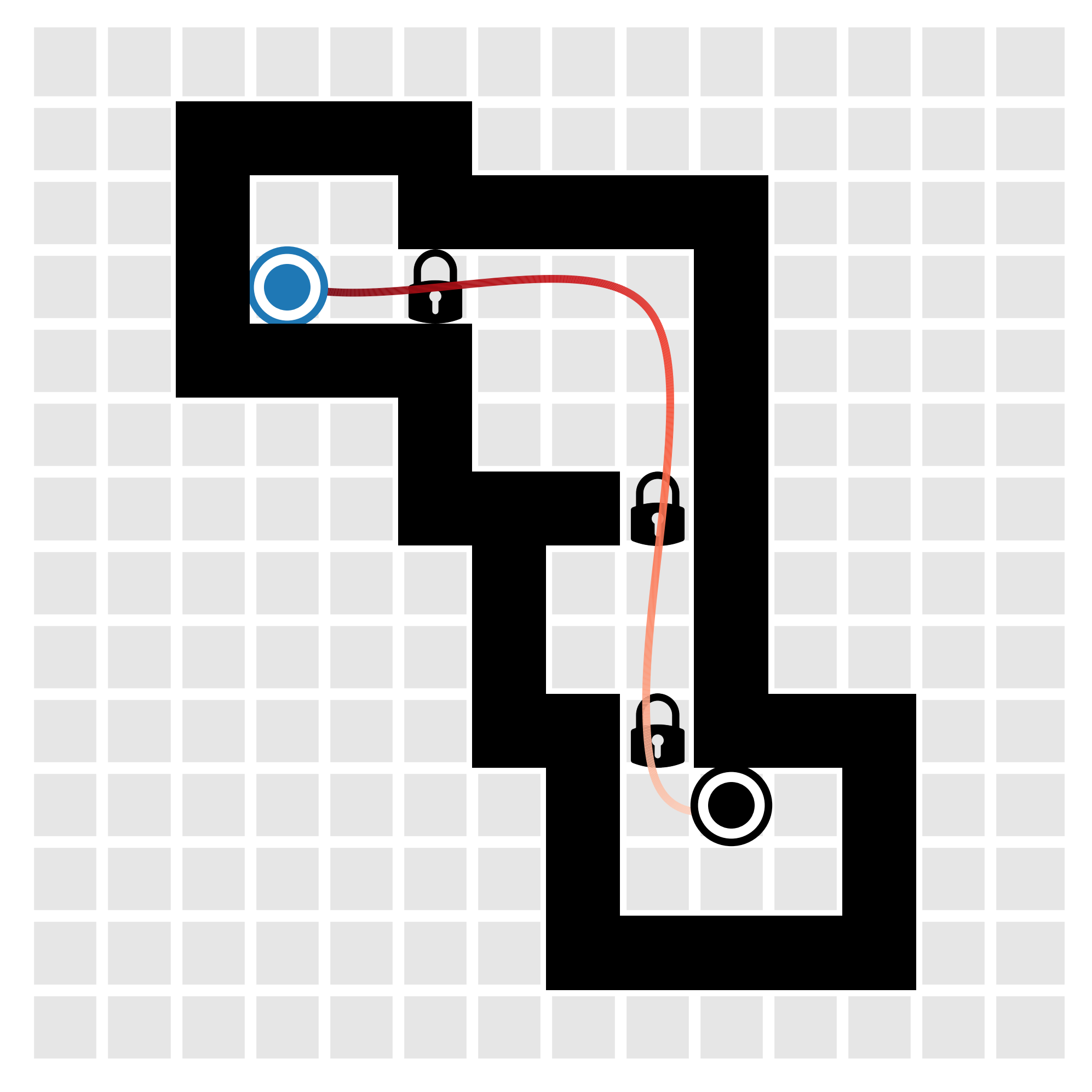}
    \includegraphics[width=0.25\linewidth]{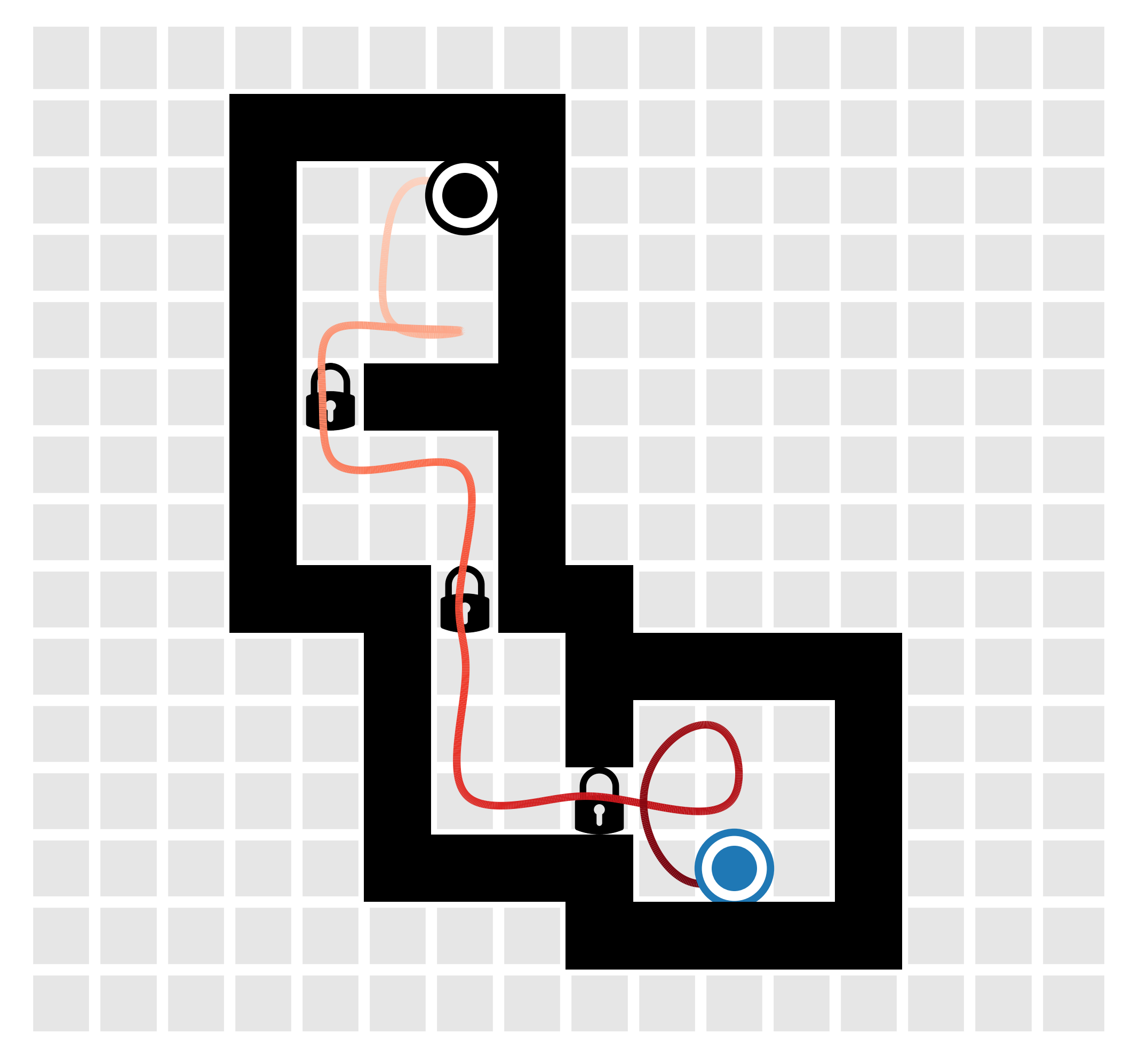} \\
    \caption{
        \textbf{(Minigrid rollouts)}
        Example paths of the Trajectory Transformer planner in the \texttt{MiniGrid-MultiRoom-N4-S5}. Lock symbols indicate doors.
    }
    \label{fig:minigrid}
\end{figure}


\newpage
\chapter{Diffuser Details}
\renewcommand{\chaptertitle}{Appendix C: Diffuser}
\section{Baseline details and sources}
\label{app:baselines}
\label{app:multitask}

In this section, we provide details about baselines we ran ourselves.
For scores of baselines previously evaluated on standardized tasks, we provide the source of the listed score.

\subsection{Maze2D experiments}

\paragraph{Single-task.}

The performance of CQL and IQL on the standard Maze2D environments is reported in the D4RL whitepaper \citep{fu2020d4rl} in Table~2.

We ran IQL using the offical implementation from the authors: 
\begin{center}
{
    \small
    \href{https://github.com/ikostrikov/implicit_q_learning}
    {\texttt{github.com/ikostrikov/implicit\_q\_learning}}.
}
\end{center}

We tuned over two hyperparameters:
\begin{enumerate}
    \item temperature $\in [3, 10]$
    \item expectile $\in [0.65, 0.95]$
\end{enumerate}

\paragraph{Multi-task.}

We only evaluated IQL on the Multi2D environments because it is the strongest baseline in the single-task Maze2D environments by a sizeable margin.
To adapt IQL to the multi-task setting, we modified the $Q$-functions, value function, and policy to be goal-conditioned.
To select goals during training, we employed a strategy based on hindsight experience replay, in which we sampled a goal from among those states encountered in the future of a trajectory.
For a training backup $(\st, \at, \stp)$, we sampled goals according to a geometric distribution over the future
\[
\Delta \sim \text{Geom}(1-\gamma) ~~~~~~~~ \mathbf{g} = \bs_{t + \Delta},
\]
recalculated rewards based on the sampled goal, and conditioned all relevant models on the goal during updating.
During testing, we conditioned the policy on the ground-truth goal.

We tuned over the same IQL parameters as in the single-task setting.

\subsection{Block stacking experiments}

\paragraph{Single-task.}

We ran CQL using the following implementation
\begin{center}
{
    \small
    \href{https://github.com/young-geng/cql}
    {\texttt{https://github.com/young-geng/cql}}.
}
\end{center}
and used default hyperparameters in the code.
We ran BCQ using the author's original implementation
\begin{center}
{
    \small
    \href{https://github.com/sfujim/BCQ}
    {\texttt{https://github.com/sfujim/BCQ}}.
}
\end{center}

For BCQ, we tuned over two hyperparameters:
\begin{enumerate}
    \item discount factor $\in [0.9, 0.999]$
    \item tau $\in [0.001, 0.01]$
\end{enumerate}

\paragraph{Multi-task.}

To evaluate BCQ and  CQL in the multi-task setting, we modified the $Q$-functions, value function and policy to be goal-conditioned.
We trained using goal relabeling as in the Multi2D environments.
We tuned over the same hyperparameters described in the single-task block stacking experiments.

\subsection{Offline Locomotion}
\label{app:d4rl_sources}

The scores for BC, CQL, IQL, and AWAC are from Table~1 in \citet{kostrikov2021implicit}.
The scores for DT are from Table~2 in \citet{chen2021decision}.
The scores for TT are from Table~1 in \citet{janner2021sequence}.
The scores for MOReL are from Table~2 in \citet{kidambi2020morel}.
The scores for MBOP are from Table~1 in \citet{argenson2020model}.

\begin{figure}[t]
\centering
\includegraphics[width=\columnwidth]{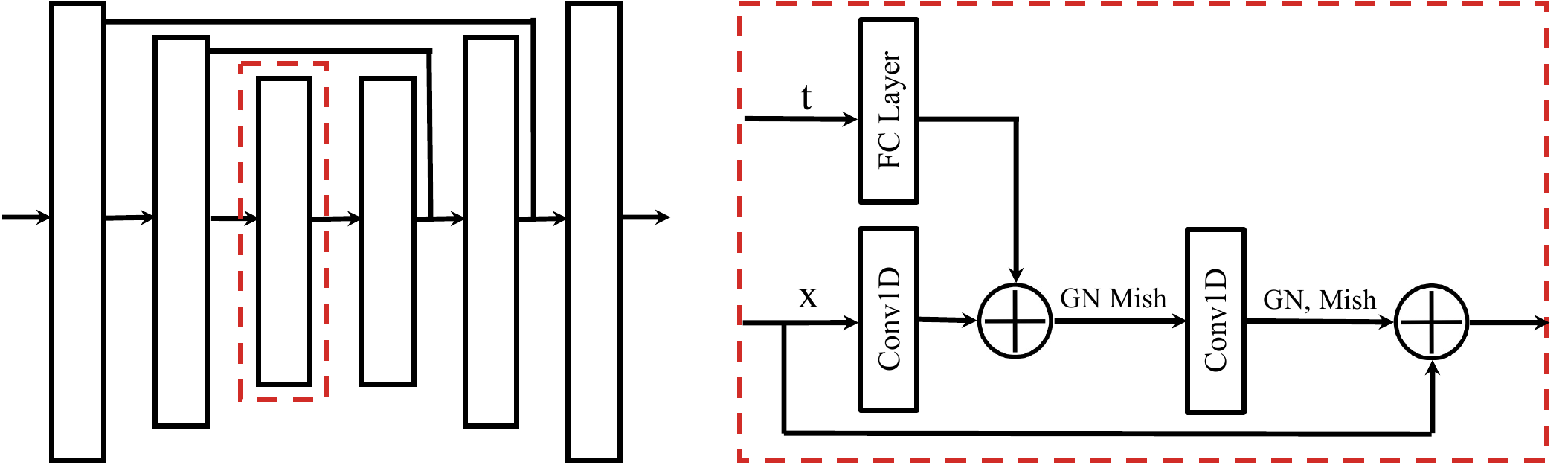}
\caption{
    \textbf{(Diffuser U-Net architecture)}
    \method has a U-Net architecture with residual blocks consisting of temporal convolutions, group normalization, and Mish nonlinearities.
}
\label{fig:app_architecture}
\end{figure}

\section{Test-time Flexibility}
\label{app:stacking_costs}

To guide \method{} to stack blocks in specified configurations, we used two separate perturbation functions $h(\btau{})$ to specify a given stack of block A on top of block B, which we detail below.

\myparagraph{Final State Matching} To enforce a final state consisting of block A on top of block B, we trained a perturbation function $h_{\text{match}}(\btau{})$ as a per-timestep classifier determining whether a a state $\vs$ exhibits a stack of block A on top of block B. We train the classifier on the demonstration data as the diffusion model.

\myparagraph{Contact Constraint} To guide the Kuka arm to stack block A on top of block B, we construct a perturbation function $h_{\text{contact}}(\btau{}) = \sum_{i=0}^{64} -1 * \|\btau{}_{c_i} - 1\|^2$, where $\btau{}_{c_i}$ corresponds to the underlying dimension in state $\btau{}_{s_i}$ that specifies the presence or absence of contact between the Kuka arm and block A.  We apply the contact constraint between the Kuka arm and block A for the first 64 timesteps in a trajectory, corresponding to initial contact with block A in a plan.

\section{Implementation Details}

In this section we describe the architecture and record hyperparameters.

\begin{enumerate}
    \item The architecture of Diffuser (Figure~\ref{fig:app_architecture}) consists of a U-Net structure with 6 repeated residual blocks. Each block consisted of two temporal convolutions, each followed by group norm \citep{wu2018groupnorm}, and a final Mish nonlinearity \citep{misra2019mish}.
    Timestep embeddings are produced by a single fully-connected layer and added to the activations of the first temporal convolution within each block.
    \item We train the model using the Adam optimizer \citep{ba2015adam} with a learning rate of $4\mathrm{e}{-05}$ and batch size of $32$.
    We train the models for 500k steps.
    \item The return predictor $\mathcal{J}$ has the structure of the first half of the U-Net used for the diffusion model, with a final linear layer to produce a scalar output.
    \item We use a planning horizon $T$ of 32 in all locomotion tasks, $128$ for block-stacking, $128$ in \texttt{Maze2D / Multi2D U-Maze}, 265 in \texttt{Maze2D / Multi2D Medium}, and 384 in \texttt{Maze2D / Multi2D Large}.
    \item We found that we could reduce the planning horizon for many tasks, but that the guide scale would need to be lowered (\emph{e.g.}, to 0.001 for a horizon of $4$ in the \texttt{halfcheetah} tasks) to accommodate.
    The \href{https://github.com/jannerm/diffuser/blob/34d0e93296c6d8649187e6790ee41cf0c59e3631/config/locomotion.py#L163-L178}{configuration file} in the open-source code demonstrates how to run with a modified scale and horizon.
    \item We use $N=20$ diffusion steps for locomotion tasks and $N=100$ for block-stacking.
    \item We use a guide scale of $\alpha=0.1$ for all tasks except \texttt{hopper-medium-expert}, in which we use a smaller scale of $0.0001$.
    \item We used a discount factor of $0.997$ for the return prediction $\mathcal{J}_\phi$, though found that above $\gamma=0.99$ planning was fairly insensitive to changes in discount factor.
    \item We found that control performance was not substantially affected by the choice of predicting noise $\epsilon$ versus uncorrupted data $\btau{0}$ with the diffusion model.
\end{enumerate}

\newpage
\let\clearpage\relax


\end{appendices}

\end{document}